\newcommand\cb{\color{blue}}
\newcommand\ck{\color{black}}
\renewcommand{\ALG@name}{Algorithm}
\pgfplotsset{width=10cm}
\tikzset{declare function={gamma(\x)=sqrt(2*pi)*\x^(\x-0.5)*exp(-\x)*exp(1/(12*\x));}}
\tikzset{declare function={tpdf(\x,\nu)=gamma(0.5*(\nu+1))/(sqrt(pi*\nu)*gamma(\nu/2))*(1+\x^2/\nu)^(-(\nu+1)/2);}}
\tikzset{declare function={invgampdf(\x,\a,\b)=(\b/\x)^\a/\x/gamma(\a)*exp(-\b/\x);}}
\newcommand{\nhphantom}[1]{\ifmmode\settowidth{\dimen0}{$#1$}\else\settowidth{\dimen0}{#1}\fi\hspace*{-\dimen0}}
\tikzset{
	hatch distance/.store in=\hatchdistance,
	hatch distance=5pt,
	hatch thickness/.store in=\hatchthickness,
	hatch thickness=0.5pt,
}
\newcommand{\wh}[1]{\widehat{#1}}
\definecolor{pink}{rgb}{0.9, 0.17, 0.31}
\def\C {\,|\:}
\newcommand\E{\mathbb E}
\newcommand\R{\mathbb R}
\renewcommand\b{\bm{\beta}}
\newcommand{\wt}[1]{\widetilde{#1}}
\newtheorem{assumption}{Assumption}
\newtheorem{lemma}{Lemma}
\renewcommand{\nhphantom}[1]{\ifmmode\settowidth{\dimen0}{$#1$}\else\settowidth{\dimen0}{#1}\fi\hspace*{-\dimen0}}
\numberwithin{equation}{section}
\newtheorem{thm}{Theorem}[section]
\newtheorem{exa}{Example}
\newtheorem{remark}{Remark}
\crefname{thm}{Theorem}{Theorems}
\crefname{prop}{Proposition}{Propositions}
\crefname{lem}{Lemma}{Lemmas}
\crefname{coro}{Corollary}{Corollaries}
\crefname{add}{Addendum}{Addendums}
\crefname{asm}{Assumption}{Assumptions}
\crefname{alg}{Algorithm}{Algorithms}
\crefname{proc}{Procedure}{Procedures}
\crefname{exe}{Exercise}{Exercises}
\crefname{exa}{Example}{Examples}
\crefname{prob}{Problem}{Problems}
\crefname{section}{Section}{Sections}
\crefname{subsection}{Section}{Sections}
\crefname{appendix}{Appendix}{Appendices}
\def\argmax{\mathop{\arg\max}}
\def\argmin{\mathop{\arg\min}}
\begin{document}

\def\spacingset#1{\renewcommand{\baselinestretch}%
{#1}\small\normalsize} \spacingset{1.1}

	\title{\sf  Tree Bandits for Generative Bayes}

\author{Sean O'Hagan, Jungeum Kim, and Veronika Ro\v{c}kov\'{a}}


	\maketitle

\bigskip
\begin{abstract}
In generative models with  obscured  likelihood, Approximate Bayesian Computation (ABC) is often the tool of last resort for inference. 
However, ABC  demands many prior parameter trials to keep only a small fraction that passes an acceptance test.
To accelerate ABC rejection sampling, this paper develops a self-aware framework that learns from past trials and errors. We apply recursive partitioning classifiers on the ABC lookup table  to sequentially refine high-likelihood regions into boxes. Each box is regarded as an arm in a binary bandit problem treating ABC acceptance as a reward. Each arm has a proclivity for being chosen for the next ABC evaluation, depending on the prior distribution and past rejections. The method places more splits in those areas where the likelihood resides, shying away from low-probability regions destined for ABC rejections. We provide two versions: (1) ABC-Tree for posterior sampling, and (2) ABC-MAP for maximum a posteriori estimation. We demonstrate accurate ABC approximability at much lower simulation cost. We justify the use of  our tree-based bandit algorithms with nearly optimal regret bounds.
{Finally, we  successfully apply our approach to the problem of masked image classification using deep generative models.}
\end{abstract}

\noindent%
{\bf Keywords:} ABC, Recursive Partitioning, Thompson Sampling, Likelihood-free MAP

\spacingset{1.45} 

\section{Introduction}
Many posterior sampling strategies (such as the Metropolis-Hasting algorithm or Approximate Bayesian Computation (ABC)) randomly propose a parameter value (trial) and then reject it (error) if not supported by data. This accept/reject mechanism may require exceedingly many trials to obtain only a few accepted posterior samples. These shortcomings can be overcome by allowing  Bayesian algorithms to learn from their trials and errors.

In this work, we develop active learning versions of ABC for posterior sampling, as well well as maximum a-posteriori (MAP) estimation,  focusing on {\em generative} (or simulation-based) models, where  users can access model likelihood only indirectly through simulation. These models are commonly employed in population ecology \citep{beaumont2010approximate}, astrophysics \citep{schafer2012likelihood}, and finance \citep{hore2010bayesian}, among other fields. ABC starts off by  simulating a reference table consisting of pairs of parameters and fake data sets simulated from the prior and the likelihood, respectively.
 The default ABC rejection sampling  then proceeds by filtering the reference table where the only kept pairs are those for which the   observed and fake data  (or summary statistics thereof) are sufficiently close.   The accepted parameters form a set of independent samples from an approximate posterior, and thus provide means for posterior inference in the absence of a tractable likelihood function. However, ABC can be very inefficient even in simple problems \citep{meeds2014gps}. Rejection sampling algorithms require a large number of simulations where most of the computational energy will be wasted on simulating implausible parameters only to discard them during the acceptance test. This has motivated query-efficient ABC techniques which intelligently decide where to propose next \citep{marjoram2003markov, sisson2007sequential, beaumont2009adaptive, gutmann2016bayesian, jarvenpaa2018gaussian}.  

Algorithms based on MCMC and sequential MCMC have been used to improve the efficiency of ABC as compared to the basic rejection sampler.
For example, \citet{marjoram2003markov} developed a variant of the classical Metropolis-Hastings algorithm that proposes new ABC samples through perturbation of the most recent accepted state. \citet{sisson2007sequential} proposed a sequential Monte Carlo ABC (SMC-ABC) variant by providing better proposals based on past iterations. \citet{beaumont2009adaptive} introduce a population Monte Carlo correction to the algorithm based on  importance sampling arguments. These algorithms have enjoyed widespread usage and can accurately reconstruct posteriors in simulator-based models with much fewer simulations required when compared to rejection ABC, which always proposes parameters from the prior. \citet{alsing2018optimal} suggest applying an optimality criterion to SMC-ABC by computing a particle-based estimate of the posterior and subsequently computing the optimal proposal density. However, sampling from this optimal density is nontrivial and requires the use of MCMC or other techniques.

Bayesian optimization and surrogate models (Gaussian processes) have been proposed for active learning from past unsuccessful ABC attempts
 \citep{gutmann2016bayesian,jarvenpaa2018gaussian}.
For example, \citet{wilkinson2014accelerating} modeled the uncertainty in the likelihood to rule out regions with negligible posterior. 
More recently, \citet{jarvenpaa2019efficient} propose an uncertainty measure for the ABC posterior (due to the lack of simulations to estimate ABC accurately) and define a loss function that gauges such uncertainty. 
The premise of such active learning methods is constructing a utility function based on the posterior conditioned on the queries so far and then maximize the utility to determine where to draw next. However, maximizing the typically multimodal utility can be a hard problem and this approach will only be beneficial when determining the next experiment is not as costly. \citet{kandasamy2017query} propose another query efficient method for estimating posteriors when the likelihood is expensive to evaluate. 

One way or another, building query functions opens up  tremendous  possibilities for computational gains  by retaining (rejected) draws and by using regression techniques to build  proposal distributions. We do exploit flexible regression techniques here as well. One of the key innovation in ABC has been to use a post-sampling regression adjustment \citep{beaumont2002approximate}, allowing larger rejection tolerance values and thereby shrinking computation time to realistic orders of magnitude. Here, we use flexible regression techniques for efficient proposals as opposed to post-sampling adjustment (debiasing). 
Conditional density estimation techniques have been proposed to approximate the likelihood in the ABC setting. \citet{fan2013approximate} construct an approximation based on samples $\theta$ (not necessarily from the prior) and the corresponding summaries simulated form the intractable likelihood. The concept of approximating intractable likelihood using simulated data goes back to \citet{diggle1984monte}. \citet{lueckmann2019likelihood} propose ABC which uses probabilistic neural emulator networks to learn synthetic likelihoods on simulated data. Simulations are chosen adaptively using an acquisition function which takes into account uncertainty about either the posterior distribution of interest or the parameters of the emulator.

Our approach is more directly connected to rejection ABC rather than the more recent sequential neural likelihood approaches  \citep{lueckmann2019likelihood, papamakarios2019sequential}. We focus on improving the efficacy of ABC rejection sampling which is still widely used and very easy to implement by even the least computationally savvy practitioners. We view the proposal of the subsequent parameter value at which to collect new data as an active (reinforcement) learning problem, where ABC acceptance is regarded as a binary bandit reward. Being able to learn from past mistakes is the key to steering the search towards high-posterior regions where ABC is more likely to accept.
We consider two major inferential objectives: (1) building a fast approximation to ABC posterior which requires much smaller reference tables, (2) building a stochastic optimization method for rapid \textit{maximum a-posteriori} estimation. 
Both  our approaches (for sampling and MAP estimation) rest on recursive partitioning by fitting a classification tree (or a forest) to ABC acceptances and rejections in order to refine the proposal for the next step. 
Each tree partitions the parameter space into boxes and we regard each box as an arm in a binary bandit problem. This framework allows us to be selective about which of the boxes we choose next, prioritizing those which have provided more rewards in the past. Our algorithm has two loops: (1) an outer loop refines the partition, and (2) an inner loop refines the ABC proposal for a given partition by playing a bandit game.

Modeling the ABC proposal (or the ABC posterior) via  recursive partitioning of the parameter space into hyper-rectangles  carries some advantages when compared to other similar approaches like Gaussian processes. In particular, the resulting model for the ABC acceptance rate may be tractably normalized. This property is highly amenable to sequentially updating of the ABC proposal, making it tractable even in parameter spaces of reasonably large dimension. {Trees have also been successful at modeling functions with varying local smoothness \citep{van2017bayesian}, and can scale well in higher dimensions when compared to Gaussian processes or kernel density estimation.
If not for ABC posterior approximation, our method can be applied to generate high-quality training data for generative Bayes algorithms \citep{wang2022adversarial, polson2023generative, li2020deep} that contain observations that are very similar to the observed data. 

{The paper is structured as follows. In Section \ref{sec:abc}, we briefly review ABC and reinforcement learning in the ABC context and provide a high-level description of our approach. The ABC-Tree is introduced in Section \ref{sec:abctree}, and the MAP-Tree in Section \ref{sec:MAPTREE}. The numerical study in Section \ref{sec:experiemnt} demonstrates the efficiency of our methods on both simulation and real datasets. The paper concludes with Section \ref{sec:concluding}.} 

\vspace{-0.5cm}
\section{Approximate Bayesian Computation}\label{sec:abc}

We assume that the observed data $\bm X=(X_1,\dots,X_n)'\in\mathcal X$ arrived from a simulator-based model governed by parameters $\bm\theta_0\in\Theta\subset \R^d$. Such a model may be provided by a probabilistic program that   implicitly defines a likelihood $p_{\bm\theta}(\bm X)$ which  we cannot evaluate. We have access to the likelihood only through simulation $\bm\theta\mapsto \widetilde {\bm X}_{\bm\theta}$, where $\widetilde {\bm X}_{\bm\theta}$ has a density $p_{\bm\theta}(\cdot)$. Prior beliefs about $\bm\theta_0$ are expressed through the prior $\pi(\boldsymbol\theta)$. For a given choice of an acceptance threshold $\varepsilon>0$, a summary statistic $s:\mathcal{X}\to\mathcal{S}$, and a distance metric $d:\mathcal{S}\times\mathcal{S}\to\mathbb{R}_{\geq 0}$, we define the $\varepsilon$-likelihood as the convolution of the intractable likelihood and the acceptance kernel
\begin{equation}\label{eq:conv_likelihood}
L_\varepsilon(\bm X\mid\bm\theta) \propto \int_\mathcal{X} \mathbb{I}\{d(s(\wt{\bm{X}}_{{\bm\theta}}),s(\bm X))<\varepsilon\}p_{\bm\theta}(\wt{\bm{X}}_{{\bm\theta}})\,\mathrm{d}\wt{\bm{X}}_{{\bm\theta}}\,.
\end{equation}
With such consideration, 
we define the $\varepsilon$-posterior  to be  proportional to the product of the prior and the $\varepsilon$-likelihood,
\begin{equation}\label{eq:abc_post}
p_\varepsilon(\bm\theta\mid\bm{X})\propto \pi(\bm\theta) L_\varepsilon(\bm{X}\mid\bm\theta)\,.
\end{equation}

Standard ABC involves picking a candidate parameter value $\wt{\bm\theta}\sim\pi(\boldsymbol\theta)$ and simulating data $\wt {\bm X}_{\wt {\bm\theta}}$. The prior guess $\wt{\bm\theta}$ is rewarded with acceptance when  $d(s(\wt {\bm X}_{\wt{\bm\theta}}),s(\bm X))<\varepsilon$. Repeating this procedure until $B$ parameter values have been accepted, one obtains a sample $\{\wt{\boldsymbol\theta}_i\}_{i=1}^B$. 
If $s$ is a sufficient statistic for $\boldsymbol\theta$, then $\{\wt{\boldsymbol\theta}_i\}_{i=1}^B$ is a set of i.i.d. samples from the true posterior in the limit as $\varepsilon\rightarrow 0$.


In the absence of subjective prior information, the vast majority of parameter guesses $\wt{\bm\theta}$ will be implausible and have a negligible probability of being accepted. For this reason, it is common practice  \citep{sisson2007sequential, beaumont2009adaptive} in ABC to draw from a proposal distribution $q(\boldsymbol\theta)$ instead of the prior $\pi(\boldsymbol\theta)$, which ideally has more mass on plausible regions of the parameter space. After collecting samples, one can use importance sampling to correct for the fact that   $q$ has been used  instead of the prior, weighting each accepted $\wt{\boldsymbol\theta}$ by the density ratio $\pi(\wt{\boldsymbol\theta})/q(\wt{\boldsymbol\theta})$. Using an informative proposal distribution helps  increase the ABC acceptance rate, but introduces a trade-off in the importance weight variance, affecting the effective sample size.\footnote{Intuitively, proposing from a delta measure at the maximum likelihood parameter value would result in the highest ABC acceptance rate (see, \eqref{eq:conv_likelihood}), but would never yield a sample from the posterior distribution.} 
Therefore, it is important to control such a trade-off through some optimality criteria. In sequential Monte Carlo methods \citep{sisson2007sequential, beaumont2009adaptive}, discussions of the optimality of proposals often involve choosing the bandwidth of a Gaussian kernel to perturb parameter values from the previous population. The $\mathrm{KL}$ divergence between the proposal density and the target is adopted by \citet{beaumont2009adaptive} to choose the kernel bandwidth in a Gaussian KDE proposal for sequential Monte Carlo. This is later refined in \citet{filippi2013optimality} who discussed maximization of the sum of the log acceptance ratio and the negative $\mathrm{KL}$ divergence between the proposal and target. \citet{alsing2018optimal} alternatively define an optimal proposal density for ABC in terms of the \emph{effective acceptance rate}. 

\vspace{-0.5cm}
\subsection{Reinforcement Learning for ABC}

Reinforcement learning (RL) is a branch of artificial intelligence that studies what actions to take so as to minimize regret in uncertain environments. Here, we investigate reinforcement learning strategies that sequentially process the ABC reference table and zoom onto those areas of the parameters space where the prior parameter guesses are more likely to be accepted.

In a similar vein, sequential ABC sampling strategies have been proposed where, at each iteration, one decides the next evaluation location. For instance, \citep{jarvenpaa2019efficient} {model the uncertainty in the ABC likelihood in \eqref{eq:abc_post} by $L_\varepsilon(\bm X\mid\bm\theta)\propto\Phi\left(\frac{\varepsilon-f(\bm\theta)}{\sigma}\right)$, where $f$ is a Gaussian process regression function that models the distance metric $d(s(\wt{\bm{X}}_{\bm \theta}),s(\bm X))$}. They recompute the uncertainty in the ABC likelihood after each iteration and fit a model that predicts the next iteration output, given all data available at the time. The aim is to  choose the next evaluation location such that the expected loss, after having simulated the model at this location, is minimized.  Their approach is a one-step ahead Bayes optimal solution to a decision problem of minimizing the expected uncertainty. Similarly, the entropy search of \citet{hennig2012entropy} is designed for query-efficient global optimization and aims to find a parameter value that maximizes the objective function and to minimize the uncertainty related to this maximizer. 
\citet{lueckmann2019likelihood} use active learning to selectively acquire new samples using  local and global emulators.  

Query-efficient ABC methods will only be beneficial when evaluating the query function (after each ABC step) is not too prohibitive.
Our approach is conceptually simple and computationally feasible. We focus on recursive partitioning schemes where the parameter space will be chopped into boxes depending on their probability of yielding a parameter value accepted by ABC. In direct analogy with pulling arms in a bandit problem, for generating ABC parameter guesses we prioritize boxes that have yielded fewer rejections in the past. In what follows, we survey Thompson Sampling (TS), which is the basis of our bandit-like ABC partitioning approach. 

\vspace{-0.5cm}
\subsection{Thompson Sampling Revisited}
In the multi-armed bandit (MAB) problem, a slot-machine player needs to decide which of the $K$ arms  to pull. Each arm has a reward distribution  that is unknown to the gambler who can only learn about it by playing. In doing so, the player faces a well-known exploration/exploitation dilemma. A similar dilemma is also present in  ABC proposals, where we want to propose parameter values where the posterior density is thought to be high, to increase our acceptance rate, but must also explore enough to learn the entire posterior and control the importance sampling variance. 

Consider the Bernoulli bandit game and denote with $ I^{(t)}$ the random arm played at time $t$. Pulling the $i^{th}$ arm at time $t$ gives a random payout $Y^{(t)}\in \{0,1\}$ with $  P(Y^{(t)}=1|I^{(t)} = i)=\mu_i$. The mean payout $\mu_i$ of each arm is unknown. An agent must decide which of the $K$ arms to play at time $t$, given the outcome of the previous $t-1$ plays. A natural goal in the MAB game is to minimize the {\sl cumulative regret}, i.e. the amount of money one loses by not playing the optimal arm at each step. Denoting with $\mu^\star=\max\limits_{1\leq i\leq K}\mu_i$ the largest average reward, define a {cumulative} regret of agent $\mathcal{A}$ after $T$ plays by 
\begin{equation}\label{eq:thompson}
    R_T(\bm{\mu},\mathcal{A}) = T\mu^*-\E\left[\sum_{t=1}^TY^{(t)}\right],
\end{equation}
{where the expectation is over the agent's choices $I^{(t)}\mid \{I^{(1)},Y^{(1)},\ldots,I^{(t-1)},Y^{(t-1)}\}$ and the stochastic payouts $Y^{(t)}\mid I^{(t)}$.}
Thompson Sampling is a heuristic algorithm with a Bayesian flavor that achieves a logarithmic expected {cumulative} regret \citep{agrawal2017near} in the Bernoulli bandit problem. It \emph{explores} by {sampling a random instance $\eta_j \sim P(\mu_j\mid \{Y^{(m)}, I^{(m)}\}_{m=1}^{t-1}) $ from the posterior knowledge} on the mean reward $ \mu_j$ for each arm $j$. It \emph{exploits} by selecting the arm with the highest $\eta_j$ for the next play. The posterior distribution is updated to reflect the new random payout $Y$. {When the rewards are Bernoulli distributed, endowing the mean acceptance rates $\mu_j$ with Beta priors yields Beta posteriors on the acceptance rates $\mu_j$. This facilitates tractable sampling of $\eta_j$ from the posterior on each $\mu_j$ after every play.  

Another common objective in the multi-armed bandit problem is to simply learn which arm is optimal at the end of the game, as opposed to maximizing the total reward over the course of playing. Known as  best arm identification \citep{audibert2010best} (or simple regret minimization \cite{bubeck2009pure}), {the agent must make a choice $\hat{k}^{(T)}$ after $T$ plays that minimizes $P(\hat{k}^{(T)}\neq k^*)$,
where $k^*=\argmax_k \mu_k$ is the arm with the maximal mean reward.} \citet{russo2016simple} introduces Top-two Thompson Sampling, which oversamples nearly-optimal arms, trading off some exploitation of the best arm for greater certainty about the mean rewards of almost optimal arms. The procedures of Thompson Sampling and the top-two variant are detailed in Algorithm \ref{alg:ts} in Section \ref{sec:ts}. In our work, we will think of the ABC acceptance of a parameter proposal as a binary reward $Y$.

{Bandit optimization algorithms have also been constructed over a continuous arm domain \citep{agrawal1995continuum, kleinberg2004nearly, bubeck2011x}. 
Notably, hierarchical online optimization proposed by \citet{bubeck2011x} is an optimization algorithm on continuous spaces based on a tree-based hierarchical partitioning.  Our approach is based on discretization and sequential refinement of a step-function approximation to the posterior. Unlike \citet{bubeck2011x}, however, we rely on Thompson Sampling as opposed to an upper confidence bound approach.}

\vspace{-0.5cm}
\subsection{Our Approach:  ABC-Tree}
The central theme of our work is viewing the parameter space from the perspective of partitions. We render learning the optimal proposal distribution as learning (1) the optimal partition of the parameter space and (2) the optimal\footnote{Our chosen notion of optimality will be introduced in Section \ref{sec:online}.} proposal distribution on the partition. These two objectives are folded into a single algorithm through a nested structure, where the \emph{outer} loop achieves goal (1) via  recursive partitioning, and where the \emph{inner} loop  achieves goal (2) using bandits.

\begin{figure}[!t]
    \centering
    \includegraphics[width=\linewidth,height=7cm]{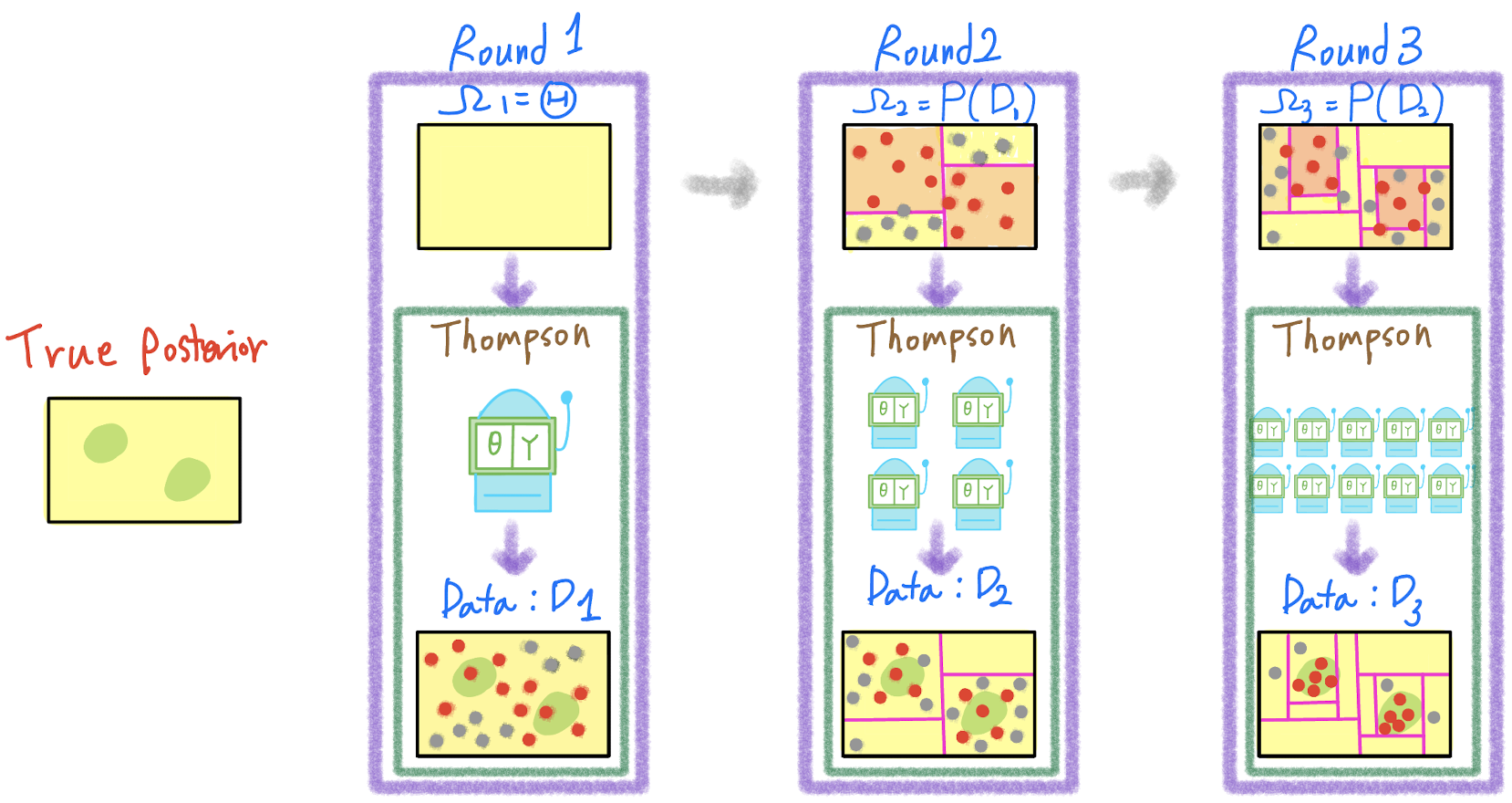}
    \caption{The nested structure of ABC-Tree. The outer loop actively learns the optimal partition and the inner loop actively learns the optimal proposal distribution  on the current partition. Red dots are accepted proposals.}
    \label{fig:abstract_algorithm}
\end{figure}

The inner loop (Algorithm \ref{alg:qe_abc} in Section \ref{sec:innter_loop}) is conditional on a given partition of the parameter space. Regarding each box as a bandit arm, the loop refines an ABC proposal by a Thompson Sampling strategy that  updates the posterior probability of obtaining a reward (ABC acceptance). Due to the discretization of the parameter space, proposing a parameter value {can be simplified}. First, we randomly select a box according to the current state of reward posteriors in a manner that very much resembles Thompson Sampling. Next, we choose a value from a prior restricted to the box. {When the prior is uniform, this process is extremely simple, and for general priors we can re-weight (Remark \ref{rmk:rm1})}. The inner loop continues until enough acceptances have been accumulated {or when the number of inner iterations has exceeded the inner loop budget.}

  \begin{figure}[!t]
    \centering
    \includegraphics[width=0.6\textwidth]{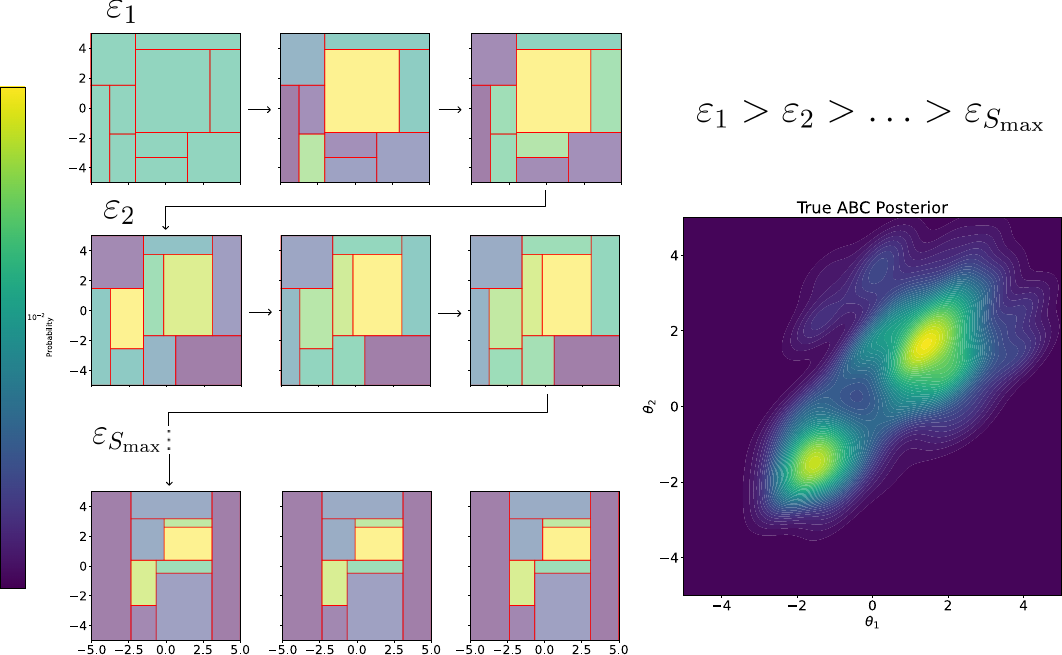}
    \caption{Evolution of the proposal distributions in Algorithm \ref{alg:seqtreeabc} (partitioning with CART). The algorithm decreases $\varepsilon$ and re-partitions $\Theta$. Within outer loops (each row), the partition and $\varepsilon$ stay fixed, while the weight for each box is updated after every inner loop iteration.}
    \label{fig:progression-ex}
\end{figure}

In the outer loop { (Algorithm \ref{alg:seqtreeabc} in Section \ref{sec:outer})} we refine the partition. This is accomplished by simply fitting a classification tree (forest) to the ABC acceptances obtained from past proposals. The acceptance is re-calculated for each outer loop with a decreasing ABC threshold $\varepsilon$. Therefore, the outer loop develops an adaptive partition with a finer resolution that wraps the  $\varepsilon$-posterior into boxes. Note that the number of arms (partition boxes) provided by the outer loop changes as the algorithm progresses, which has proven useful in multi-armed bandit problems over continuous arm domains \citep{kleinberg2008multi, bubeck2011x}. 

A cartoon of our ABC-Tree algorithm, blending the outer and inner loops into a nested structure,  is depicted in Figure \ref{fig:abstract_algorithm}. Intuitively, the outer loop engulfs the posterior modes through recursive partitioning, and the inner loop pulls  boxes with higher reward rates (or higher posterior) more often. {The progression of ABC-Tree is visualized in Figure \ref{fig:progression-ex} on a two-dimensional example.} In a nutshell, active learning algorithms in ABC utilize observations, including proposed parameters and ABC acceptance results, to update the subsequent proposal distribution. In this regard, both the outer and inner loop can be seen as active learning, making the nested algorithm \emph{doubly active}.

\vspace{-0.5cm}
\section{ABC Tree}
\label{sec:abctree}

This section unveils our bandit-style ABC approach to learning proposal distributions (and ABC posterior approximations) supported on a finite partition of the parameter space.

\vspace{-0.5cm}
\subsection{Inner Loop: Bandit  ABC}\label{sec:innter_loop}
Suppose that the parameter space\footnote{We assume that after parameter rescaling, the parameter space is contained in a $d$-dimensional unit cube. Our plots will often be transformed back to the original scale.}  $\Theta\subset [0,1]^d$  has  been partitioned into  (box-shaped) regions $\Theta=\cup_{k=1}^K\Omega_k\subset\mathbb{R}^d$.  Each region $\Omega_k$ will be regarded as an arm in the multi-arm bandit setting. Playing an arm is equivalent to first drawing a parameter value $\wt{\bm\theta}$ from inside the box $\Omega_k$ and then generating a fake data vector $\wt{\bm X}_{\wt{\bm\theta}}\sim p_{\wt{\bm{\theta}}}(\cdot)$.
Each box $\Omega_k$ is chosen with a probability $q_k$ where $\sum_{k=1}^Kq_k=1$ and, given $\Omega_k$, the trial $\wt{\bm\theta}$ is selected according to $ \pi(\bm\theta\C\Omega_k)$. These two steps  constitute an ABC  proposal distribution 
\begin{equation}
q(\bm\theta)= \sum_{k=1}^K q_k \times \pi(\bm\theta\C\Omega_k)\mathbb{I} {\{\bm\theta\in\Omega_k\}}\label{eq:proposal}
\end{equation}
 defined as the prior which has been  rescaled  for each bin.  The primary goal of the inner loop is to obtain a query-efficient proposal distribution $\bm q=(q_1,\dots, q_K)'$ on the arms, given the partition $\bm\Omega=\{\Omega_k\}_{k=1}^K$. 

Consider a fixed acceptance threshold $\varepsilon>0$ and a simulation budget $T_{max}$. With the objective of   sampling from the ABC posterior $p_\varepsilon(\bm\theta\mid\bm{X})$, at each occasion $t=1,\ldots,T_{max}$ we select a region $\Omega_k$  and  a parameter guess $\bm\theta^{(t)}$ within $\Omega_k$  to observe an ABC acceptance reward 
\begin{equation}
Y^{(t)}={\rm ABC}(\bm X, \wt{\bm{X}}_{\bm{\theta}^{(t)}}, \varepsilon)\equiv \mathbb{I}\{d(s(\wt{\bm{X}}_{\bm{\theta}^{(t)}}),s(\bm X))<\varepsilon\}.\label{eq:mean_reward}
\end{equation}
 We denote the unknown probability of the $k^{th}$ region $\Omega_k$ (i.e. arm) to yield the reward \eqref{eq:mean_reward} by $\mu_k$. At the commencement of the simulation process, we have no information about how likely each region is to yield a reward and we express our uncertainty about $\mu_k$ through a beta distribution $\mathrm{Beta}(\alpha^{(0)}_k,\beta^{(0)}_k)$.   As the simulation progresses, we learn about $\mu_k$'s by collecting rewards. We leverage this knowledge by updating $\alpha^{(0)}_k$ and $\beta^{(0)}_k$ as well as bin proposal probabilities $q_k^{(t)}$ for every $t$  by prioritizing bins that have a higher success rate. The
updating  of $q_k^{(t)}$ will be achieved by a Thompson Sampling style routine for binary bandits. Before explaining the  updating procedure, it is instrumental to understand how  the ABC reward means $\mu_k$'s relate to the projection of the ABC likelihood onto discrete distributions supported on $\bm\Omega$. This crucial insight,  presented in the Lemma below, provides a justification for our bandit-style approach.
\begin{lemma} \label{lem:crucial}
For any region $\Omega_k\subset\Theta$ we have
\begin{align}
    p_\varepsilon(\Omega_k\mid\bm X)&\propto\pi(\Omega_k) \mathbb{E}\left[{ {\rm ABC}(\bm X,\wt{\bm{X}}_{\wt{\bm{\theta}}},\varepsilon)}\right].\label{eq:intuition_2}
\end{align}
where the expectation $\mathbb E$ is taken over the distribution of  $\wt{\bm X}_{\wt{\bm{\theta}}}$ for $\wt{\bm{\theta}}\sim \pi(\bm\theta\C\Omega_k)$.
\end{lemma}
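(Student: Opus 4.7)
The plan is to compute $p_\varepsilon(\Omega_k\mid\bm X)$ directly by integrating the $\varepsilon$-posterior density over the box $\Omega_k$ and then rewriting the result as a nested expectation. The calculation is essentially algebraic bookkeeping using the definitions in \eqref{eq:conv_likelihood}--\eqref{eq:abc_post}, together with Fubini's theorem. The main step is recognizing that the prior restricted to $\Omega_k$ factors as $\pi(\bm\theta)\mathbb{I}\{\bm\theta\in\Omega_k\}=\pi(\Omega_k)\,\pi(\bm\theta\C\Omega_k)$, after which the integral becomes precisely the expectation of the indicator ${\rm ABC}(\bm X,\wt{\bm X}_{\wt{\bm\theta}},\varepsilon)$ under $\wt{\bm\theta}\sim\pi(\bm\theta\C\Omega_k)$ and $\wt{\bm X}_{\wt{\bm\theta}}\sim p_{\wt{\bm\theta}}(\cdot)$.

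More concretely, I would proceed as follows. First, from \eqref{eq:abc_post}, write
\begin{equation*}
p_\varepsilon(\Omega_k\mid\bm X)=\int_{\Omega_k}p_\varepsilon(\bm\theta\mid\bm X)\,\d\bm\theta\;\propto\;\int_{\Omega_k}\pi(\bm\theta)\,L_\varepsilon(\bm X\mid\bm\theta)\,\d\bm\theta,
\end{equation*}
with a proportionality constant that does not depend on $k$. Next, substitute the definition \eqref{eq:conv_likelihood} of $L_\varepsilon(\bm X\mid\bm\theta)$ and apply Fubini's theorem to exchange the order of integration over $\bm\theta$ and $\wt{\bm X}_{\bm\theta}$. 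Then factor $\pi(\bm\theta)\mathbb{I}\{\bm\theta\in\Omega_k\}=\pi(\Omega_k)\,\pi(\bm\theta\C\Omega_k)$ to pull the scalar $\pi(\Omega_k)$ out of the integral, leaving
\begin{equation*}
\int_{\Omega_k}\pi(\bm\theta\C\Omega_k)\int_{\mathcal X}\mathbb{I}\{d(s(\wt{\bm X}_{\bm\theta}),s(\bm X))<\varepsilon\}\,p_{\bm\theta}(\wt{\bm X}_{\bm\theta})\,\d\wt{\bm X}_{\bm\theta}\,\d\bm\theta,
\end{equation*}
which is exactly $\mathbb E[{\rm ABC}(\bm X,\wt{\bm X}_{\wt{\bm\theta}},\varepsilon)]$ under the stated joint law of $(\wt{\bm\theta},\wt{\bm X}_{\wt{\bm\theta}})$. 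Combining these two displays yields \eqref{eq:intuition_2}.

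There is no real obstacle here: the only mildly delicate point is keeping track of proportionality constants (the normalizer of the posterior in \eqref{eq:abc_post} and the kernel normalizer implicit in $L_\varepsilon$), both of which are independent of $k$ and may be absorbed into the $\propto$ symbol. The result's importance lies not in the proof but in its interpretation: up to the prior weight $\pi(\Omega_k)$, the posterior mass on a box is exactly the Bernoulli mean reward of pulling arm $\Omega_k$, which legitimizes treating bin probabilities as unknown means in the multi-armed bandit inner loop and justifies the Thompson Sampling updates based on Beta conjugacy announced immediately after the lemma.
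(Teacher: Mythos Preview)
Your proposal is correct and follows essentially the same route as the paper: integrate the $\varepsilon$-posterior over $\Omega_k$, substitute the convolution likelihood \eqref{eq:conv_likelihood}, factor $\pi(\bm\theta)=\pi(\Omega_k)\,\pi(\bm\theta\C\Omega_k)$ on $\Omega_k$, and recognize the remaining double integral as the stated expectation. The paper's proof is the two-line version of exactly this computation, so your additional remarks about Fubini and the $k$-independence of the normalizers are welcome elaborations rather than a different argument.
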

\proof This is due to the definition 
\begin{align*}    p_\varepsilon(\Omega_k\mid\bm{X}) &\propto \int_{\Omega_k} \pi(\bm \theta) L_\varepsilon(\bm{X}\mid\bm{\theta})\,\mathrm{d}\bm{\theta} \\
&\propto \pi(\Omega_k)\int_{\Omega_k}\pi(\bm \theta\C \Omega_k)\int_\mathcal{X} \mathbb I\{d(s(\wt{\bm{X}}),s(\bm X))<\varepsilon\}p_{\bm\theta}(\wt{\bm{X}})\,\mathrm{d}\wt{\bm{X}}  \,\mathrm{d}\bm{\theta}.
\end{align*}

This Lemma suggests that one could build a histogram reconstruction of the posterior supported on $\bm\Omega$ by averaging over many sampled rewards from each bin and then normalizing them.
Sampling each bin separately is inefficient if there are many bins that do not contain a lot of posterior mass. This is why we employ Thompson Sampling style algorithm which focuses on the bins that
have enough support. Rather than directly reporting a step function approximation to the ABC posterior, we turn this approximation into an ABC proposal distribution \eqref{eq:proposal} built on a partition $\bm\Omega$.   

\begin{algorithm}[!t]
\caption{Inner Loop}\label{alg:qe_abc}
\spacingset{1.2}
\begin{algorithmic}[1]
\Statex \textbf{Input:} Prior $\bm\pi(\cdot)$, observed data $\bm{X}$, tolerance $\varepsilon$, partition $\bm\Omega=\{\Omega_k\}_{k=1}^K$, budget $T_\mathrm{max}$,
\Statex {\color{white}\textbf{Input:}}   acceptance quota $B$ and    $\bm{\alpha}=(\alpha_1,\dots, \alpha_K)'$ and
$\bm{\beta}=(\beta_1,\dots,\beta_{K})'$
\State \textbf{Initialize} $t=0$ and $\alpha_k^{(0)}=\alpha_k$ and $\beta_k^{(0)}=\beta_k$
\While{$N \leq B$ and $t \leq T_\mathrm{max}$}
\State $t\gets t+1$
\State (Binned Likelihood)  ${\eta}^{(t)}_k=\alpha_k^{(t)}/(\alpha_k^{(t)}+\beta_k^{(t)})$ for $k=1,\ldots,K$
\State \label{ln:explo}(Binned Posterior) $\hat{{ {p}}}_k^{(t)}\propto \eta^{(t)}_k\times \pi(\Omega_k)$ for $k=1,\ldots,K$
\State \label{ln:reg_exp}(Regularization)  $\hat{\bm{q}}^{(t)} \gets \bm{q}^*(\hat{\bm{p}}^{(t)})$ where $\hat{\bm{p}}^{(t)}=(\hat{{ {p}}}_1^{(t)},\dots, \hat{{ {p}}}_K^{(t)})'$
\State {(Arm pulling)} $I=I^{(t)}\sim \mathrm{Cat}(\hat{\bm{q}}^{(t)})$
\State {(Simulate data)} $\wt {\bm X}_{\bm\theta^{(t)}}\sim p_{\bm\theta^{(t)}}(\cdot)$ where $\bm\theta^{(t)}\sim \pi\left(\bm\theta\C \Omega_{I}\right)$
\State {(ABC reward)} $Y^{(t)} \gets {\rm ABC}(\bm X,\wt{\bm X}_{\bm\theta^{(t)}},\varepsilon)$
\State {(Importance weight)}\label{ln:i_weight} $u^{(t)} =\pi(\Omega_{I})/\hat{q}^{(t)}_{I }$
\State \label{ln:info}{(Posterior update)} $(\alpha_{I }^{(t+1)}, \beta_{I }^{(t+1)})\gets (\alpha_{I }^{(t)}+Y^{(t)}, \beta_{I }^{(t)}+1-Y^{(t)})$
\State \textbf{if} {$Y^{(t)}=1$} \textbf{then} $N\gets N+1$ 
\EndWhile
\State $T\gets t$
\State \Return $\mathcal{D}(\varepsilon)=\{(\bm\theta^{(t)},\wt{\bm{X}}_{\bm{\theta}^{(t)}}, Y^{(t)}(\varepsilon))\}_{t=1}^T $ and $\{u^{(t)}\}_{t=1}^T$ 
\end{algorithmic}
\end{algorithm}

Algorithm \ref{alg:qe_abc} summarizes the inner loop of our ABC-Tree method.
For each cell $\Omega_k$,
we employ a beta distribution $\mathrm{Beta}\left(\alpha_k^{(t)},\beta_k^{(t)}\right)$ which reflects uncertainty about the value of the marginal likelihood $L_\epsilon(\bm X \C\bm\theta)$ in \eqref{eq:conv_likelihood} restricted to  $\Omega_k$ after $t$ iterations. 
At the beginning of the simulation, we have no information about the likelihood and  thereby  set $\alpha_k^{(0)}=\beta_k^{(0)}=1$. Later, as we obtain a new partition from the outer loop in Section \ref{sec:outer}, we can choose a different initialization $\alpha_k^{(0)}$ and $\beta_k^{(0)}$ which is based ABC lookup tables from previous loops. After collecting $t$ ABC rewards (Algorithm \ref{alg:qe_abc} line 9), we can update our beliefs about the likelihood inside $\Omega_k$ by increasing either $\alpha_k^{(t+1)}$ or $\beta_k^{(t+1)}$ (Algorithm \ref{alg:qe_abc} line 11).  Similarly to other query-efficient ABC techniques, it is important to model the uncertainty in the underlying likelihood \citep{gutmann2016bayesian, jarvenpaa2018gaussian} and to sample new parameters to reduce this uncertainty (exploration) while, at the same time, increasing the acceptance rate (exploitation).  We achieve this balance here though a Thompson Sampling (TS) style procedure using Beta distribution updating.

Recall that, in each round $t$, TS chooses an arm to play based on the largest sampled $\eta^{(t)}_k\sim\mathrm{Beta}\left(\alpha_k^{(t)},\beta_k^{(t)}\right)$.  The question remains whether such a strategy is compatible with the goal of exploring the entire ABC posterior. Intuitively, for the purpose of ABC posterior sampling, we might want to allow more exploration and choose an arm according to  {\em sampling} as opposed to {\em optimization}. 
In the effort to learn about the (marginal) likelihood function on $\Omega_k$, we might want to compute the {\em mean}  $\eta^{(t)}_k=\alpha_k^{(t)}/(\alpha_k^{(t)}+\beta_k^{(t)})$ of $\mathrm{Beta}\left(\alpha_k^{(t)},\beta_k^{(t)}\right)$, as opposed to a random sample.
Such  $\eta^{(t)}_k$ can be regarded as an approximation  to $L_\epsilon(\bm X \C\bm\theta)$ restricted to $\Omega_k$. 
By weighting $\eta^{(t)}_k$ by the prior $\pi(\Omega_k)$, Lemma \ref{lem:crucial} implies that $\hat{p}^{(t)}_k\propto \eta^{(t)}_k\times\pi(\Omega_k)$ constitutes an approximation to the ABC posterior  $p_\varepsilon(\Omega_k\C \bm X)$ on $\Omega_k$\footnote{Random sampling from the beta distribution as opposed to computing the mean would yield a more noisy approximation. These two strategies, however, were found to perform comparably well.}.  
While the   approximation to the discretized posterior encapsulated in $\hat{\bm{p}}^{(t)}=(\hat{p}^{(t)}_1,\dots, \hat{p}^{(t)}_K)'$  alone can be attempted as a proposal for the next step, it might be advisable (as discussed in Section \ref{sec:online}) to further refine this proposal through regularization. Such a refinement, denoted by $\hat{\bm{q}}^{(t)}=(\hat{q}^{(t)}_1,\dots, \hat{q}^{(t)}_K)'$, is then used to pull an arm through {\em sampling} (not optimization) each $k$ with a probability $\hat{q}^{(t)}_k$. 

Since the sampled parameters $\{\bm\theta^{(t)}\}_{t=1}^T$ have been obtained under {\em varying proposals $q^{(t)}(\bm\theta)$} that are different from the prior, we need to revert the samples back to the original prior by importance weighting.  
The parameter value $\bm\theta^{(t)}\in\Omega_k$ can be re-weighted by the importance weight which boils down to $u ^{(t)}= \pi(\bm\theta\in \Omega_k)/q_k^{(t)}$. For a uniform prior over a compact domain, $\pi(\bm\theta\in \Omega_k)$ is simply the reciprocal volume of the box $\Omega_k$. The importance weight can be also approximated by box frequencies by sampling from a prior.  
\begin{remark}\label{rmk:rm1}{\rm (Sampling from $\pi(\bm\theta\C \Omega_k)$)}
In practice, it might be difficult to efficiently sample from $\pi(\bm\theta\C \Omega_k)$ for priors other than uniform. It might be worthwhile to run ABC-Tree under a step-function approximation to $\pi(\bm\theta)$ through $\pi_{\bm\Omega}(\bm\theta)=\sum_{k=1}^{K}\pi_k\mathbb{I}\{\bm\theta\in \Omega_k\}$, where $\pi_k=\int_{\Omega_k}\pi(\bm\theta)/|\Omega_k| d\bm\theta$. This prior  boils down to uniform sampling inside the box for each arm pull. Each resulting ABC-Tree sample  $\bm{\theta}^{(t)}$ obtained by the inner loop  would have to be reweighted by a  weight $u ^{(t)}= \frac{\pi(\bm\theta)}{q_k}$ when $\bm{\theta}^{(t)}\in\Omega_k$. The posterior approximation \eqref{eq:step_posterior} would have to undergo suitable reweighting as well.
\end{remark}

Before proceeding, we highlight the fact that besides adaptively collected samples from an ABC posterior, Algorithm \ref{alg:qe_abc} yields a step-function approximation to the $\varepsilon$-posterior \eqref{eq:abc_post} supported on the partition $\bm\Omega$ as a by-product.
 
 \vspace{-0.5cm} 
\subsubsection{Adaptive Histogram Posterior Approximation}\label{sec:post_approx}
Recall that each distribution $\mathrm{Beta}\left(\alpha_k^{(t)},\beta_k^{(t)}\right)$ reflects the uncertainty about the value of the marginal likelihood inside $\Omega_k$. After $T$ runs of the algorithm, the mean of this distribution constitutes a reasonable estimator of this value. We can thereby form a step-function approximation to the $\varepsilon$-posterior \eqref{eq:abc_post} by playing the bandit game as follows
\begin{equation}
\hat p_{\epsilon}^\Omega(\bm\theta\C \bm X)=\sum_{k=1}^K \frac{\hat p^{(T)}_\Omega(k)}{|\Omega_k|}\times \mathbb I\{\bm\theta\in\Omega_k\},\label{eq:step_posterior}
\end{equation}
where $|\Omega_k|$ is the volume of $\Omega_k$ and
$
\hat p^{(T)}_\Omega(k)\propto \pi(\Omega_k)\frac{\alpha_k^{(T)}}{\alpha_k^{(T)}+\beta_k^{(T)}}.
$
The approximation $\hat p^{(T)}_\Omega(k)$ to the $\varepsilon$-posterior mass  of $\Omega_k$ is proportional to the product of the prior mass of $\Omega_k$ and the mean ABC reward for the arm $\Omega_k$.  For the uniform prior on $\Theta$ we have $\pi(\Omega_k)=|\Omega_k|/|\Theta|$ in which case the volumes $|\Omega_k|$ cancel each other out and $\hat p^{(T)}_\Omega(k)$ can be normalized to sum to $|\Theta|$.  The quality of   the $\epsilon$-posterior approximation \eqref{eq:step_posterior} depends on the quality of the partition  $\bm\Omega=\{\Omega_k\}_{k=1}^K$ and its ability to capture local curvatures of the $\epsilon$-posterior. Updating $\bm\Omega$ occurs in the outer loop of ABC-Tree discussed in Section 
\ref{sec:outer}. From \eqref{eq:intuition_2} we see that the Algorithm \ref{alg:qe_abc} learns a step-wise function that approximates the ABC likelihood (posterior) {\em up to a constant}, i.e. we do not require during the computation that the approximation corresponds to a density. This is why we refrain from modeling the arm reward probabilities using a Dirichlet distribution. Note also that the discrete posterior approximation  \eqref{eq:step_posterior} is  very different from a step-wise  approximation that could be obtained after ABC rejections through conjugate updating of a Dirichlet prior distribution (or using Polya Trees) after observing acceptance counts for each bin. Our approach learns from the intermediate ABC rejections by updating the Beta distributions.

\vspace{-0.5cm}
\subsubsection{Benefits of Regularized Exploitation}\label{sec:online}


For ABC, some proposal distributions are better than others. One can define an optimal proposal for a target distribution $\bm{p}=(p_1,\ldots,p_K)'$, where $p_k=p_\varepsilon(\Omega_k\C\bm X)$ {
with respect to a \emph{utility} function $\omega_{\bm p}(\bm{q})$ as its minimizer}
\begin{equation}
    {{\bm q}}^*({{\bm p}}) = \argmax_{{{\bm q}}:\|\bm q\|_1 =1} \omega_{\bm p}(\bm q).
    \label{eq:optmodel}
\end{equation}
Note that $\bm p$ is unknown, and thus $\bm q^*(\bm p)$ is unknown as well. Instead, $\bm p$ can be sequentially learned over the course of the ABC simulation process, and $\bm q^*(\bm p)$ can be estimated by $\bm q^*(\hat{\bm p}^{(t)})$ given the current estimate $\hat{\bm p}^{(t)}$ at iteration $t$. We primarily {
concern ourselves with 
\begin{equation}
    \omega^1_{\bm{p}}(\bm{q}) = -\|\bm{q}-\bm{p}\|_2^2\,,\qquad \omega^2_{\bm{p}}(\bm{q}) = \frac{\sum_{k=1}^K q_kp_k/\pi_k}{\sum_{k=1}^K p_k\pi_k/q_k}.
    \label{eq:p_omega}
\end{equation}
The first one is maximized at $\bm{q}^*(\bm{p})=\bm{p}$, implying that $\hat{\bm{q}}^{(t)}=\hat{\bm{p}}^{(t)}$, and so parameters are proposed by sampling from our current model for $\bm{p}$. This approach has been employed in other likelihood-free inference methods \citep{papamakarios2019sequential, sharrock2022sequential}.  The second one $\omega_{\bm{p}}^2$ is the sampling efficiency proposed by \citet{alsing2018optimal}, which is another reasonable candidate as its maximization increases the ABC acceptance rate while controlling importance weight variance.} 
At each iteration $t$ of the inner loop (Algorithm \ref{alg:qe_abc} line 6) we update $\bm q^{(t)}=(q^{(t)}_1,\dots, q^{(t)}_K)'$ through the refinement of the posterior estimate $ \hat{\bm p}^{(t)}$ via $\bm q^{(t)}=\bm q^*( \hat{\bm p}^{(t)})$. We regard this projection as  \emph{regularized} exploitation. The purpose of this regularization is to improve the variance of the importance weights by trading-off the ABC acceptance rate and {to avoid oversampling areas around the posterior mode {(as will be shown in Example \ref{ex:1})}. Note that the utility $\omega_{{p}}({q})$ can be quite a general criterion. We can also use the criteria in \citep{beaumont2009adaptive, filippi2013optimality}, which adopt the KL divergence to define optimality {(See, Section \ref{sec:additional_exp})}. {Since our tree-based partitioning effectively restricts the parameter space to be finite, the maximization in Eq. \eqref{eq:optmodel} becomes tractable to perform after every subsequent simulation for a variety of choices of $\omega_{\bm{p}}$.
}

{
}

While the utility of an individual proposal $\bm q$ can be measured by $\omega_{\bm p}(\bm q)$, we want to define a measure of effectiveness of an {\em entire inner loop} responsible for producing the sequence of proposal distributions used throughout a sampling procedure.  Therefore, we introduce a new notion of \emph{regret} of an ABC algorithm $\mathcal{A}$ {for learning the optimal proposal}. 
The regret of not using the optimal proposal for an approximation to the $\varepsilon$-posterior $\bm p$ can be defined as the following  loss in efficiency of the proposal distributions
\begin{equation}\label{eq:our_regret}
    R_T(\mathcal{A}, \bm p):= \omega_{\bm p}(\bm q^*) - \E\left[\frac{1}{T}\sum_{t=1}^T\omega_{\bm p}\left(\hat{\bm q}^{(t)}\right)\right]\,
\end{equation}
{where $\mathcal{A}$ denotes the algorithm that decides each proposal $\hat{\bm q}^{(t)}$ based on previous information $\hat{\bm{q}}^{(1)},Y^{(1)},\ldots,\hat{\bm{q}}^{(t-1)},Y^{(t-1)}$. The expectation is taken over the interaction between the algorithm's choices $\hat{\bm{q}}^{(t)}\mid\{\hat{\bm{q}}^{(1)},Y^{(1)},\ldots,\hat{\bm{q}}^{(t-1)},Y^{(t-1)}\}$, and ABC acceptances $Y^{(t)}\mid \hat{\bm{q}}^{(t)}$. Such a regret can be seen as loosely parallel to the notion of cumulative regret in multi armed bandits \eqref{eq:thompson}, where the key distinction is that the regret considers the distribution used to propose arms to play rather than only the arm choices and corresponding rewards.} 

The following theorem says that Algorithm \ref{alg:qe_abc} indeed controls the regret for suitable efficiency functions $\omega$. {Note that $\omega^1$ satisfies the conditions in Assumption \ref{ass:2} for any $\delta>0$. In Section \ref{sec:special_case}, we show that there exists $\delta>0$ such that $\omega^2$ satisfies Assumption \ref{ass:2}}.

\begin{assumption}\label{ass:2}
{Let $\mathcal{S}^K=\{\bm{p}\in\mathbb{R}^K:\sum_{k=1}^K p_k=1, p_j\geq 0 \text{ for }j=1,\ldots,K\}$. For all $\bm{p}\in\mathcal{S}^K$,} there exists $\delta>0$ such that $\omega_{\bm{p}}$ satisfies
\begin{align*}
\text{(For upper bound)}~&{|}\omega_{\bm{p}}(\bm{q}) - \omega_{\bm{p}}(\bm{q}') {|}\lesssim \|\bm{q}-\bm{q}'\|_2^2,\\    
\text{(For upper bound)}~&\|\bm{q}^*(\bm{p}) - \bm{q}^*(\bm{p}')\|_2 \lesssim \|\bm{p}-\bm{p'}\|_2\,,
\end{align*}
for all {$\bm{p}'\in B_\delta(\bm{p})\cap\mathcal{S}^K$, $\bm{q}'\in B_\delta(\bm{q})\cap\mathcal{S}^K$, and
\begin{align*}
\text{(For lower bound)}~&{|}\omega_{\bm{p}^0}(\bm{q}^*(\bm{p}^0)) - \omega_{\bm{p}^0}(\bm{q}^*(\bm{p'})) {|}\gtrsim \|\bm{p}^0-\bm{p}'\|_2^2,
\end{align*}
for all $\bm{p}'\in B_{\delta}(\bm{p}^0)$ where $\bm{p}^0=(1/K,\ldots,1/K)'$, and $B_\delta(\bm{x})\equiv \{\bm{y}:\|\bm{y}-\bm{x}\|_2\leq \delta\}$.}
\end{assumption}

\begin{thm}\label{prop:regretub}  Consider a slightly modified version of Algorithm \ref{alg:qe_abc} that employs an initialization phase by first playing each arm  $m=K\gamma^{-1}\log^2(T)$ times. 
For $Y$ defined in \eqref{eq:mean_reward}, let $\mu_k=P(Y=1\mid \Omega_k)$ be the ABC acceptance rate when sampling from $\Omega_k$. Let $\pi_k=\pi(\Omega_k)=1/K$ be the prior mass on $\Omega_k$
and let $\mathcal{I}_\gamma$ be a set of probability vectors $\bm p=(p_1,\dots,p_K)'$ such that $p_k\propto \pi_k \mu_k$ when $\min\limits_{1\leq k\leq K}\mu_k\geq \gamma>0$. Denoting the modified algorithm as $\mathcal{A'}$, we have (for sufficiently large $T$)
\[
\sup_{\bm p\in\mathcal{I}_\gamma} R_T(\mathcal{A}',\bm p) \lesssim \frac{K^2 \log^2(T)}{\gamma T}\quad\text{and}\quad
\inf_{\mathcal{A}}\sup_{\bm p\in\mathcal{I}_\gamma} R_T(\mathcal{A},\bm p)\gtrsim \frac{K^2}{\gamma\,T},
\]
where the infimum is taken over all possible strategies $\mathcal{A}$ for choosing bins $\Omega_k$.
\end{thm}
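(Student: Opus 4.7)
The plan is to establish the upper and lower bounds separately, both via Assumption~\ref{ass:2} reducing to control of the squared Euclidean distance $\|\hat{\bm p}^{(t)}-\bm p\|_2^2$, where $p_k=\mu_k/\sum_j\mu_j$ and $\hat p_k^{(t)}=\eta_k^{(t)}/\sum_j\eta_j^{(t)}$ by virtue of $\pi_k=1/K$.

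For the upper bound, I would split the $T$ rounds into the initialization phase of length $mK=K^2\gamma^{-1}\log^2 T$ and the remaining rounds. Since both $\bm q^*$ and $\hat{\bm q}^{(t)}$ lie in the simplex, the per-step gap $\omega_{\bm p}(\bm q^*)-\omega_{\bm p}(\hat{\bm q}^{(t)})$ is absolutely bounded during initialization, giving a contribution to $R_T$ of order $mK/T = K^2\log^2 T/(\gamma T)$, which matches the claim. For the remaining rounds, the Lipschitz clauses of Assumption~\ref{ass:2} yield $\omega_{\bm p}(\bm q^*)-\omega_{\bm p}(\hat{\bm q}^{(t)})\lesssim\|\hat{\bm q}^{(t)}-\bm q^*\|_2^2\lesssim\|\hat{\bm p}^{(t)}-\bm p\|_2^2$, provided $\hat{\bm p}^{(t)}$ lies in the $\delta$-neighborhood of $\bm p$ (ensured with probability $1-o(1/T)$ by a Hoeffding union bound after the $m$ initializing plays). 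Expanding the ratio of normalized sums and using $\sum_j\mu_j\geq K\gamma$ reduces $\|\hat{\bm p}^{(t)}-\bm p\|_2^2$ to a constant multiple of $\|\bm\eta^{(t)}-\bm\mu\|_2^2/(K^2\gamma^3)$, and the Beta variance bound $\Var(\eta_k^{(t)}\mid N_k^{(t)}\geq m)\leq 1/(4m)$ yields $\E\|\hat{\bm p}^{(t)}-\bm p\|_2^2\lesssim 1/(mK\gamma^3)$, so the post-initialization contribution is dominated by the initialization term.

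For the lower bound, I would construct a Le~Cam (or Fano) family of instances in $\mathcal{I}_\gamma$ anchored at the uniform vector $\bm p^0$: take $\bm\mu^{(0)} = \gamma\mathbf 1$ (so that the induced posterior equals $\bm p^0$) and, for $j = 1,\ldots,K$, let $\bm\mu^{(j)}$ agree with $\bm\mu^{(0)}$ except $\mu_j^{(j)}=\gamma(1+\epsilon)$ for a tuning parameter $\epsilon$. A direct delta-method computation gives $\|\bm p^{(j)}-\bm p^0\|_2^2\asymp \epsilon^2/K^2$, while $\mathrm{KL}(\mathrm{Ber}(\gamma)\,\|\,\mathrm{Ber}(\gamma(1+\epsilon)))\lesssim \gamma\epsilon^2$. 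For any strategy $\mathcal{A}$, the chain rule for bandit log-likelihoods gives $\mathrm{KL}(\P_{\mathcal{A},\bm\mu^{(0)}}\,\|\,\P_{\mathcal{A},\bm\mu^{(j)}})\leq\E_{\bm\mu^{(0)}}[N_j^{(T)}]\cdot \gamma\epsilon^2$, and by pigeonhole some $j^*$ satisfies $\E[N_{j^*}^{(T)}]\leq T/K$, so tuning $\epsilon$ so that $T\gamma\epsilon^2/K$ is bounded keeps the two distributions within constant total-variation distance. Le~Cam's inequality then forces $\mathcal{A}$ to incur per-round regret of order $\|\bm p^0-\bm p^{(j^*)}\|_2^2$ on at least one of the two environments via the lower-bound clause of Assumption~\ref{ass:2}; tracking the $K$-dependent constants in the delta expansion and the chosen $\epsilon$ delivers the claimed $K^2/(\gamma T)$ order.

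The main obstacle is controlling the post-initialization regret to not exceed the initialization contribution. Direct application of the Beta variance bound gives a per-step bound of order $1/(mK\gamma^3)$, which is sufficient for the regime where the stated upper bound is nontrivial, but tightening it requires exploiting $\hat{\bm q}^{(t)}_k\gtrsim\gamma/K$ (from $\bm p\in\mathcal{I}_\gamma$ and continuity of $\bm p\mapsto\bm q^*(\bm p)$) so that $N_k^{(t)}$ grows linearly in $t$ after initialization, producing a harmonic sum in $t$ that integrates to $O(\log T)$. A secondary subtlety, on the lower bound side, is verifying that the $\bm p^{(j)}$ all remain within the $\delta$-ball around $\bm p^0$ where Assumption~\ref{ass:2}'s lower-bound clause applies for the chosen $\epsilon$; this is immediate since $\epsilon\to 0$ as $T\to\infty$.
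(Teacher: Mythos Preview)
Your upper-bound plan is essentially the paper's: split off the initialization phase (contributing $Km/T=K^2\gamma^{-1}\log^2 T/T$), invoke the two Lipschitz clauses of Assumption~\ref{ass:2} to reduce the adaptive-phase per-step regret to $\E\|\hat{\bm p}^{(t)}-\bm p\|_2^2$, pass to $\E\|\hat{\bm\mu}^{(t)}-\bm\mu\|_2^2$ via a ratio-perturbation bound, and then control the latter by showing each $N_k^{(t)}$ grows linearly in $t$. You correctly flag that the crude per-arm variance bound $1/(4m)$ is not enough and that a harmonic-sum argument is needed; the paper isolates exactly this step in a separate lemma, proving $P(T_k^{(t)}\le p_kt/4)\le T^{-1}+\exp(-p_kt/64)$ after the $m$ initializing plays and then summing $1/t$.

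The lower bound has a genuine gap. Your Le~Cam/pigeonhole skeleton matches the paper's, but your specific construction does not deliver the claimed $K$-dependence. If you finish your own arithmetic---$\|\bm p^{(j)}-\bm p^0\|_2^2\asymp\epsilon^2/K^2$ together with the KL budget $(T/K)\gamma\epsilon^2\lesssim 1$, i.e.\ $\epsilon^2\asymp K/(T\gamma)$---you arrive at a bound of order $1/(K\gamma T)$, three powers of $K$ short of the claimed $K^2/(\gamma T)$; so ``tracking the $K$-dependent constants'' does not close the gap. The paper's construction differs in two essential ways: it anchors at $\bm\mu=(\gamma/K)\mathbf 1$ rather than $\gamma\mathbf 1$, and it perturbs \emph{all} coordinates so that $\|\bm\mu'\|_1=\gamma$ stays fixed. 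Working at acceptance rates of order $\gamma/K$ simultaneously shrinks the normalizer $\|\bm\mu\|_1$ (so a given absolute perturbation of $\bm\mu$ moves $\bm p$ further) and shrinks the per-play Bernoulli KL (so a larger perturbation is permitted under the indistinguishability budget); both effects are what drive the paper's $K$-scaling. Note, relatedly, that the paper's proof actually works under the instance class $\{\|\bm\mu\|_1\ge\gamma\}$, whereas the main-text statement has $\min_k\mu_k\ge\gamma$; only the former admits $\mu_k=\gamma/K$, so under the literal main-text definition your anchor is the natural boundary point but cannot reach the stated rate.
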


\proof See  Section \ref{sec:theorem1_proof} in the supplement.

The above result highlights that this regularized exploitative approach controls regret at a rate that at most corresponds to a lower bound up to logarithmic factors. Intuitively, maximizing the sampling efficiency for $\hat{{\bm {p}}}^{(t)}$ inherently enforces a sufficient amount of exploration, {\ck while} \emph{pushing} the proposal distribution closer to the prior in order to control importance weight variance.

\begin{figure}[!t]
    \centering
    \includegraphics[width=0.8\linewidth]{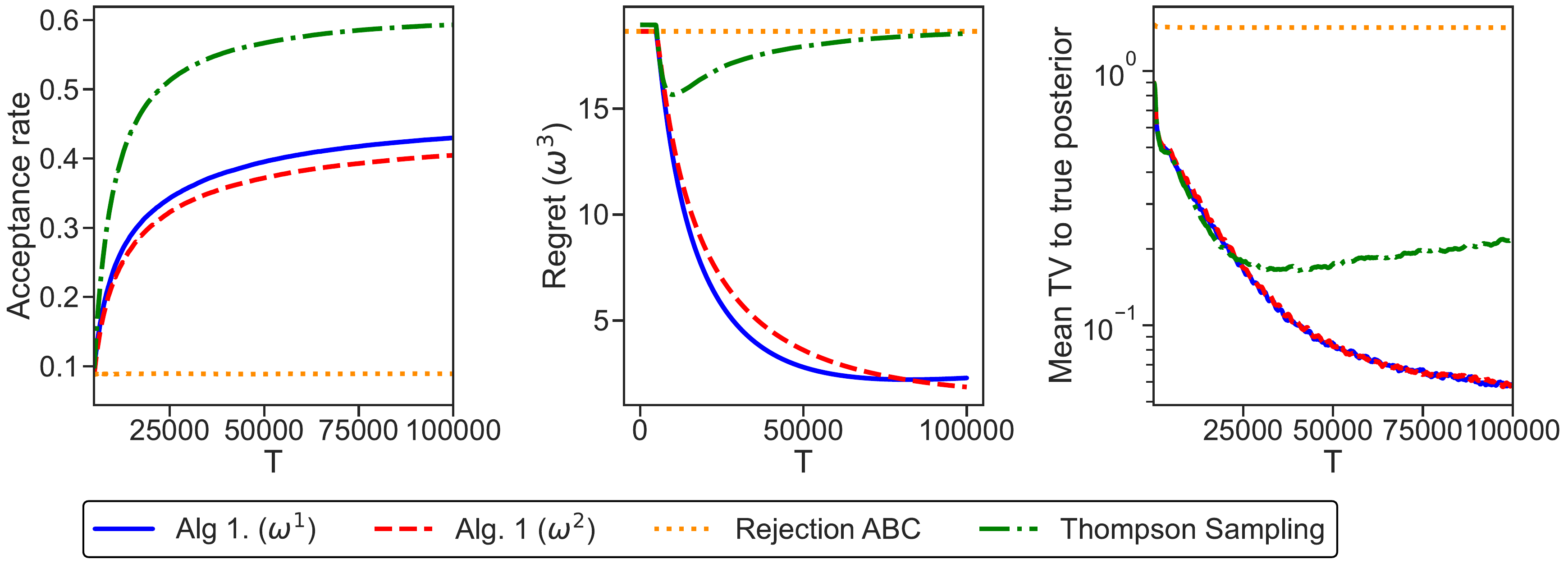}
    \caption{Empirical regret comparison. We compare various methods for sequentially updating proposals in a finite parameter space example. See Example \ref{ex:1} for details.}
    \label{fig:regrets-finite}
\end{figure}

\begin{exa}\label{ex:1}
 {{We consider a toy example with a discretized normal likelihood on the parameter space $\Theta=\{\theta_1,\dots,\theta_{100}\}'$ with a uniform prior}. In Figure \ref{fig:regrets-finite}, we compare rejection ABC (orange) with four schemes for sequentially updating the proposal distribution. {We use Algorithm \ref{alg:qe_abc} with each utility function in \eqref{eq:p_omega} and Thompson Sampling (green)}. Despite Thompson Sampling having the largest acceptance rate, it is unable to recreate the posterior as the variance of the importance weights becomes large while it eventually over-samples the posterior mode. Rejection ABC has the lowest acceptance rate. Algorithm \ref{alg:qe_abc} represents a sweet spot between them. We refer to Section \ref{implementation:empregretcomp} for details.}
\end{exa}


\vspace{-0.5cm}
\subsection{Outer Loop: Adaptive Discretization}\label{sec:outer}
The inner loop learns the optimal proposal for a given partition. The ability of the partition to adapt to the shape of the ABC posterior will massively influence the ABC sampling success, as demonstrated in the following example.

\begin{exa}
In Figure \ref{fig:finite-1d}, we consider a toy model $X\sim 0.8\times\mathcal{N}(100\times\theta,3)+0.2\times\mathcal{N}(100\times\theta+5,0.25)$ with a prior $\theta\sim\mathcal{U}(0,1)$. We display the performance of Algorithm \ref{alg:qe_abc} where the parameter space $\Theta=[0,1]$ has been partitioned into regular bins. A too-coarse grid (first row) may not be able to eliminate trials in the areas where the posterior is small. On the other hand,  a too-fine grid (bottom row) may lead to overfitting \citep{van2017bayesian, rovckova2020posterior}. In practice, the optimal grid size depends on the smoothness of the underlying ABC acceptance rate function, and of the intractable likelihood function, which is not known \emph{a priori}.  The middle row presents partitioning which is able to provide a more accurate approximation to the ABC posterior within fewer trials.

\end{exa}

\begin{figure}[!t]
    \centering
    \includegraphics[width=0.48\linewidth]{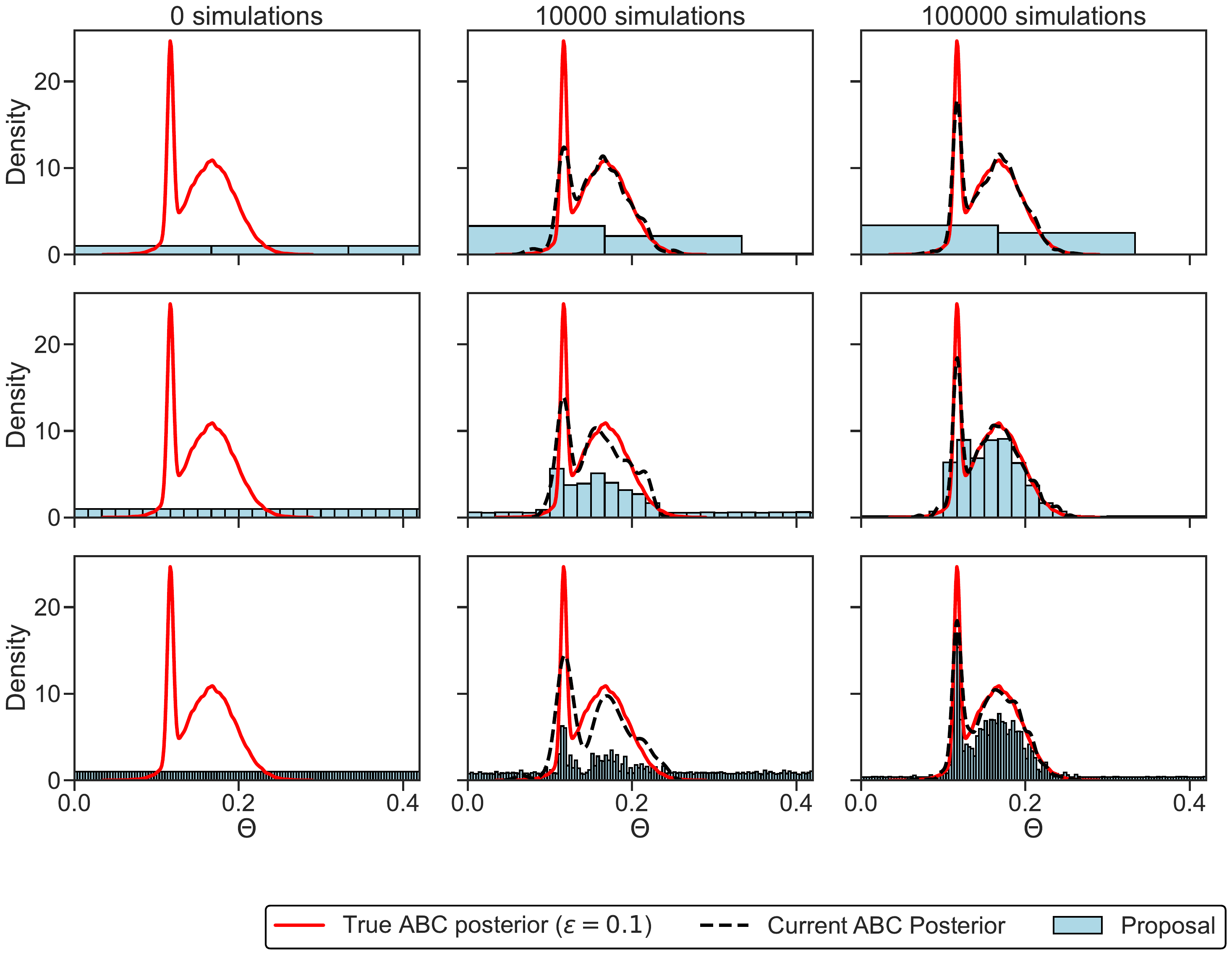}
    \includegraphics[width=0.50\linewidth]{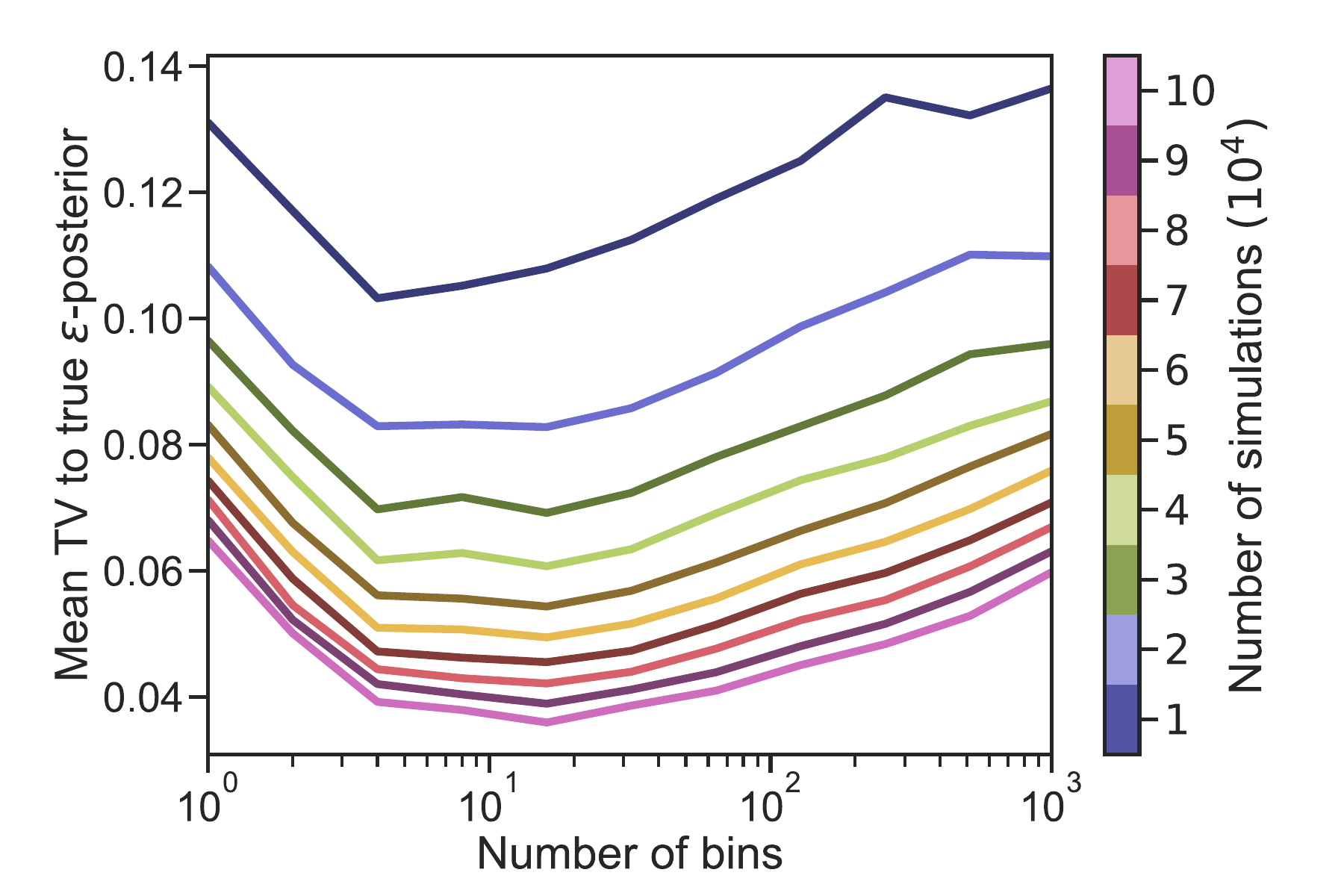}
    \caption{{\textbf{Fixed grid}. Performance of Algorithm \ref{alg:qe_abc} on a univariate Gaussian mixture example using a regular   grid of various sizes for partitioning $\Theta$. Left: kernel density estimates from accepted samples. Right: Effect of bin size on accuracy of posterior sampling.}}
    \label{fig:finite-1d} 
\end{figure}

The outer loop of ABC-Tree seeks to sequentially discretize the parameter space $\Theta$ into  hyperrectangles on which to apply Algorithm \ref{alg:qe_abc}. Each iteration $t$ of the inner loop assumes that the ABC tolerance is fixed at $\varepsilon$. We index the outer loop iterations with $s$ and use a sequence of decreasing thresholds $\varepsilon_1>\dots>\varepsilon_S$, one $\varepsilon_s$ for each outer loop $s$ (analogously to particle-filtering based ABC approaches \cite{sisson2007sequential, beaumont2009adaptive}).
Beginning with larger tolerance thresholds allows us to more quickly learn areas of the parameter space that are likely to be relevant at lower tolerance levels, which will be necessary for accurate posterior inference.

We denote  the ABC lookup table from the $s^{th}$ round with 
\begin{equation}
\mathcal D^{s}(\varepsilon)\equiv \{(\bm{\theta}_s^{(t)}, \wt{\bm X}_{s}^{(t)}, Y^{(t)}_{s}(\varepsilon))\}_{t=1}^{T_{s}},\label{eq:Ds}
\end{equation}
where  
the rewards $Y^{(t)}_{s}(\varepsilon)=\mathrm{ABC}(\bm X,\wt{\bm X}_{s}^{(t)}, \varepsilon)$ have been computed on $(\bm{\theta}^{(t)}, \wt{\bm X}_{s}^{(t)})$ at a chosen threshold $\varepsilon$ (not necessarily $\varepsilon_s$).  With $\mathcal D^{<s}(\varepsilon)=\{\mathcal D^1(\varepsilon),\dots, \mathcal D^{s-1}(\varepsilon)\}$ we denote the entire history before the $s^{th}$ round. The goal of the $s^{th}$ inner loop is to learn a new partition $\bm\Omega^{s}$ reflecting the allocation of past acceptances {\em at the most recent threshold $\varepsilon_{s-1}$.} This is why we use the training data $\mathcal D^{<s}(\varepsilon_{s-1})$,
recomputing the binary rewards  at a level $\varepsilon_{s-1}$ for all  preceding rounds.  The new partition 
\begin{equation}\label{eq:partition_scheme}
\bm\Omega^{s}=\{\Omega_k^s\}_{k=1}^{K_s}\sim P\left(\mathcal D^{<s}(\varepsilon_{s-1}),\bm\Omega^{s-1}\right) 
\end{equation}
is chosen according to a rule (distribution) $P$ which can be either a delta measure when the partitioning with a deterministic function of the data (such as CART) or random when it represents a sample from a distribution over partitions given $D^{<s}(\varepsilon_{s-1})$. This rule may also depend on the previous partition $\bm\Omega^{s-1}$ if it were to be (deterministically) refined by adding  more splits. We primarily focus on tree-shaped partitions and discuss various choices of $P$ later. A well chosen tree structure will allow us to decipher variability of the acceptance rate by increasing resolution in relevant areas and, at the same time, splitting less often in flat (negligible) likelihood areas.

\begin{algorithm}[t]
\caption{ABC-Tree}\label{alg:seqtreeabc}
\spacingset{1.2}
\begin{algorithmic}[1]
\Statex \textbf{Input:} Prior $\bm\pi(\cdot)$, observed data $\bm{X}$, initial tolerance $\varepsilon_1$,
\Statex {\color{white}\textbf{Input:}} Number of rounds $S_{max}$,  tolerance progression rule $g(\varepsilon)$ 
\State \textbf{Initialize}: $s=1$, $K_1=1$, $\bm\Omega^1=\{\Omega_k\}_{k=1}^{K_1} = \{\Theta\}$, $\bm\alpha =\bm{\beta} =1$
\State While $s\leq S_{max}$
\State $~~~~$ {\em Inner Loop}: Collect $\{(\bm{\theta}_s^{(t)}, \wt{\bm X}_{s}^{(t)}, Y^{(t)}(\varepsilon_s), u^{(t)}\}_{t=1}^{T_s}$ using Algorithm \ref{alg:qe_abc} with $\bm\Omega^s,  \bm\alpha , \bm{\beta} ,\varepsilon_s$
\State $~~~~$ $s=s+1$
\State $~~~~$ {\em Collect  Data}: $\mathcal D\equiv \mathcal D^{<s}(\varepsilon_{s-1})=\cup_{j=1}^{s-1}\mathcal D^{j}(\varepsilon_{s-1})$ where $\mathcal D^j(\varepsilon)$ is defined in \eqref{eq:Ds}
\State $~~~~$ {\em Update a Partition}: $\bm \Omega^s = \{\Omega_k^s\}_{k=1}^{K_s} \sim P(\mathcal D)$
\State $~~~~$ {\em Update Initialization}: $\bm\alpha=(\alpha_1,\dots,\alpha_{K_s})'$ and $\bm\beta=(\beta_1,\dots,\beta_{K_s})'$  for $k=1,...,K_s$
\State $~~~~~~$ $\alpha_k \gets  \sum\limits_{(\bm{\theta}, Y )\in   \mathcal D} \mathbb{I}\{ \bm\theta \in \Omega_k^s\} Y $ 
\State $~~~~~~$ $\beta_k \gets \sum\limits_{(\bm{\theta}, Y )\in   \mathcal D} \mathbb{I}\{ \bm\theta \in \Omega_k^s\}(1- Y)$ 
\State $~~~~$ {\em Reduce Tolerance}: $\varepsilon_s\gets g(\varepsilon_{s-1})$ 
 \State \Return $\Omega_{S_{max}}$ and $\mathcal D$
\end{algorithmic}
\end{algorithm}

The ABC-Tree enterprise is visualized in Figure \ref{fig:progression-ex}. Each row corresponds to an Inner Loop, keeping the partition $\bm\Omega^s=\{\Omega_{k}^s\}_{k=1}^{K_s}$ and the tolerance level $\varepsilon_s$ fixed. The Inner Loop updates the probability mass of the proposal (and the piecewise constant posterior approximation) within each hyper-rectangle $\Omega^s_k$ at every iteration $1\leq t\leq T_s$. After accepting $B$ parameter values, we move onto $(s+1)^\mathrm{st}$ inner loop at a decreased level $\varepsilon_{s+1}$ and on an  updated  partition $\bm\Omega^{s+1}$.
 Note that the parameters   $\bm\alpha$ and $\bm\beta$ in line 8 and 9 of Algorithm \ref{alg:seqtreeabc} provide an initialization of the mean reward Beta distributions for the $(s+1)^\mathrm{st}$ Inner Loop based on the rejection information from previous rounds.  When updating the partition through \eqref{eq:partition_scheme}, we employ all of our current information in an \emph{extended window} approach, in which we construct the partition based on all of our previous data points. As an alternative, one can employ a \emph{rolling window} style of analysis by restricting the partitioning rule $P$ to only depend on the most recent completed outer loop. This is a reasonable choice because as the algorithm progresses, the acceptance threshold $\varepsilon$ becomes smaller, causing the $\varepsilon$-likelihood to become closer to the true intractable likelihood function. As such, the most recent batch of simulated data is likely to provides a more representative sample of the true posterior than the previous data. 

\subsubsection{Partitioning Choices} 

Several options for $P$ in \eqref{eq:Ds} are available. We consider several options below but this list is by no means exhaustive.
\paragraph{Classification Trees.} We can model the past ABC acceptances using a regularized classification tree $\bm\Omega^s=P\left(\mathcal D^{<s}(\varepsilon_{s-1})\right)$ obtained by regressing  $Y^{(t)}_{s'}(\varepsilon_{s-1})=\mathrm{ABC}(\bm X,\wt{\bm X}_{s'}^{(t)}, \varepsilon_{s-1})$ onto $\{\bm\theta_{s'}^{(t)}\}$ for $1\leq t\leq T_{s'}$ and $1\leq s'<s$. Using a pruned tree after cross-validation is a deterministic assignment and  will lead to a rough partition which may wrap sharp posterior modes into larger boxes through data-adaptive (non-dyadic) splitting. This strategy will be sufficient for learning an ABC proposal and for obtaining an ABC posterior approximation from the samples $\bm\theta_s^t$ after importance re-weightining. This strategy may not necessarily yield an accurate step-function approximation to the posterior (as outlined in Section \ref{sec:post_approx}).

 \paragraph{Bayesian Additive Regression Trees.}  BART \cite{chipman2010bart} is an ensemble of trees yielding finer partitions (by overlapping many small partitions) with good approximation properties \citep{rovckova2020posterior} The partitioning function $\bm\Omega^s=P\left(\mathcal D^{<s}(\varepsilon_{s-1})\right)$ in \eqref{eq:partition_scheme} is a {\em random draw} from the BART posterior (overlapping partitions for each sampled tree) after a burnin period. 
 While the default strategy in BART is $50$ trees, a draw of a forest this large  will lead to a very refined partition capable of approximating the acceptance function well but increasing the number of arms to pull in the next round. {The optimal number of bins depends on the dimension of $\Theta$ as well as the smoothness of the underlying function and the number of data points \citep{rovckova2020posterior}. Not knowing the optimal number of bins in advance, we recommend to use BART for higher dimensional parameter spaces, which may adaptively finds the optimal number of bins.}

\paragraph{Dyadic Partitioning.} Similarly to the hierarchical online optimization algorithm of \citet{bubeck2011x}, we can  bisect the parameter space dyadically at a midpoint along each box. For the compact parameter space $\Theta=[0,1]^d$, it is easy to compute the volume of each dyadic cell. We can grow a binary tree over the parameter space from a root node by splitting after sufficiently many parameter values  have been accepted at the $\varepsilon_s$ threshold. {We split the leaf node in which the largest quantity of parameters were proposed $r$ times after each iteration of the outer loop, where $r$ may be chosen by the user and may depend on the dimension of the parameter space. In the example below, we set $r=10$.} The dimension in which to split at each depth of the tree can be simply selected sequentially, or 
{by splitting in the direction that best separates accepted from rejected parameter values. This adds a form of variable selection that allows the algorithm to split in dimensions that are more relevant to variation in the acceptance rate within the leaf node.} 

As an alternative, the dynamic trees of \citet{taddy2011dynamic} provide another appealing option for partitioning $\Theta$. Bayesian inference over these trees is accomplished via a particle learning algorithm that is amenable to sequential updating, and a filtered posterior that limits the amount of change with each new observation. For the purpose of ABC posterior sampling, the partition may not need to be fine enough as long as it can successfully direct regions of high-low likelihood. {In Table \ref{tab:complexity} (Supplement Section \ref{sec:add_figures}), we see that our tree ABC can be as fast as SMC-ABC \citep{sisson2007sequential, beaumont2009adaptive} when $K < dB$}.

\begin{exa}\label{ex:2}
We show the progression of ABC proposals on a simple synthetic example  with a mixture of two bivariate isotropic Gaussians. With $\boldsymbol\theta=(-1,-1)^\top$ we observe data from 
$\bm X\sim 0.3\,\mathcal{N}\left(\boldsymbol\theta, \,\mathbf{I}\right) + 0.7 \,\mathcal{N}\left(\boldsymbol\theta+3\,\mathbf{1}, \frac{1}{4}\,\mathbf{I}\right)$ and  assume a uniform prior\footnote{While here we did not rescale the parameters to be inside $[0,1]^2$, the partitioning idea is the same.} $\theta_i\sim\mathcal{U}(-5,5)$ for $i\in\{1,2\}$. Figure \ref{fig:hierarchical-progression} displays the sequence of partitioning-based proposal distributions  using a pruned classification tree (top row), BART {(five trees with 1000 burn-in)}, and dyadic partitioning. As $\varepsilon$ decreases, and we accrue more simulated data,  the proposal distributions more closely encapsulates the true ABC posterior.  
\end{exa}

\begin{figure}
    \centering
    \includegraphics[width=\textwidth]{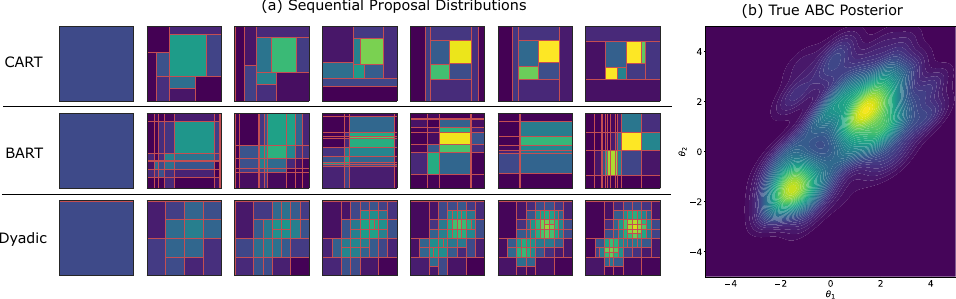}
    \caption{Sequential refinements of the proposal distribution using various tree-based partitioning methods on the Gaussian mixture example discussed in Example \ref{ex:2}. Top: CART. Middle: BART. Bottom: Dyadic partitioning.}
    \label{fig:hierarchical-progression}
\end{figure}

\vspace{-1cm}
\section{MAP-Tree}\label{sec:MAPTREE}
Another common goal, besides (ABC) posterior sampling, is computing an actual point estimator. While the posterior mean can be obtained directly through simulation, the sometimes preferred maximum-a-posteriori  (MAP) estimator is not immediately available when the likelihood cannot be evaluated.  We develop a  maximum-a-posteriori  (MAP) estimation strategy based solely on simulations from the likelihood, i.e. no likelihood values or gradients are required. Similarly as with ABC-Tree, we build a partition by sequentially refining high-probability areas of the posterior. However, unlike with ABC-Tree, our objective is not to find a proposal for ABC to reflect   entire posterior uncertainty but rather to find a partition that can isolate the posterior mode. 
Below, we describe the inner and outer loops of MAP-Tree designed for this purpose.

\vspace{-0.5cm}
 \subsection{Inner Loop: Best Arm Identification}
We consider a fixed, small  `goal' tolerance $\varepsilon$ and a fixed budget of $T$ evaluations of the simulator. Our objective is to  maximize the ABC posterior $p_\varepsilon(\bm\theta \C \bm X)$ in \eqref{eq:abc_post}  as accurately as possible under this budget. This can be framed as  active learning with a regret 
\begin{equation}\label{eq:MAP_reg}
    r_T = p_\varepsilon(\bm\theta^*\C \bm X) -p_\varepsilon(\wh{\bm\theta}^{(T)}\C \bm X), 
\end{equation}
where $\bm\theta^* = \argmax_{\bm\theta\in\Theta} p_\varepsilon(\bm\theta\C\bm X)$ and where $\wh{\bm\theta}^{(T)}$ is the best guess at $\bm\theta^*$ after $T$ steps. To find an approximation to $\bm\theta^*$, we adopt a partitioning perspective similar to ABC-Tree. For a given partition $\bm\Omega=\{\Omega_k\}_{k=1}^K$, the goal of the inner loop is minimize the regret \eqref{eq:MAP_reg} using a proposal distribution supported on $\bm\Omega$.
To find the posterior mode $\bm\theta^*$ we start with an easier problem. We find a mode of an approximation to $p_\varepsilon(\bm\theta^*\C \bm X)$ based on its projection  onto step functions supported on $\bm\Omega=\{\Omega_k\}_{k=1}^K$.  Based on this projection, defined as $\sum_{k=1}^K\mu_\Omega (k)\mathbb I\{\bm\theta\in \Omega_k\}$ with $ \mu_\Omega (k)=\frac{p_\epsilon(\Omega_k\C \bm X)}{|\Omega_k|}$,  we  attempt to identify the bin index $ k^* $ for which  $ \mu_\Omega (k)$ is the largest. Namely, we define
$$
k^* =\arg\max_{1\leq k\leq K} \mu_\Omega(k).
$$
{We aim to estimate $k^* $ by $\hat k^{(T)} $ obtained through playing a bandit game with a goal of minimizing $P(\hat{k}^{(T)}\neq k^*)$.}
We estimate $\hat{k}^{(T)}=\argmax_k \pi_k\alpha_k^{(T)} / \big( |\Omega_k| (\alpha_k^{(T)}+\beta_k^{(T)})\big)$, the arm with the maximal perceived average posterior peak according to data accrued over the course of the game.
The values $ \mu_\Omega(k)$ depend on the marginal likelihood within $\Omega_k$ that is unknown, which we learn about over time.
However, our regret is different from \eqref{eq:thompson} and manifests itself as an instance of the best arm identification problem \cite{audibert2010best}. For this reason, we may also seek to modify the playing strategy to de-emphasize greedy maximization and allocate more effort to differentiating between nearly optimal options.

The inner loop algorithm is presented as Algorithm \ref{alg:map_abc} in Section \ref{sec:ts}. Algorithm \ref{alg:map_abc} essentially operates identically to Algorithm \ref{alg:qe_abc} except  for the \emph{acquisition rule} for choosing the next arm given the history. {For MAP, we are not concerned with controlling the importance weight variance of our proposal, and instead seek to maximize the prior-weighted ABC acceptance rate.} We want to oversample small bins having large estimated average posterior mass and zoom in onto these bins in the following outer loop. Therefore, Algorithm \ref{alg:map_abc} directly takes a maximum 
  $I^{(t)} = \argmax\limits_{1\leq k\leq K} \eta_k^{(t)}\pi(\Omega_k)/|\Omega_k|$, where $\eta_k^{(t)}$ is the sampled acceptance rate of $k$-th bin. This algorithm is more exploitative as there is no regularization (as discussed in Section \ref{sec:online}) nor is there sampling from a categorical distribution to pick an arm. {In contrast with ABC-Tree which  chooses an arm by first deterministically estimating the mean rewards and then randomly picking an arm, ABC-MAP  instead first samples mean rewards $\eta_k^{(t)}$  and then picks the best arm based on the magnitude of these rewards.}

Alternatively, we can also pull arms according to Top-two Thompson Sampling (TTTS) \citep{russo2016simple} designed for best-arm identification in the multi-arm bandit problem. By oversampling the arm perceived as second-best, the algorithm has more confidence in identifying the best arm. When using the top-two variant, we pull arm by $I^{(t)} =  \argmax\limits_{1\leq k\leq K} \eta_k^{(t)}\pi(\Omega_k)/|\Omega_k|$ with probability $b\in (0,1)$. With probability $1-b$, we alternatively repeatedly resample $\bm{p}^{(t)}$ from the posterior model until a different parameter value from the original maximizer is chosen.


{The following theorem argues that Algorithm \ref{alg:map_abc} controls the probability of misidentifying the optimal arm in a finite parameter space at a near optimal rate.

\begin{thm}\label{thm:map_rate}
Consider a parameter space $\Theta$ partitioned as $\Theta=\cup_{k=1}^K \Omega_k$. Consider a uniform prior on regions $\pi_k=1/K$ for each $k$, and let $\mu_k=\int_{\Omega_k}p_{\epsilon}(\bm{\theta}\mid\bm{X})\,\mathrm{d}\bm{\theta} / |\Omega_k|$ be the average $\epsilon$-posterior mass within $\Omega_k$. Let $k^*=\argmax_k \mu_k$ and let $\hat{k}^{(T)}_\mathcal{A}$ be the estimate of $k^*$ after $T$ simulations using an algorithm $\mathcal{A}$. Denote  by $\mathcal{A}'$ the Algorithm \ref{alg:map_abc} with  $b=1/2$. Then, for any $\bm\mu=(\mu_1,\dots,\mu_K)'$ there exists  a constant $\Gamma_{\bm{\mu}}$ such that 
$$
P(\hat{k}^{(T)}_{\mathcal{A}'}\neq k^*)=\exp(-T(\Gamma_{\bm{\mu}} + o(1))),
$$ where $o(1)$ denotes a vanishing sequence as $T\to\infty$. Moreover, any algorithm $\mathcal{A}$ must satisfy $P(\hat{k}^{(T)}_\mathcal{A} \neq k^*) \geq \exp(-T(2\Gamma_{\bm{\mu}}+o(1)))$.
\end{thm}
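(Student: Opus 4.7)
My plan is to recast MAP identification as a Bernoulli best-arm identification (BAI) problem with known rescalings, and then invoke the posterior-convergence analysis of Top-Two Thompson Sampling from \citet{russo2016simple}. The symmetric choice $b=1/2$ is crucial here, as it is precisely the value $\beta=1/2$ for which TTTS is known to be within a factor of two of the oracle Chernoff rate, matching the gap that appears in the statement.

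First I would carry out the reduction. Writing $\nu_k:=P(Y^{(t)}=1\mid I^{(t)}=k)$ for the Bernoulli acceptance rate of bin $k$, Lemma \ref{lem:crucial} gives $p_\varepsilon(\Omega_k\mid \bm X)\propto \pi_k\nu_k$, so under the uniform prior $\pi_k=1/K$ we have $\mu_k\propto \nu_k/|\Omega_k|$, and the target becomes $k^*=\argmax_k c_k\nu_k$ with the known rescalings $c_k:=1/(K|\Omega_k|)$. Because Algorithm \ref{alg:map_abc} samples $\eta_k^{(t)}\sim\mathrm{Beta}(\alpha_k^{(t)},\beta_k^{(t)})$, selects by $\argmax_k c_k\eta_k^{(t)}$, and reports $\hat k^{(T)}$ as the $\argmax$ of the posterior mean of $c_k\nu_k$, the procedure with $b=1/2$ is exactly TTTS at $\beta=1/2$ applied to Bernoulli arms with deterministic known reward rescalings.

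Next I would import the upper bound. For $\beta=1/2$, \citet{russo2016simple} prove that the TTTS posterior converges exponentially,
\[
\lim_{T\to\infty}-\tfrac{1}{T}\log\bigl(1-\Pi_T(k^*)\bigr)=\Gamma_{1/2}(\bm\nu)\quad\text{a.s.},
\]
where $\Gamma_{1/2}(\bm\nu)$ is the Chernoff rate attained by the asymptotically optimal $\beta$-allocation. Since $\{\hat k^{(T)}_{\mathcal A'}\ne k^*\}$ is contained in the event that the posterior mean of some $c_k\nu_k$ exceeds that of $c_{k^*}\nu_{k^*}$, a Laplace-type tail estimate for the Beta posterior combined with the above display yields $P(\hat k^{(T)}_{\mathcal A'}\ne k^*)=\exp(-T(\Gamma_{\bm\mu}+o(1)))$ with $\Gamma_{\bm\mu}:=\Gamma_{1/2}(\bm\nu)$. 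The rescalings $c_k$ enter only through the pairwise gaps $c_{k^*}\nu_{k^*}-c_k\nu_k$ and the Chernoff information of rescaled Bernoullis, so Russo's three main ingredients---sufficient exploration of every arm, convergence of pull frequencies to the $\beta$-optimal allocation, and concentration of Beta posterior tails---go through unchanged.

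For the lower bound I would apply the change-of-measure argument of Kaufmann--Capp\'e--Garivier: any sampling rule obeys
\[
\liminf_{T\to\infty}-\tfrac{1}{T}\log P\bigl(\hat k^{(T)}_{\mathcal A}\ne k^*\bigr)\le \Gamma^*(\bm\nu),
\]
where $\Gamma^*(\bm\nu)=\sup_\omega \min_{k\ne k^*}\omega_{k^*}\omega_k\,d(\nu_{k^*},\nu_k)/(\omega_{k^*}+\omega_k)$ is the oracle Chernoff rate, with $d(\cdot,\cdot)$ the Bernoulli KL divergence suitably adapted through the $c_k$'s. The factor-of-two comparison $\Gamma_{1/2}\ge \Gamma^*/2$ for $\beta=1/2$ then gives $\Gamma^*\le 2\Gamma_{\bm\mu}$ and therefore $P(\hat k^{(T)}_{\mathcal A}\ne k^*)\ge \exp(-T(2\Gamma_{\bm\mu}+o(1)))$, as required. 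The main obstacle will be translating Russo's almost-sure posterior-convergence statement into the frequentist error rate for the plug-in estimator $\hat k^{(T)}_{\mathcal A'}$ and propagating the multipliers $c_k$ through the sufficient-exploration and allocation-convergence lemmas without inflating constants; both are conceptually standard but require careful bookkeeping, since the BAI literature typically states its results without such rescalings.
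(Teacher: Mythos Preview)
Your approach is essentially the same as the paper's: the paper's proof is a two-sentence reduction of MAP identification under a uniform prior to Bernoulli best-arm identification, followed by a direct citation to Theorem~1 of \citet{russo2016simple} for both the rate and the lower bound. Your proposal is considerably more detailed---explicitly tracking the volume rescalings $c_k$, separating the upper and lower bounds, and flagging the translation from Russo's almost-sure posterior-convergence statement to the frequentist error probability $P(\hat k^{(T)}\neq k^*)$---but these are refinements of the same reduction the paper invokes rather than a different argument.
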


\proof See  Section \ref{sec:MAP_proof} in the supplement.

Theorem \ref{thm:map_rate} is a consequence of the analysis of top-two Thompson sampling in \citet{russo2016simple}, and shows that Algorithm \ref{alg:map_abc} identifies the optimal mode at nearly the optimal rate, up to a factor of 2 in the exponent. This factor can be nullified when $b$ is chosen with oracle knowledge, or adaptively learned during the course of the algorithm \citep{russo2016simple}. 

{As an example, in Figure \ref{fig:trajectory-map}, we {apply} Algorithm \ref{alg:map_abc} with TS and Top-two TS (TTTS) on a toy example with a finite parameter space $\Theta=\{\theta_j\}_{j=1}^{50}$ equipped with a uniform prior. As the algorithm progresses, it proposes the parameter value with maximal $\varepsilon$-posterior density $(\theta_{28})$ more frequently and  with an increasing rate. For more details, see Section \ref{sec:alg3_MAP_details}.}
}

\begin{figure}
    \centering
    \includegraphics[width=0.8\textwidth, height=7cm]{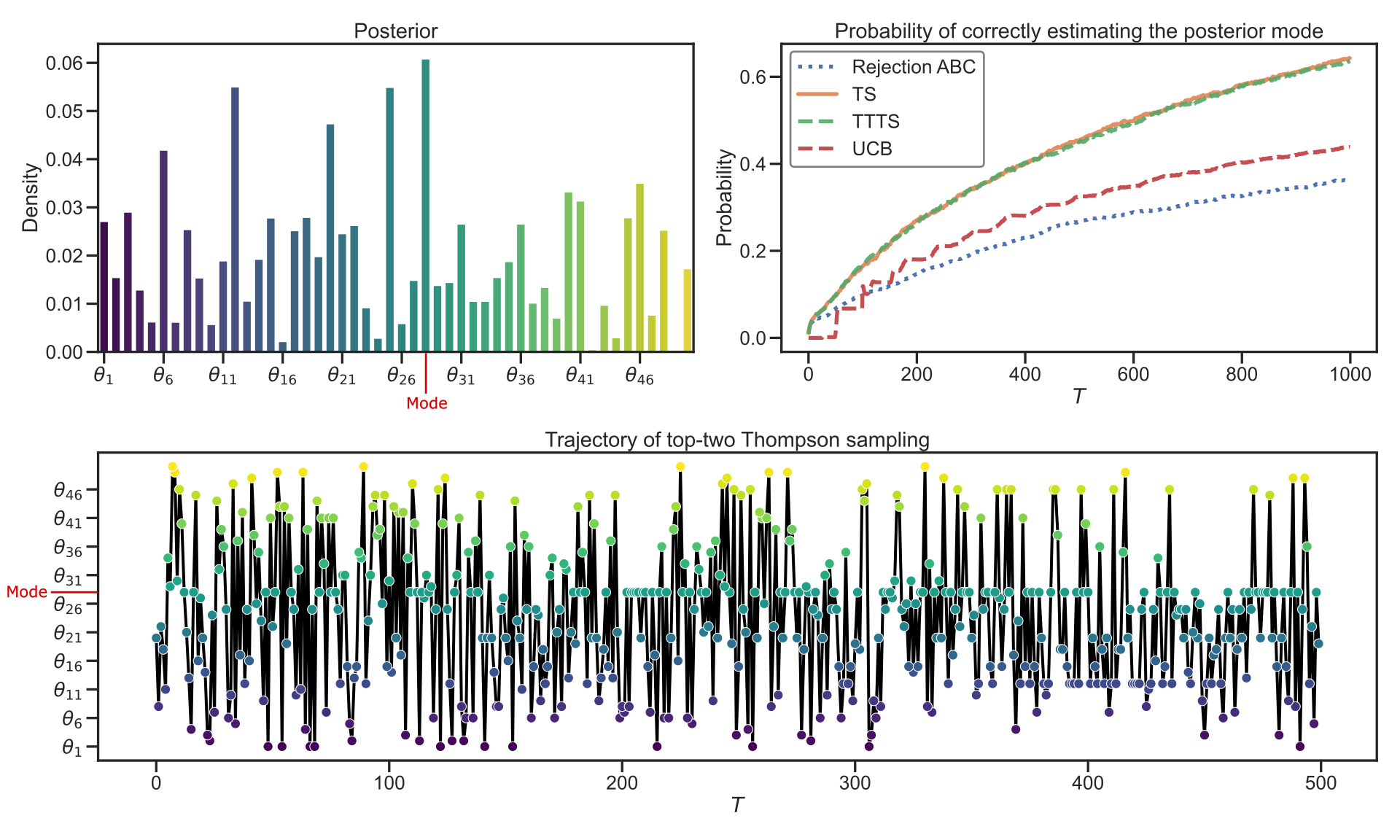}
    \caption{Algorithm \ref{alg:map_abc} on a parameter space with   cardinality 50. Upper Left:  the ABC acceptance rate for each parameter (which equals the $\epsilon$-posterior under the uniform prior). Upper Right: Comparison of the average probability of correctly identifying the mode. Lower: Trajectory of the proposed parameters.}
    \label{fig:trajectory-map}
\end{figure}

\subsection{Outer Loop: Adaptive Discretization}

The partitioning approach laid out in Section \ref{sec:outer} may also be applied to the MAP estimation problem. Similarly, we discretize the parameter space into a finite set of bins using a tree-based partitioning scheme.
However, unlike for ABC posterior sampling, we prefer finer partitions (such as BART with many trees or CART without too much regularization) that are fine enough to isolate the posterior mode in a small box.
To this end, dyadic partitioning based on sequentially refining the  box $\hat k^{(T)}$  after each Outer Loop round might be preferred. 


\begin{figure}
    \centering
    \includegraphics[width=0.8\textwidth]{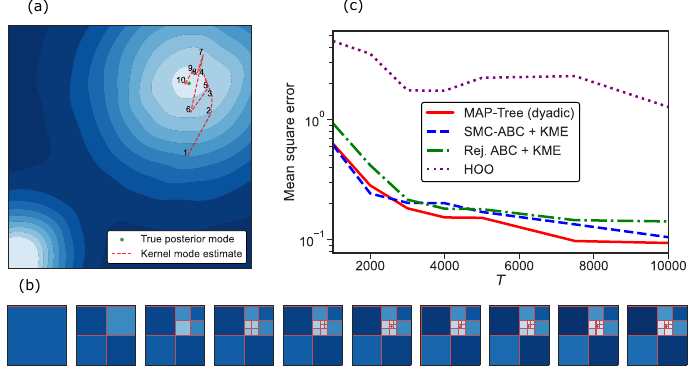}
    \caption{ (a) Evolution of MAP estimates when using Algorithm \ref{alg:seqtreemap}. The posterior mode is estimated via a kernel mode estimate of accepted samples at the final $\varepsilon$ level. (b) The progression of the dyadic partition over the course of the algorithm. (c) Average estimation error over 100 iterations for various methods of estimating the posterior mode. }
    \label{fig:map-progression-dyadic-3split-20k-gauss2d}
\end{figure}

\medskip

The nested structure of Algorithm \ref{alg:seqtreemap} for MAP estimation is very similar to Algorithm \ref{alg:seqtreeabc}, and is presented in Section \ref{sec:ts} in the Supplement. 
At the end of the algorithm, the question remains how to estimate the mode $\bm\theta^*$.  
When the dimensionality of the parameter space is not too large, it is feasible to maximize a kernel density estimate of the accepted samples at the final $\varepsilon$ level to estimate the mode of the continuous posterior distribution. One could also treat the center of the bin $\hat k^{(T)}$ after several Outer Loops as a point estimate of the mode, and the remainder of the box as a ``confidence" region. {In Figure \ref{fig:map-progression-dyadic-3split-20k-gauss2d}, we apply Algorithm \ref{alg:seqtreemap} using dyadic partitioning in the context of {Example \ref{ex:2}}. We can see that the algorithm can closely estimate the true posterior mode with more accuracy than its counterparts. See Section \ref{sec:MAP_fig_details} in the Supplement for more details.}

\section{Experiments}\label{sec:experiemnt}
Recovering an image from an unmasked fraction is an important problem in many machine learning applications \cite{garnelo2018conditional}. In this section, we focus on the uncertainty quantification of the masked image recovery problem on the MNIST\footnote{The MNIST database is a large database of handwritten digits $0\sim 9$ that is commonly used to train various image processing systems. The MNIST database contains $70,000$ $28\times 28$ grey images with the class (label) information, and is available at \url{http://yann.lecun.com/exdb/mnist/}.} dataset \citep{lecun2010mnist} with deep neural network generative models.

We assume a deterministic image generator $G$, e.g., a decoder, s.t. $\bm X = G(\bm\theta)+\bm \epsilon,$ where $\bm\theta\sim N(0,I_{d_{\bm\theta}})$ and $\bm \epsilon\sim N(0, \sigma_0^2I_{d_x})$ for some $d_{\bm\theta}$, $d_x$ and $\sigma^2_0>0$. Here, the randomness of the image $\bm X$ comes from the latent vector $\bm\theta$, our parameter of interest, and the added noise $\bm \epsilon$. As in Figure \ref{fig:masked}, we observe only a part of data $\bm X_{\rm obs}$ of $\bm X=(\bm X_{\rm obs},\bm X_{\rm mask})'.$ By using a posterior sample $\tilde{\bm\theta}\sim \pi(\bm\theta|\bm X_{\rm obs})$, we can generate possible full images via $\wt{\bm X}_{\tilde{\bm \theta}} = G(\tilde{\bm \theta})+\bm \epsilon$, $\bm \epsilon\sim N(0,\sigma_0^2I_{d_x}).$ For our experiments, we have obtained the generator $G$ (decoder) by training an autoencoder with $\sigma_0 = 0.05$ by using the label {(digit) }information similar to \cite{makhzani2015adversarial, tolstikhin2017wasserstein}, {such that each digit number forms a cluster in $\Theta$ (see, Figure \ref{fig:gen-map-resultster-label} (a), and for more details, Section \ref{sec:MNIST_details})}. 
\begin{figure}
    \centering
    \includegraphics[width=\linewidth]{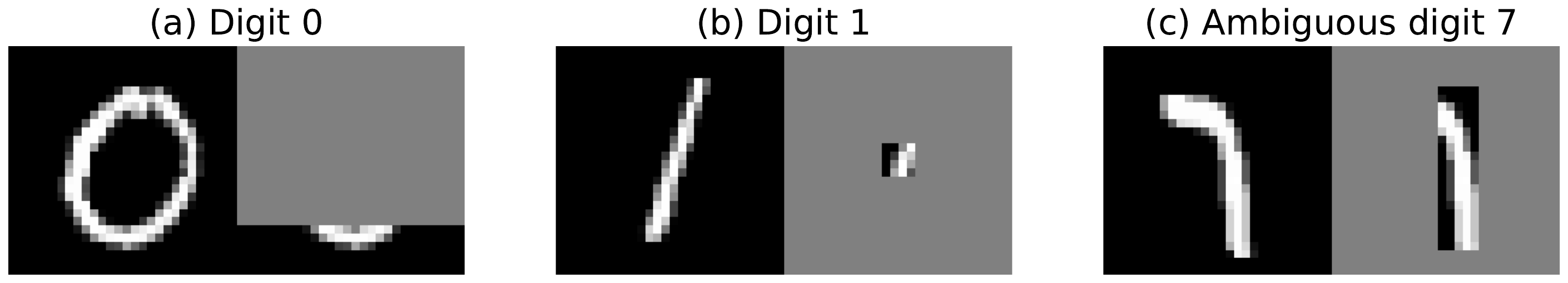}
    \caption{MNIST images and their unmasked portions.}
    \label{fig:masked}
\end{figure}
We experimentally investigate the performance of ABC-Tree {(with $\omega^1$ in \eqref{eq:p_omega} as the utility function)}, comparing it to rejection ABC and SMC-ABC as baselines, all running with a budget off $T=10^5$ simulator evaluations. For MAP  estimation, we maximize a kernel density estimate for the baseline methods to estimate the posterior mode, and compare to MAP-Tree. For SMC-ABC and ABC-Tree (and MAP-Tree), we set the tolerance progression rule $g(\varepsilon)=0.9\varepsilon$ and quota $B=1000$ according to standard heuristics. We note that each method may potentially respond better or worse to different hyperparameter configurations. As a reference distribution, we run rejection ABC for $T=10^8$ samples, regarding the 1000 samples with lowest discrepancies as an approximated true posterior sample. For the partitioning, we use a classification tree (CART, \cite{Breiman1984ClassificationAR}) with maximum number of leaves bounded by 1000. 


\begin{figure}
    \centering
    \includegraphics[width=\linewidth]{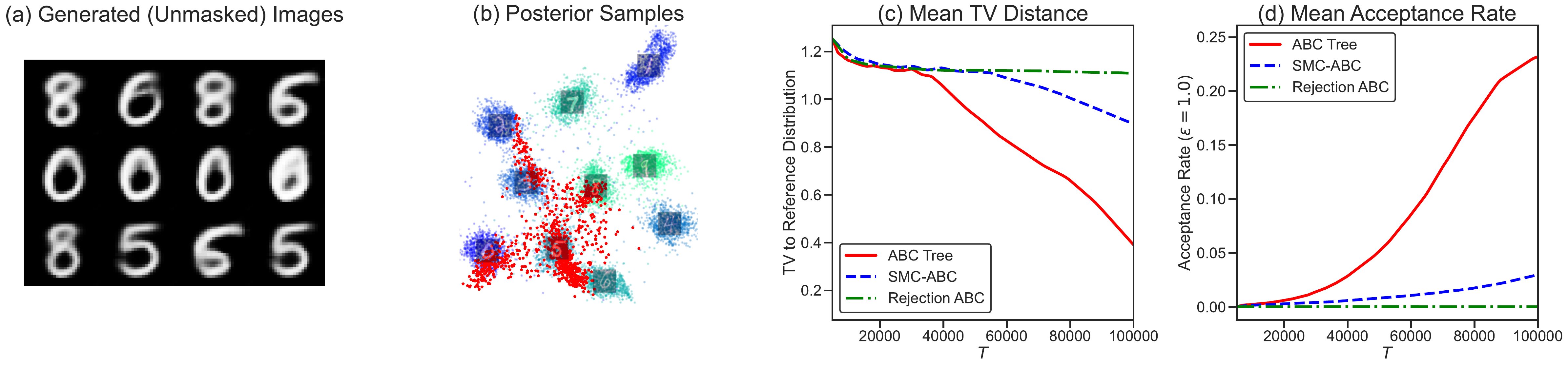}
    \caption{{(a) Generated $\tilde{\bm X}_{\tilde \theta}$ from ABC-Tree with low discrepancies. (b) ABC-Tree posterior samples (red) visualized using iGLoMAP dimension reduction of $\Theta$. (c) ABC-Tree converges quickly to the reference distribution. (d) Comparison of acceptance rates. }}
    \label{fig:exp_thesis} 
\end{figure}

\subsection{Posterior Sampling}\label{sec:post_sampling}
We consider a latent dimension $d_{\bm \theta}=5$, for which the trained autoencoder provides good reconstruction performance. {The prior is set as a uniform distribution over $[-8,8]^{5}$ based on the distribution of encoded data in $\Theta$.} We mask an image with label 0 as shown in Figure \ref{fig:masked} (a), where the visible portion defines $\bm X_{\rm obs}$. The generated examples by ABC-Tree are shown in Figure \ref{fig:exp_thesis} (a). The diversity of these numbers implies that the posterior potentially can be multimodal in a five-dimensional space, which may add difficulty when sampling. To see how the sampling process captures such multimodality, we apply iGLoMAP \citep{iglomap}, a non-linear dimension reduction technique designed to capture both global and local structure in the data. This two-dimensional visualization in Figure \ref{fig:exp_thesis} (b) shows that our method catches multimodality in the posterior. Meanwhile, as a proxy for the distance between the reference distribution and the sampled posteriors, we can compute the empirical categorical distribution on the digits $\{0,...,9\}$, where each generated ${\tilde{\bm\theta}}$ receives an estimated label by a 10-nearest neighbor classifier trained on the encoded space of the MNIST dataset. Note that as shown in Figure \ref{fig:exp_thesis} (c), the distance to the reference categorical distribution decreases more rapidly for ABC-Tree than baselines. Furthermore, as in Figure \ref{fig:exp_thesis} (d), the acceptance rate increases much faster for ABC-Tree ($\varepsilon = 0.81$). The categorical distribution of each method for the final 1000 samples is depicted in Figure \ref{fig:wanted}, where the distribution of ABC-Tree ($T=10^5$) most closely resembles that of rejection ABC with $T=10^8$. In Figure \ref{fig:others_based}, we also show a dimension reduction of the baselines. The final 1,000 samples of each method are displayed in Figure \ref{fig:genmod1} in the supplement. {Section \ref{sec:MNIST_details} provides details on the choice of $d_{\bm\theta}$, the application of iGLoMAP, and the encoded space of MNIST. Section \ref{sec:mnist_additional} presents further empirical investigation of ABC-Tree on MNIST regarding the choice of the utility function and the tree size limits.}


\begin{figure}
    \centering
    \includegraphics[width=\linewidth]{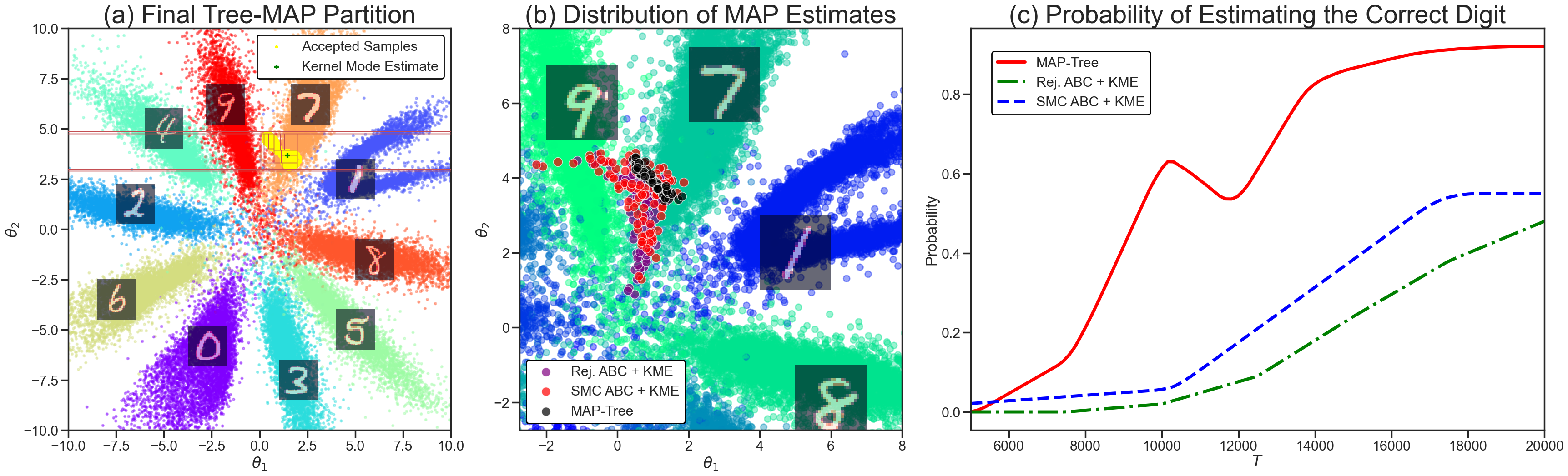}
    \caption{{MAP Estimation of a handwritten digit from an unmasked portion. (a) Final partition and accepted samples from Tree-MAP. (b) Distributions of MAP estimates in $\Theta$ from different methods over repeated trials. (c) Rates of correctly estimating the digit (7).}}
    \label{fig:gen-map-resultster-label}
\end{figure}

\subsection{Likelihood-free MAP estimation}
For MAP estimation, we focus on $d_{\bm\theta}=2$ as it allows for visualization 
of {$\Theta$}, the posterior samples, and the tree splits as well. {The prior is a uniform distribution over $[-12,12]^{2}$ to encompass the distribution of encoded data.} As {$\Theta$ with $d_{\bm\theta}=2$} is not sufficient for embedding the MNIST dataset {(see, Section \ref{sec:MNIST_details})}, the trained generator $G$ can represent only a few stereotypical digit shapes for each number. In this situation, the posterior distribution would resemble a unimodal distribution. 

In order to assess our methodology for MAP estimation, we chose an example image where the digit can plausibly be a member in multiple classes. In Figure \ref{fig:masked} (c), we see an example of a handwritten digit which resembles a 7, though could plausibly resemble a 9, or a 1. A pilot run of rejection ABC with a large amount ($10^8$) of samples and kernel mode estimation confirms that the (approximated) posterior mode lies in a region of {$\Theta$} surrounded by data points with label 7. In Figure \ref{fig:gen-map-resultster-label} (a), we see that the final partition used by MAP-Tree zooms in on highly relevant regions in $\Theta$, which are the most likely to produce masked images resembling our observed data. In (b), we visualize the distribution of MAP estimates in {$\Theta$} for each method over 100 repeated trials, observing a more narrow distribution localized within a region corresponding to the correct digit for MAP-Tree. {From each MAP estimate in $\Theta$, we compute the estimated digit by a 10-nearest neighbor classification trained on the encoded original dataset, as in Section \ref{sec:post_sampling}.} In (c), we see that over 100 repeated trials, the estimated digits resulting from MAP-Tree are classified as the correct digit 7 with significantly higher probability when compared with baseline ABC methods and kernel mode estimation. The full progression of MAP-Tree on this example can be seen in Figure \ref{fig:mapgen1map}.


\section{Concluding Remarks}\label{sec:concluding}

In this work, we  developed a new way of approaching likelihood-free inference and MAP estimation by combining tree partitioning with active learning. This allows us to encourage ABC sampling from the most supported regions of the parameter space. ABC-Tree consists of an outer loop (refining a partition of the parameter space) and an inner loop (refining an ABC proposal on a given partition).  We illustrated our approach on a difficult problem of uncertainty quantification over masked image fractions. In Section \ref{sec:additional_exp} in the Supplement, we also provide simulation results on the Heston stochastic volatility model \cite{heston1993closed} and the Lotka-Volterra model. We believe that our development provides a possible avenue for scaling up ABC to high-dimensional settings, and can potentially inspire further research bridging trees and bandit algorithms.

\setlength{\bibsep}{0pt plus 0.3ex}

\bibliography{sources}

\newpage




\appendix
\startcontents[appendix]
\printcontents[appendix]{l}{1}{\section*{Appendix Table of Contents}}
\pagebreak

\section{Proof of Theorems}
\subsection{Proof of Theorem \ref{prop:regretub}}\label{sec:theorem1_proof}

\paragraph{Setting and notation.} Denote the simplex on $K$ elements as $\mathcal{S}^K$. Consider a $K$-armed bandit setup with Bernoulli reward distributions with means $\mu_1,\ldots,\mu_K$. $\bm{\mu}$ represents our ABC acceptance rate, where $\mu_k=\mathbb{E}[ABC(\bm{X},\Omega_k,\varepsilon)]$, and thus also our unnormalized $\varepsilon$-likelihood. Let $\boldsymbol\pi\in\mathcal{S}^K$ be the uniform prior, $\pi_k=1/K$ for $k=1,\ldots,K$. Define 
\[
p_k:=\frac{\pi_k\mu_k}{\sum_{j=1}^K\pi_j\mu_j}\,
\]
for $k=1,\ldots,K$. Thus ${\bm {p}}$ represents the target posterior distribution on our parameter space. Let the function ${\bm {q}}^*:\mathcal{S}^K\to\mathcal{S}^K$ denote the optimal proposal distribution for posterior ${\bm {p}}$ and optimality criterion $\omega_{\bm{p}}$, given by ${\bm {q}}^*({\bm {p}})=\argmax_{{\bm {q}}\in\mathcal{S}^K}\omega_{\bm{p}}({\bm {q}})$. We characterize a problem instance by $\nu=(\bm{\mu}, \bm{\pi})$. We define the regret of algorithm $\mathcal{A}$ on problem instance $\nu$ after $T$ iterations as
\[
R_T(\mathcal{A}, \nu) := \omega_{\bm{p}}({\bm {q}}^*({\bm {p}}))-\frac{1}{T}\sum_{t=1}^T\mathbb{E}\left[\omega_{\bm{p}}\left({\bm {q}}^{(t)}\right)\right]\,.
\]
where the expectation is with respect to the probability measure induced by the interplay of algorithm $\mathcal{A}$ and problem instance $\nu$. The notation mirrors that of the canonical bandit environment defined in \citet{lattimore2020bandit}.

\begin{thm}
    Consider a slightly modified version of Algorithm \ref{alg:qe_abc} that employs an initialization phase by first playing each arm  $m=K\gamma^{-1}\log^2(T)$ times. 
For $Y$ defined in \eqref{eq:mean_reward}, let $\mu_k=P(Y=1\mid \Omega_k)$ be the ABC acceptance rate when sampling from $\Omega_k$. Let $\pi_k=\pi(\Omega_k)=1/K$ be the prior mass on $\Omega_k$
and let $\mathcal{I}_\gamma$ be a set of probability vectors $\bm p=(p_1,\dots,p_K)'$ such that $p_k\propto \pi_k \mu_k$ when $\min\limits_{1\leq k\leq K}\mu_k\geq \gamma>0$. Denoting the modified algorithm as $\mathcal{A'}$, we have (for sufficiently large $T$)
\[
\sup_{\bm p\in\mathcal{I}_\gamma} R_T(\mathcal{A}',\bm p) \lesssim \frac{K^2 \log^2(T)}{\gamma T}\quad\text{and}\quad
\inf_{\mathcal{A}}\sup_{\bm p\in\mathcal{I}_\gamma} R_T(\mathcal{A},\bm p)\gtrsim \frac{K^2}{\gamma\,T},
\]
where the infimum is taken over all possible strategies $\mathcal{A}$ for choosing bins $\Omega_k$.
\end{thm}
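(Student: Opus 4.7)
The proof splits into the two asymptotic inequalities in the statement, and both hinge on converting errors in the estimated binned posterior $\hat{\bm{p}}^{(t)}$ into regret in $\omega_{\bm{p}}$ through the Lipschitz and reverse-Lipschitz properties of Assumption \ref{ass:2}. The plan is to chain, for the upper bound, the inequalities $\omega_{\bm{p}}(\bm{q}^*(\bm{p}))-\omega_{\bm{p}}(\hat{\bm{q}}^{(t)}) \lesssim \|\bm{q}^*(\bm{p})-\bm{q}^*(\hat{\bm{p}}^{(t)})\|_2^2 \lesssim \|\bm{p}-\hat{\bm{p}}^{(t)}\|_2^2$, and for the lower bound, the reverse direction on $\omega_{\bm{p}^0}$ applied at a carefully perturbed alternative.

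\textbf{Upper bound.} I would split the horizon into an initialization of length $Km$ and a subsequent exploitation phase. The instantaneous regret during initialization is bounded by a constant (since $\omega$ is bounded on $\mathcal{S}^K$), contributing $Km/T$, which with $m=K\gamma^{-1}\log^2 T$ matches the $K^2\log^2(T)/(\gamma T)$ target. For $t>Km$, applying Hoeffding's inequality to the Beta posterior mean $\eta_k^{(t)}=\alpha_k^{(t)}/(\alpha_k^{(t)}+\beta_k^{(t)})$, which is essentially the empirical acceptance rate after at least $m$ pulls, gives $|\eta_k^{(t)}-\mu_k|\lesssim \sqrt{\log T/m}$ uniformly in $k,t$ with probability $\geq 1-K/T$. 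Since $\mu_k\geq\gamma$ makes the normalizer $\sum_j\pi_j\mu_j$ bounded below, propagating this through the map $\bm{\mu}\mapsto\bm{p}$ yields $\|\hat{\bm{p}}^{(t)}-\bm{p}\|_2^2\lesssim \log T/(m\gamma^2)$. Combining with the Lipschitz bounds of Assumption \ref{ass:2} and averaging over $t$ gives an exploitation contribution dominated by the initialization term. The low-probability failure event under Hoeffding contributes $O(K/T)$ and is absorbed into the rate.

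\textbf{Lower bound.} Here the plan is the standard multi-hypothesis (Fano) minimax argument adapted to $\mathcal{I}_\gamma$. I would take the base instance $\bm{\mu}^{(0)}=\gamma\bm{1}$, inducing the uniform posterior $\bm{p}^0$, and construct $K$ alternatives $\bm{\mu}^{(j)}$ obtained by perturbing the $j$-th coordinate by $\epsilon$. The chain rule for KL under the bandit canonical model gives $\mathrm{KL}(P_{\nu_0}^T\,\|\,P_{\nu_j}^T)=\mathbb{E}_0[N_j(T)]\cdot\mathrm{KL}(\mathrm{Bern}(\gamma),\mathrm{Bern}(\gamma+\epsilon))$, which is of order $\epsilon^2\mathbb{E}_0[N_j(T)]/\gamma$. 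Averaging over $j$ exploits $\sum_j\mathbb{E}_0[N_j(T)]=T$, so the average KL is of order $\epsilon^2 T/(K\gamma)$; choosing $\epsilon^2\asymp K\gamma/T$ keeps it bounded. Fano's inequality then forces any algorithm to confuse the true index with constant probability, so on some alternative its proposal must satisfy $\hat{\bm{q}}^{(t)}\approx \bm{q}^*(\bm{p}^0)$ rather than $\bm{q}^*(\bm{p}^{(j)})$. Invoking the \emph{lower-bound} part of Assumption \ref{ass:2} translates this mismatch into instantaneous regret at least of order $\|\bm{p}^{(0)}-\bm{p}^{(j)}\|_2^2$. A direct Taylor expansion of $\bm{\mu}\mapsto\bm{p}$ around $\gamma\bm{1}$ shows $\|\bm{p}^{(0)}-\bm{p}^{(j)}\|_2^2\asymp\epsilon^2/\gamma^2$ up to $K$-factors; substituting $\epsilon^2\asymp K\gamma/T$ yields the claimed $K^2/(\gamma T)$ rate.

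\textbf{Main obstacle.} The most delicate step is the lower bound: the perturbation family must be selected so that $\|\bm{p}^{(0)}-\bm{p}^{(j)}\|_2$ is as large as possible subject to indistinguishability over $T$ rounds, and this optimization has to be executed carefully (balancing the normalization-induced shrinkage of the perturbation against the per-round Bernoulli KL near $\gamma$) to obtain exactly the $K^2$ factor rather than the weaker $K$ that a naive two-point Le Cam argument produces. A second subtle point is verifying that the optimal-proposal mapping $\bm{q}^*$ is Jacobian-regular near $\bm{p}^0$ so that the reverse Lipschitz bound of Assumption \ref{ass:2} is genuinely exercised at the perturbed instances constructed in the Fano step.
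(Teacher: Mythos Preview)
Your upper-bound argument has a genuine gap. You bound the exploitation-phase error using only the $m=K\gamma^{-1}\log^2 T$ initialization pulls, obtaining via Hoeffding that $|\eta_k^{(t)}-\mu_k|\lesssim\sqrt{\log T/m}$ and hence $\|\hat{\bm p}^{(t)}-\bm p\|_2^2\lesssim \log T/(m\gamma^2)=1/(K\gamma\log T)$. But this bound is \emph{constant in $t$}, so after averaging over $t\in(Km,T]$ the exploitation contribution is still of order $1/(K\gamma\log T)$, which is \emph{not} dominated by the initialization term $Km/T=K^2\log^2 T/(\gamma T)$ once $T\gg K^3\log^3 T$. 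The claimed inequality ``exploitation contribution dominated by the initialization term'' therefore fails for large $T$. The paper closes this gap differently: it shows (its Lemma~\ref{lemma:conc}) that after initialization the play counts grow linearly, $P(T_k^{(t)}\le p_k t/4)\le T^{-1}+e^{-p_k t/64}$, so that conditionally on $T_k^{(t)}$ one has $\mathbb{E}[(\hat\mu_k^{(t)}-\mu_k)^2\mid T_k^{(t)}]=\mu_k(1-\mu_k)/T_k^{(t)}\lesssim \|\bm\mu\|_1/(p_k t)$. Summing $1/t$ over $t$ contributes only a $\log T$ factor, which is what produces the correct rate. Your Hoeffding-from-$m$-pulls argument cannot recover this because it never uses the post-initialization growth of $T_k^{(t)}$.

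On the lower bound, the paper actually uses a two-point Le~Cam argument, not Fano: it fixes the base instance, lets $i_0$ be the arm with the smallest $\mathbb{E}[T_{i_0}^{(T)}]\le T/K$, and perturbs only that arm (while slightly shrinking the others to keep $\|\bm\mu\|_1$ fixed). Your Fano route is a reasonable alternative in spirit, but your $K$-accounting is off in two places. First, with base $\bm\mu^{(0)}=\gamma\bm 1$ the normalizer is $\sum_l\mu_l=K\gamma$, so a perturbation of size $\epsilon$ in one coordinate moves $p_j$ by $\approx\epsilon/(K\gamma)$, giving $\|\bm p^{(0)}-\bm p^{(j)}\|_2^2\asymp \epsilon^2/(K^2\gamma^2)$, not $\epsilon^2/\gamma^2$. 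Second, even taking your own expression at face value, $\epsilon^2/\gamma^2$ with $\epsilon^2\asymp K\gamma/T$ gives $K/(\gamma T)$, not $K^2/(\gamma T)$. With the correct normalization the Fano construction as you stated it delivers only $1/(K\gamma T)$. Getting the full $K^2$ requires either working with a base whose $\|\bm\mu\|_1$ is of order $\gamma$ (as the paper does) rather than $K\gamma$, or a more careful multi-coordinate perturbation; the ``balancing'' you flag as the main obstacle is exactly where your sketch does not yet land.
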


\begin{proof} We will first prove the upper bound, and later the lower bound.

\paragraph{Upper Bound}
First, note that $\hat{\bm{p}}^{(t)}\to \bm{p}$ as $t\to\infty$ (and $T\to\infty$). This is because Lemma \ref{lemma:conc} ensures that each arm is played atleast a constant (independent of $T$) fraction of $T$ times, ensuring $\hat{\mu}^{(t)}_k\to\mu_k$ for all $k$ and in turn $\hat{\bm{p}}^{(t)}\to\bm{p}$. This implies that for sufficiently large $T$, there exists $T_0$ such that $t>T_0$ implies $\hat{\bm{p}}\in B_\delta(\bm{p})$. Define $T_1=\max\{T_0, Km\}$. 

We now apply Assumption \ref{ass:2} and decompose the regret into that which occurred up to round $T_1$, and that which occurred after.
    \begin{align*}
        R_T(\mathcal{A'},\bm{p})&=\frac{1}{T}\sum_{t=1}^T \left(\omega_{\bm{p}}\left({\bm {q}}^*({\bm {p}})\right)-\mathbb{E}\left[\omega_{\bm{p}}\left(\bm {q}^*\left(\hat{{\bm {p}}}^{(t)}\right)\right)\right]\right) \\
        &\leq \frac{C'\,K^2\,\log^2(T)}{\gamma\,T} + \frac{C'}{T}\sum_{t=T_1}^T\mathbb{E}\left\|{\bm {q}}^*({\bm {p}})-{\bm {q}}^*\left(\hat{{\bm {p}}}^{(t)}\right)\right\|_2^2
    \end{align*}
    We may bound the second term as
    \begin{align*}
        \frac{C'}{T}\sum_{t=T_1+1}^T\mathbb{E}\left\|{\bm {q}}^*({\bm {p}})-{\bm {q}}^*\left(\hat{{\bm {p}}}^{(t)}\right)\right\|_2^2 &\leq 
        \frac{C''}{T}\sum_{t=T_1+1}^T \mathbb{E}\left\|{\bm {p}}-\hat{{\bm {p}}}^{(t)}\right\|_2^2 \\ 
        &\leq \frac{C'''}{\|\bm{\mu}\|_1^2\,T}\sum_{t=T_1+1}^T \mathbb{E}\left\|\boldsymbol{\mu}-\hat{\boldsymbol{\mu}}^{(t)}\right\|_2^2 \\ 
        &\leq \frac{C'''K}{\|\bm{\mu}\|_1^2\,T}\sum_{t=T_1+1}^T \mathbb{E}\left\|\boldsymbol{\mu}-\hat{\boldsymbol{\mu}}^{(t)}\right\|_2^2 \\ 
        &= \frac{C'''K}{\|\bm{\mu}\|_1^2\,T}\sum_{t=T_1+1}^T \sum_{k=1}^K \mathbb{E}_{T^{(t)}_k}\mathbb{E}_{\hat{\mu}_k^{(t)}\mid T_k^{(t)}}\left[\left(\mu_k-\hat{\mu}_k^{(t)}\right)^2\right] \\
        &\leq \frac{C'''K}{\|\bm{\mu}\|_1^2\,T}\sum_{t=T_1+1}^T \sum_{k=1}^K \mathbb{E}_{T_k^{(t)}}\left[\frac{\mu_k(1-\mu_k)}{T_k^{(t)}}\right]\,,
    \end{align*}
    using Lemma \ref{lem:linalg}, and the fact that $\hat{\mu}_k^{(t)}\mid T_k^{(t)}=n$ is distributed as the sample mean of $n$ Bernoulli random variables with mean $\mu_k$. If $\mu_k=0$, the $k$th term in the sum will be 0 for every $t$. We proceed to consider when $\mu_k>0$, and thus $p_k>0$. In such cases, we have
    \begin{align*}
         & \mathbb{E}_{T_k^{(t)}}\left[\frac{\mu_k(1-\mu_k)}{T_k^{(t)}}\right] \\
         \lesssim\quad & \mathbb{E}_{T_k^{(t)}}\left[\frac{\mu_k(1-\mu_k)}{T_k^{(t)}}\mid T_k^{(t)} > \frac{p_kt}{4}\right]P(T_k^{(t)} > \frac{p_kt}{4}) + \mathbb{E}_{T_k^{(t)}}\left[\frac{\mu_k(1-\mu_k)}{T_k^{(t)}}\mid T_k^{(t)} \leq \frac{p_kt}{4}\right]P(T_k^{(t)} \leq \frac{p_kt}{4})\\
         \lesssim \quad&\frac{\mu_k(1-\mu_k)}{p_k\,t} + P\left(T^{(t)}_k \leq \frac{p_k t}{2}\right) \\
         \lesssim\quad & \frac{\|\bm{\mu}\|_1}{t} + T^{-1}+\exp(-p_kt/64)
    \end{align*}
    using Lemma \ref{lemma:conc}. Therefore, summing over $t$, we have
    \[
    \sum_{t=T_1+1}^T \mathbb{E}_{T_k^{(t)}}\left[\frac{\mu_k(1-\mu_k)}{T_k^{(t)}}\right] \lesssim \|\bm{\mu}\|_1\log(T)\,,
    \]
    for $T$ sufficiently large (in particular, $\log(T)\geq \|\bm{\mu}\|_1^{-1}(\exp(p_k/64)-1)^{-1}$). Therefore, this second term is upper bounded as 
    \[
    \frac{1}{T}\sum_{t=T_1+1}^T\mathbb{E}\left\|{\bm {q}}^*({\bm {p}})-{\bm {q}}^*\left(\hat{{\bm {p}}}^{(t)}\right)\right\|_2^2  \lesssim \frac{K^2\log(T)}{\gamma\,T}
    \]
    and thus
    \[
    R_T(\mathcal{A}',\bm{p})\lesssim \frac{K^2\,\log^2(T)}{\gamma\,T}
    \]
    as desired.


\paragraph{Lower bound} First consider the instance $\bm{p}$ in $\mathcal{I}_\gamma$ given by $p_k\propto \pi_k\mu_k$, where $\bm{\pi}$ is the uniform prior $\boldsymbol\pi=(1/K,\ldots,1/K)^\top$ and the mean acceptance rates are given by $\boldsymbol\mu = (\gamma/K,\ldots,\gamma/K)^\top$. Now fix any algorithm $\mathcal{A}$, and choose $i_0 = \argmin_{j=1,\ldots,K} \mathbb{E}_{\boldsymbol\mu}[T_j^{(T)}]$. Because $\sum_{j=1}^K \mathbb{E}_{\boldsymbol\mu}[T_j^{(T)}] = T$, we know that necessarily
    \[
    \mathbb{E}_{\boldsymbol\mu}[T_{i_0}^{(T)}]\leq T/K.
    \]
    We then define a second instance $\bm{p}'$ in $\mathcal{I}_\gamma$ where $p_k'\propto \mu_k'\pi_k$, where 
    \[
    \boldsymbol\mu'=\left(\frac{\gamma-\Delta/(K-1)}{K},\ldots,\frac{\gamma-\Delta/(K-1)}{K},\frac{\gamma+\Delta}{K},\frac{\gamma-\Delta/(K-1)}{K},\ldots,\frac{\gamma-\Delta/(K-1)}{K}\right)^\top\,,
    \]
    adding a small $\Delta/K$ to the $i_0$th position in the vector, and slightly decreasing all other coordinates proportionately to maintain a norm of $\gamma$. We denote $\bm{p}=\bm{\mu}/\|\bm{\mu}\|_1$ and $\bm{p'}$ similarly. As a result of Assumption \ref{ass:2}, letting $\hat{\bm{q}}^{(t)}$ denote the $t^\mathrm{th}$ proposal employed by $\mathcal{A}$, we have
    \begin{align*}
    & R_T(\mathcal{A},\bm{p}) + R_T(\mathcal{A}, \bm{p}') \\
    =\quad& \sum_{t=1}^T \left(\omega_{\bm{p}}({\bm {q}}^*({\bm {p}}))-\mathbb{E}\left[\omega_{\bm{p}}(\hat{\bm {q}}^{(t)})\right] + \omega_{\bm{p}'}(\bm{q}^*(\bm {p}'))-\mathbb{E}\left[\omega_{\bm{p}'}(\hat{\bm {q}}^{(t)})\right]\right) \\ 
    \gtrsim \quad& \mathbb{E}\|{\bm {q}}^*({\bm {p}})-\hat{\bm {q}}^{(t)}\|_2^2 + \frac{1}{T}\sum_{t=1}^T \mathbb{E}\|{\bm {q}}^*({\bm {p}}')-\hat{\bm {q}}^{(t)}\|_2^2 \\
    \gtrsim \quad& \frac{1}{T}\sum_{t=1}^T\|{\bm {p}} - {\bm {p}}'\|_2^2\,.
     \end{align*}
    This can be lower bounded as 
    \begin{align*}
    \|{\bm {p}} - {\bm {p}}'\|_2^2 &= \frac{\Delta^2}{\gamma^2}+\frac{\Delta^2}{\gamma^2(K-1)} \gtrsim \frac{\Delta^2}{\gamma^2}.
    \end{align*}
    Let $\mathbb{P}_{\bm{p}, \mathcal{A}}$ and $\mathbb{P}_{\bm{p}', \mathcal{A}}$ be the probability measures on the data resulting from of the interplay of algorithm $\mathcal{A}$ and reward distributions $\bm{p}$, $\bm{p}'$ respectively. We upper bound the KL-divergence between these measures as
    \begin{align*}
    D_{\mathrm{KL}}(\mathbb{P}_{\bm{p},\mathcal{A}} \,\|\,\mathbb{P}_{\bm{p}',\mathcal{A}}) &= \sum_{k=1}^K \mathbb{E}_{\bm{p}}[T_{k}^{(T)}]\,D_\mathrm{KL}(\mathrm{Ber}(\mu_k) \| \mathrm{Ber}(\mu'_k)) \\
    &\leq \frac{T}{K} D_\mathrm{KL}(\mathrm{Ber}(\mu_{i_0}) \| \mathrm{Ber}(\mu'_{i_0})) + T D_\mathrm{KL}(\mathrm{Ber}(\mu_{j}) \| \mathrm{Ber}(\mu'_{j})) \text{ for $j\neq i_0$}\\
    &\leq \frac{T}{\gamma}\left(\frac{\Delta}{K}\right)^2 + \frac{TK}{\gamma}\left(\frac{\Delta}{K}\right)^2 \\ 
    &\lesssim \frac{T \Delta^2}{K^2 \gamma}\,.
    \end{align*}
    Choosing $\Delta\asymp K\sqrt{\frac{\gamma}{T}}$ thus implies that the two bandit environments $\bm{p}$ and $\bm{p}'$ are indistinguishable. This implies via a two-point argument that the regret has lower bound
    \[
    \inf_{\bm{p}}\sup_{\mathcal{A}} R_T(\mathcal{A},\bm{p}) \gtrsim \frac{K^2}{\gamma T}\,.
    \]
\end{proof}

\begin{lemma}\label{lem:linalg}
For any nonzero vectors $\bm{u},\bm{v}\in\mathbb{R}^K$, we have
\[
\left\|\frac{\bm{u}}{\|\bm{u}\|_1}-\frac{\bm{v}}{\|\bm{v}\|_1}\right\|_2 \leq \frac{1+\sqrt{K}}{\|\bm{u}\|_1}\|\bm{u}-\bm{v}\|_2
\]
\end{lemma}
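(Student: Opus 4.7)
The plan is to use the standard add-and-subtract-a-hybrid-term trick that converts a difference of normalized vectors into two more tractable pieces. Specifically, I would write
\[
\frac{\bm{u}}{\|\bm{u}\|_1}-\frac{\bm{v}}{\|\bm{v}\|_1} \;=\; \underbrace{\frac{\bm{u}-\bm{v}}{\|\bm{u}\|_1}}_{\text{(I)}} \;+\; \underbrace{\bm{v}\left(\frac{1}{\|\bm{u}\|_1}-\frac{1}{\|\bm{v}\|_1}\right)}_{\text{(II)}},
\]
and then apply the triangle inequality for $\|\cdot\|_2$ to bound the two pieces separately.

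The first piece is immediate: $\|\text{(I)}\|_2 = \|\bm u - \bm v\|_2/\|\bm u\|_1$, giving the leading ``$1$'' in the constant $1+\sqrt K$. For the second piece, I would rewrite the scalar factor as $(\|\bm v\|_1-\|\bm u\|_1)/(\|\bm u\|_1\|\bm v\|_1)$ so that
\[
\|\text{(II)}\|_2 \;=\; \frac{\bigl|\|\bm v\|_1-\|\bm u\|_1\bigr|}{\|\bm u\|_1\|\bm v\|_1}\,\|\bm v\|_2.
\]
Two standard inequalities then do all the work: the reverse triangle inequality in $\ell_1$ combined with the norm equivalence on $\mathbb R^K$ gives $\bigl|\|\bm v\|_1-\|\bm u\|_1\bigr|\le \|\bm u-\bm v\|_1 \le \sqrt K\,\|\bm u-\bm v\|_2$, and the comparison $\|\bm v\|_2\le \|\bm v\|_1$ cancels the $\|\bm v\|_1$ in the denominator. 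This yields $\|\text{(II)}\|_2 \le \sqrt K\,\|\bm u-\bm v\|_2/\|\bm u\|_1$, contributing the $\sqrt K$ term.

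Adding the two bounds produces exactly $(1+\sqrt K)\|\bm u-\bm v\|_2/\|\bm u\|_1$, as claimed. There is no real obstacle here; the only thing to be careful about is the direction of the norm-equivalence inequalities ($\|\cdot\|_1\le \sqrt K\,\|\cdot\|_2$ for $\ell_1\!\le\!\ell_2$ via Cauchy--Schwarz, and $\|\cdot\|_2\le \|\cdot\|_1$), and the choice of which of $\bm u$, $\bm v$ to pair with which denominator so that the final denominator is $\|\bm u\|_1$ rather than $\|\bm v\|_1$. Decomposing via $\bm v/\|\bm u\|_1$ (rather than $\bm u/\|\bm v\|_1$) is precisely what makes piece (II) collapse to a $\|\bm u\|_1$ denominator.
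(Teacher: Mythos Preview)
Your proposal is correct and follows essentially the same route as the paper: both insert the hybrid term $\bm v/\|\bm u\|_1$, apply the triangle inequality, and finish with the reverse triangle inequality together with the norm comparisons $\|\cdot\|_2\le\|\cdot\|_1\le\sqrt K\,\|\cdot\|_2$. If anything, your bookkeeping of where the $\sqrt K$ factor enters (via $\|\bm u-\bm v\|_1\le\sqrt K\,\|\bm u-\bm v\|_2$) is slightly cleaner than the paper's presentation.
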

\begin{proof}
We have
\begin{align*}
\left\|\frac{\bm{u}}{\|\bm{u}\|_1}-\frac{\bm{v}}{\|\bm{v}\|_1}\right\|_2 &\leq \left\|\frac{\bm{u}}{\|\bm{u}\|_1}-\frac{\bm{v}}{\|\bm{u}\|_1}\right\|_2 + \left\|\frac{\bm{v}}{\|\bm{u}\|_1}-\frac{\bm{v}}{\|\bm{v}\|_1}\right\|_2 \\
&\leq \frac{1}{\|\bm{u}\|_1}\left\|\bm{u}-\bm{v}\right\|_2 + \|\bm{v}\|_2\left|\frac{1}{\|\bm{u}\|_1}-\frac{1}{\|\bm{v}\|_1}\right| \\
&\leq \frac{1}{\|\bm{u}\|_1}\left\|\bm{u}-\bm{v}\right\|_2 + \sqrt{K}\left|\frac{\|\bm{v}\|_1}{\|\bm{u}\|_1}-1\right| \\
&\leq \frac{1}{\|\bm{u}\|_1}\left\|\bm{u}-\bm{v}\right\|_2 + \frac{\sqrt{K}}{\|\bm{u}\|_1}\left\|\bm{u}-\bm{v}\right\|_1 \\
&\leq \frac{1+\sqrt{K}}{\|\bm{u}\|_1}\|\bm{u}-\bm{v}\|_2
\end{align*}
\end{proof}

\begin{lemma}\label{lem:helper1}
After an initialization phase of size $m=K\gamma^{-1}\log^2(T)$ for each arm, for all $t>Km$, we have
\begin{equation}
P\left(T_k^{(t)} \leq \frac{p_k\,t}{2}\right)  \leq T^{-1}+\exp(-p_kt/64)\,.
\label{eq:conc}
\end{equation}
\label{lemma:conc}
\end{lemma}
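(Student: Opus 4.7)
My plan is to decompose $P(T_k^{(t)} \leq p_k t/2)$ according to whether a ``good event'' $\mathcal{G}$ holds or not. On $\mathcal{G}$, the empirical estimates $\hat{\mu}_j^{(s)}$ of the ABC acceptance rates are multiplicatively close to the true $\mu_j$'s uniformly over $j$ and $s$, so that the proposal probabilities $\hat{q}_k^{(s)}$ stay lower-bounded by a constant multiple of $p_k$, and a concentration bound on arm-pulls applies. Off $\mathcal{G}$ we absorb the probability into the first term $T^{-1}$ in the stated bound.

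The first step would be to fix the good event
\[
\mathcal{G} = \bigcap_{j=1}^K \bigcap_{s=Km+1}^T \left\{ |\hat{\mu}_j^{(s)} - \mu_j| \leq \mu_j/10 \right\}.
\]
After the deterministic initialization phase, $T_j^{(s)} \geq m$ for all $j$ and all $s \geq Km$, so a conditional Hoeffding bound yields $P(|\hat{\mu}_j^{(s)} - \mu_j| > \mu_j/10 \mid T_j^{(s)}) \leq 2\exp(-m\mu_j^2/50) \leq 2\exp(-K\gamma\log^2(T)/50)$, using $\mu_j\geq\gamma$. A union bound over the at most $KT$ pairs $(j,s)$ then gives $P(\mathcal{G}^c) \leq 2KT\exp(-K\gamma\log^2(T)/50) \leq T^{-1}$ for $T$ sufficiently large, contributing the first term in the lemma.

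Next, conditional on $\mathcal{G}$, the closed form $\hat{p}_k^{(s)} \propto \pi_k\hat{\mu}_k^{(s)}$ together with the multiplicative bounds on each $\hat{\mu}_j^{(s)}$ yields $\hat{q}_k^{(s)} = \hat{p}_k^{(s)} \geq (9/11) p_k$ for every $s \geq Km$. Since $I^{(s)}\mid\mathcal{F}_{s-1} \sim \mathrm{Cat}(\hat{\bm{q}}^{(s)})$, I would couple $\{\mathbf{1}[I^{(s)}=k]\}_{s=Km+1}^{t}$ from below with i.i.d.\ $\mathrm{Bernoulli}(9p_k/11)$ variables that stochastically dominate on $\mathcal{G}$. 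A multiplicative Chernoff bound on the dominating sum, together with the deterministic contribution $T_k^{(Km)}=m$, then gives the bound $P(\{T_k^{(t)} \leq p_k t/2\} \cap \mathcal{G}) \leq \exp(-p_k t/64)$, whence the union with the $\mathcal{G}^c$ bound concludes the proof.

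The main obstacle is tuning the tolerance $\mu_j/10$ used to define $\mathcal{G}$ so that the constants come out right: the gap between the dominating Bernoulli rate $9p_k/11$ and the target threshold $p_k/2$ must be wide enough that the Chernoff exponent dominates $p_k t/64$, while $\mathcal{G}$ must be loose enough that $P(\mathcal{G}^c)$ stays below $T^{-1}$. A secondary technicality is that $\mathcal{G}$ is not adapted to the filtration, so the coupling is carried out step-by-step using the $\mathcal{F}_{s-1}$-measurable truncations $\mathcal{G}_s = \bigcap_j \bigcap_{r<s} \{|\hat{\mu}_j^{(r)} - \mu_j| \leq \mu_j/10\}$, which coincide with $\mathcal{G}$ on the event of interest.
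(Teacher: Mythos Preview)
Your proposal is correct and follows the same overall architecture as the paper's proof: define a good event on which the sampling probability for arm $k$ stays bounded below by a constant multiple of $p_k$, control the bad event by $T^{-1}$ via Hoeffding together with the $m=K\gamma^{-1}\log^2(T)$ initialization, and on the good event dominate the arm-pull indicators by i.i.d.\ Bernoulli variables to apply a multiplicative Chernoff bound.

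The one substantive difference is the choice of good event. The paper sets $\mathcal{E}=\bigcap_s\{\hat p_k^{(s)}\geq p_k/2\}$ and bounds $P(\hat p_k^{(s)}\leq p_k/2)$ in one stroke by applying Hoeffding to the combined statistic $Z_{ks}=2(\sum_j\mu_j)X_{ks}-\mu_k\sum_j X_{js}$, so only arm $k$ appears in the good event and no union over $j$ is needed. You instead control every $\hat\mu_j^{(s)}$ multiplicatively and then read off $\hat p_k^{(s)}\geq (9/11)p_k$ from the ratio; this costs an extra factor $K$ in the union bound, which is harmless since both bounds are absorbed by the $\exp(-c\log^2 T)$ term. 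Your route is arguably cleaner, as it sidesteps the paper's $Z_{ks}$ device (which implicitly aligns the sample counts across arms), and your remark about handling adaptedness through the truncated events $\mathcal{G}_s$ makes explicit a point the paper leaves implicit. Both proofs share the same minor looseness: the Chernoff step really needs $t$ to be a bounded multiple larger than the initialization length, not merely $t>Km$, which is how the lemma is actually used in Theorem~\ref{prop:regretub}.
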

\begin{proof}
First, fix $k$ and fix $t\geq Km$. Let $T_k^{(t)}$ denote the (random) number of times arm $k$ was played up to time $t$. Due to initialization, we have $T_k^{(t)}\geq m$ with probability 1. We first bound the tail probability as
    \begin{align*}
    P\left(\hat{p}_{k}^{(t)} \leq \frac{p_k}{4}\,\mid\,T_k^{(t)}=t_k\right) &= P\left( \frac{\hat{\mu}_k^{(t)}}{\sum_{j=1}^K \hat{\mu}_j^{(t)}} \leq \frac{\mu_k}{2\sum_{j=1}^K \mu_j}\right) \\ 
    &= P\left(2\hat{\mu}_k^{(t)}\sum_{j=1}^K \mu_j \leq \mu_k \sum_{j=1}^K \hat{\mu}_j^{(t)}\right) \\
    &= P\left(\frac{1}{t_k}\sum_{s=1}^{t_k} \left(2(\sum_{j=1}^K \mu_j)X_{ks} - \mu_k \sum_{j=1}^K X_{js}\right)\leq 0 \right).
    \end{align*}
\noindent Defining
    \[
    Z_{ks}:=2(\sum_{j=1}^K \mu_j)X_{ks} - \mu_k \sum_{j=1}^K X_{js},
    \]
    we observe $Z_{ks}$ is a bounded random variable with $-K\mu_k \leq Z_{ks} \leq 2\sum_{j=1}^K \mu_j$, and $\mathbb{E}[Z_{ks}]=\mu_k\sum_{j=1}^K \mu_j$. Thus, a Chernoff-Hoeffding bound yields 
    \begin{align*}
    P\left(\frac{1}{T_k^{(t)}}\sum_{s=1}^{T_k^{(t)}} Z_{ks} - \mu_k \sum_{j=1}^K \mu_j \leq -\mu_k\sum_{j=1}^K \mu_j\right) &\leq \exp\left(-\frac{2m(\mu_k\sum_{j=1}^K \mu_j )^2}{(\sum_{j=1}^K 2\mu_j + \mu_k)^2}\right) \\
    &\leq \exp\left(-\frac{m}{2}\mu_k^2\right) \\
    &\leq \exp\left(-\frac{K\gamma^{-1}\log^2(T)}{2}\mu_k^2\right) \\
    &\leq T^{-1}
    \end{align*}
    for sufficiently large $T$ (in particular, for $T\geq \exp(4\gamma/(K\mu_k^2))$). 
    Thus 
    \[
    P\left(\hat{p}_k^{(t)} \leq \frac{p_k}{2}\,\mid\,T_k^{(t)} =t_k \right)\leq T^{-2}\,.
    \]
    Since this holds for all $t_k\geq m$, and $T_k^{(t)}\geq m$ for all $t>Km$ with probability 1, we may also conclude
    \[
    P\left(\hat{p}_k^{(t)} \leq \frac{p_k}{2}\right)\leq T^{-2}\,.
    \]
    for all $t> Km$. 
    
    \noindent Let $\mathcal{E}=\bigcap_{s=m+1}^t\left\{\hat{p}_k^{(s)}\geq p_k/2\right\}$, and note that $P(\mathcal{E}^c)\leq T^{-1}$. Next, we bound
    \begin{align*}
    P\left(T_k^{(t)} \leq \frac{p_k\,t}{4}\,\mid\,\mathcal{E}\right) &\leq P\left(T_k^{(t)} \leq \frac{p_k\,t}{4}\right) \\
    &\leq P\left(T_k^{(t)} -m\leq \frac{p_k\,t}{4}\right) \\
    &\leq P\left(\sum_{s=m+1}^t W_s \leq \frac{p_k\,t}{4} \right) \\ 
    &\leq \exp\left(-\frac{p_k(t-m)}{4}\left(\frac{t-2m}{2(t-m)}\right)^2\right) \\ 
    &\leq \exp\left(-\frac{p_k\,t}{64}\right) \,.
    \end{align*}
    where $W_1,\ldots,W_t \overset{\mathrm{iid}}{\sim} \mathrm{Bernoulli}(p_k/2)$. 
    Finally, we combine these results to yield
    \begin{align*}
        P\left(T_k^{(t)}\leq \frac{p_kt}{4}\right) &= P\left(T_k^{(t)}\leq \frac{p_kt}{4}\,\mid\,\mathcal{E}\right)P(\mathcal{E}) + P\left(T_k^{(t)}\leq \frac{p_kt}{4}\,\mid\,\mathcal{E}^c\right)P(\mathcal{E}^c) \\
        &\leq P\left(T_k^{(t)}\leq \frac{p_kt}{4}\,\mid\,\mathcal{E}\right) + P(\mathcal{E}^c) \\
        &\leq T^{-1}+\exp(-p_kt/64)\,.
    \end{align*}
    
\end{proof}

\subsection{Special Case of Utility Function} \label{sec:special_case}
Here, we study the utility of a proposal distribution ${\bm {q}}$ by its sampling efficiency \cite{alsing2018optimal}
\[
\omega_{\bm{p}}({\bm {q}}):=\frac{\sum_{k=1}^K q_kp_k/\pi_k}{\sum_{k=1}^K p_k\pi_k/q_k}\,,
\]
where terms in the denominator are defined to be $0$ whenever $p_k=0$. Because of this, it is clear that $\bm{q}^*(\bm{p})$ will set $q_k=0$ for each zero $p_k$, and it suffices to consider only the support of $\bm{p}$ when $\bm{p}$ has zero entries.

We show that Assumption \ref{ass:2} holds for this particular choice of $\omega_{\bm{p}}$ under a uniform prior $\bm{\pi}=(1/K,\ldots,1/K)'$, with a modified family discrete probability measures $\mathcal{I}_\gamma'$ that imposes the additional restriction that each nonzero $p_k$ is bounded below by a universal constant $c>0$. 
\begin{lemma}
    For any posterior ${\bm {p}}\in\mathcal{S}^K$ there exist constants $C_{\bm{p}}>0$, $\delta>0$ such that for any ${\bm {q}}\in\mathcal{S}^K\cap B_\delta({\bm {q}}^*({\bm {p}}))$,
    \[
    \omega_{\bm{p}}({\bm {q}}^*({\bm {p}})) - \omega_{\bm{p}}({\bm {q}}) \leq C_{\bm{p}} \|{\bm {q}}^*({\bm {p}})-{\bm {q}}\|_2^2
    \]
\label{lemma:assumption1_1}
\end{lemma}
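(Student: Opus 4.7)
The plan is to obtain this local quadratic bound by a second-order Taylor expansion of $\omega_{\bm{p}}$ around the maximizer $\bm{q}^*(\bm{p})$. First I would reduce to the case where $\bm{p}$ has full support in $\{1,\dots,K\}$: if $p_k=0$ for some $k$, then both $q_k p_k/\pi_k$ and $p_k \pi_k/q_k$ vanish regardless of $q_k$, so such indices drop out of $\omega_{\bm{p}}$ and one may restrict to $\mathrm{supp}(\bm{p})$. Next I would show that $\bm{q}^*(\bm{p})$ lies in the relative interior of the simplex on $\mathrm{supp}(\bm{p})$: if $q_k^* \to 0$ for some $k$ with $p_k>0$, the denominator $\sum_k p_k \pi_k/q_k \to \infty$ forces $\omega_{\bm{p}}(\bm{q}) \to 0$, contradicting optimality since $\omega_{\bm{p}}$ is strictly positive at, e.g., the uniform $\bm{q}$. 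Hence all coordinates $q_k^*$ are bounded away from $0$, and on a neighborhood of $\bm{q}^*$ the rational function $\omega_{\bm{p}}$ is $C^\infty$.

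Then I would apply Taylor's theorem with integral (or Lagrange) remainder,
\[
\omega_{\bm{p}}(\bm{q}) \;=\; \omega_{\bm{p}}(\bm{q}^*) \;+\; \nabla \omega_{\bm{p}}(\bm{q}^*)^\top (\bm{q}-\bm{q}^*) \;+\; \tfrac{1}{2}(\bm{q}-\bm{q}^*)^\top \nabla^2 \omega_{\bm{p}}(\tilde{\bm{q}})(\bm{q}-\bm{q}^*),
\]
for some $\tilde{\bm{q}}$ on the segment joining $\bm{q}$ and $\bm{q}^*$. Because $\bm{q}^*$ is an interior maximizer on the affine hyperplane $\{\bm{q}:\bm{1}^\top\bm{q}=1\}$, the Lagrange condition gives $\nabla \omega_{\bm{p}}(\bm{q}^*) = \lambda \bm{1}$ for some $\lambda\in\mathbb{R}$. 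Since $\bm{q},\bm{q}^*\in\mathcal{S}^K$, we have $\bm{1}^\top(\bm{q}-\bm{q}^*)=0$, so the first-order term vanishes, leaving
\[
\omega_{\bm{p}}(\bm{q}^*) - \omega_{\bm{p}}(\bm{q}) \;=\; -\tfrac{1}{2}(\bm{q}-\bm{q}^*)^\top \nabla^2 \omega_{\bm{p}}(\tilde{\bm{q}})(\bm{q}-\bm{q}^*).
\]

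Finally, by continuity of $\nabla^2 \omega_{\bm{p}}$ on a compact ball of sufficiently small radius $\delta>0$ around $\bm{q}^*$ (chosen small enough to keep every coordinate of $\tilde{\bm{q}}$ bounded away from $0$), there is a finite constant $M_{\bm{p}} := \sup_{\tilde{\bm{q}}\in B_\delta(\bm{q}^*)}\|\nabla^2\omega_{\bm{p}}(\tilde{\bm{q}})\|_{\mathrm{op}}$. Combining this with the optimality inequality $\omega_{\bm{p}}(\bm{q}^*)-\omega_{\bm{p}}(\bm{q}) \geq 0$ and the Cauchy--Schwarz-style bound $|\bm{u}^\top H \bm{u}| \leq \|H\|_{\mathrm{op}}\|\bm{u}\|_2^2$ gives the desired conclusion with $C_{\bm{p}} = \tfrac{1}{2} M_{\bm{p}}$. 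The main subtlety is the boundary case where $\bm{p}$ lacks full support, so that $\bm{q}^*$ sits on $\partial \mathcal{S}^K$: there one applies the argument above to the restriction to $\mathrm{supp}(\bm{p})$, and absorbs the $O(\delta)$ re-normalization arising from mass placed by $\bm{q}$ on off-support coordinates into $C_{\bm{p}}$ using that $\omega_{\bm{p}}$ is insensitive to those coordinates to leading order.
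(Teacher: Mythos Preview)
Your argument is essentially the same as the paper's: a second-order Taylor expansion at the constrained maximizer $\bm{q}^*(\bm{p})$, the first-order term killed by the Lagrange/KKT condition $\nabla\omega_{\bm p}(\bm q^*)\parallel\mathbf{1}$ together with $\mathbf{1}^\top(\bm q-\bm q^*)=0$, and the quadratic remainder bounded via the Hessian. If anything, your version is slightly tidier, since you use the Lagrange form of the remainder and a uniform operator-norm bound on $\nabla^2\omega_{\bm p}$ over a compact neighborhood, whereas the paper writes the expansion with an $o(\|\cdot\|^2)$ term and then invokes the second-order necessary condition at $\bm q^*$.

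One caution on your last paragraph: the sketch for the boundary case (when $\bm p$ has zero entries and $\bm q$ places mass off $\mathrm{supp}(\bm p)$) does not actually yield a quadratic bound. Writing the on-support mass of $\bm q$ as $1-\epsilon$ gives $\omega_{\bm p}(\bm q)=(1-\epsilon)^2\,\omega_{\bm p}(\tilde{\bm q})$ for the renormalized on-support vector $\tilde{\bm q}$, so $\omega_{\bm p}(\bm q^*)-\omega_{\bm p}(\bm q)$ picks up a term of order $\epsilon\asymp\|\bm q-\bm q^*\|_2$, which is linear rather than quadratic; a two-arm example with $\bm p=(1,0)$ already exhibits this. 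The paper sidesteps this entirely by declaring, just before the lemma, that it suffices to work on $\mathrm{supp}(\bm p)$ (so $\bm q$ is restricted to that face as well). You should adopt the same restriction rather than try to absorb the off-support perturbation into $C_{\bm p}$.
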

\begin{proof}
    Note $\omega_{\bm{p}}$ is twice continuously differentiable in ${\bm {q}}$, and $\omega_{\bm{p}}$ is maximized at ${\bm {q}}^*({\bm {p}})$ subject to the constraint
    \[
    c(\mathbf{x}):=1-\sum_{k=1}^K x_k = 0\,.
    \]
    Expanding $\omega_{\bm{p}}$ around some point $\hat{{\bm {q}}}\in \mathcal{S}^K\cap B_{\delta}({\bm {q}}^*({\bm {p}}))$ yields
    \begin{align*}
    \omega_{\bm{p}}(\hat{{\bm {q}}}) = \omega_{\bm{p}}({\bm {q}}^*({\bm {p}}))\ &+ \nabla \omega_{\bm{p}}({\bm {q}}^*({\bm {p}}))^\top (\hat{{\bm {q}}}-{\bm {q}}^*({\bm {p}})) \\ &+ (\hat{{\bm {q}}}-{\bm {q}}^*({\bm {p}}))^\top\nabla^2 \omega_{\bm{p}}({\bm {q}}^*({\bm {p}}))\,(\hat{{\bm {q}}}-{\bm {q}}^*({\bm {p}})) \\ &+ o(\|\hat{{\bm {q}}}-{\bm {q}}^*({\bm {p}})\|^2)
    \end{align*}
    Second order necessary conditions for constrained optimization imply that $\omega_{\bm{p}}$
    satisfies $\nabla \omega_{\bm{p}}({\bm {q}}^*({\bm {p}}))^\top (\hat{{\bm {q}}}-{\bm {q}}^*({\bm {p}}))=0$ and $(\hat{{\bm {q}}}-{\bm {q}}^*({\bm {p}}))^\top\nabla^2 \omega_{\bm{p}}({\bm {q}}^*({\bm {p}}))\,(\hat{{\bm {q}}}-{\bm {q}}^*({\bm {p}}))\leq 0$ for any $\hat{{\bm {q}}}\in\mathcal{S}^K\cap B_{\delta}({\bm {p}}^*({\bm {p}}))$. This yields the desired result, as 
    \[
    \lvert \omega_{\bm{p}}({\bm {q}}^*({\bm {p}}))-\omega_{\bm{p}}(\hat{{\bm {q}}})\rvert \leq C_{\bm{p}} \|{\bm {q}}^*({\bm {p}})-\hat{{\bm {q}}}\|^2
    \]
    for $\hat{{\bm {q}}}\in \mathcal{S}^K \cap B_\delta({\bm {q}}^*({\bm {p}}))$. The $\bm{p}$ dependent constant may be replaced with a universal constant by taking the supremum, which is bounded above due to the assumption $p_k>c$.
\end{proof}

\begin{lemma}
    For any fixed $\boldsymbol{\pi}$, the function ${\bm {q}}^*({\bm {p}})$ is locally Lipschitz in ${\bm {p}}$. That is, there exist constants $L, \delta_2$ such that 
    \[
    \|{\bm {q}}^*({\bm {p}})-{\bm {q}}^*({\bm {p}}')\|_2 \leq L \|{\bm {p}}-{\bm {p}}'\|_2
    \]
    for all ${\bm {p}}'\in \mathcal{S}^K\cap B_{\delta_2}({\bm {p}})$.
    \label{lemma:assumption2_1}
\end{lemma}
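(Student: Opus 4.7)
The plan is to apply the Implicit Function Theorem to the KKT system characterizing $\bm{q}^*(\bm{p})$. First I would observe that the maximizer is interior with respect to the coordinates in the support $S_{\bm{p}}=\{k:p_k>0\}$: whenever $p_k>0$ and $q_k\to 0$, the denominator term $p_k\pi_k/q_k$ blows up and drives $\omega_{\bm{p}}(\bm{q})$ to zero, while for $k\notin S_{\bm{p}}$ one may set $q_k^*=0$ with no loss. Under the standing assumption $p_k\ge c$ on $S_{\bm{p}}$, the support map $\bm{p}'\mapsto S_{\bm{p}'}$ is locally constant at $\bm{p}$, so it suffices to study the reduced optimization on the relative interior of $\mathcal{S}^{|S_{\bm{p}}|}$, where every occurrence of $q_k$ and $p_k$ in $\omega_{\bm{p}}$ is uniformly bounded away from zero and $\omega_{\bm{p}}$ is accordingly $C^\infty$ jointly in $(\bm{p},\bm{q})$.

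Next I would form the Lagrangian $\mathcal{L}(\bm{q},\lambda;\bm{p})=\omega_{\bm{p}}(\bm{q})-\lambda\bigl(\sum_{k\in S_{\bm{p}}}q_k-1\bigr)$ and assemble the stationarity map
\[
F(\bm{q},\lambda;\bm{p})=\Bigl(\nabla_{\bm{q}}\omega_{\bm{p}}(\bm{q})-\lambda\mathbf{1},\ \textstyle\sum_k q_k-1\Bigr),
\]
which vanishes at $(\bm{q}^*(\bm{p}),\lambda^*(\bm{p}))$. To invoke the Implicit Function Theorem I must verify that the bordered Jacobian
\[
\frac{\partial F}{\partial(\bm{q},\lambda)}\bigg|_{(\bm{q}^*(\bm{p}),\lambda^*(\bm{p}))}=\begin{pmatrix}\nabla^2_{\bm{q}}\omega_{\bm{p}}(\bm{q}^*(\bm{p})) & -\mathbf{1}\\ -\mathbf{1}^\top & 0\end{pmatrix}
\]
is nonsingular. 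A standard linear-algebra fact for bordered matrices gives invertibility provided $\nabla^2_{\bm{q}}\omega_{\bm{p}}(\bm{q}^*(\bm{p}))$ is strictly negative definite on the tangent hyperplane $T=\{\bm{v}:\mathbf{1}^\top\bm{v}=0\}$. This definiteness is precisely what Lemma~\ref{lemma:assumption1_1} furnishes: Taylor-expanding $\omega_{\bm{p}}$ along any feasible direction $\bm{v}\in T$ and matching with the quadratic upper bound $\omega_{\bm{p}}(\bm{q}^*(\bm{p}))-\omega_{\bm{p}}(\bm{q}^*(\bm{p})+t\bm{v})\ge C_{\bm{p}}t^2\|\bm{v}\|_2^2$ forces $\bm{v}^\top \nabla^2_{\bm{q}}\omega_{\bm{p}}(\bm{q}^*(\bm{p}))\bm{v}\le -2C_{\bm{p}}\|\bm{v}\|_2^2$.

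The Implicit Function Theorem then yields a neighborhood $B_{\delta_2}(\bm{p})$ and a $C^1$ map $\bm{p}'\mapsto(\bm{q}^*(\bm{p}'),\lambda^*(\bm{p}'))$ that agrees with the constrained maximizer on that neighborhood. Local Lipschitz continuity follows immediately by setting
\[
L=\sup_{\bm{p}'\in B_{\delta_2}(\bm{p})}\bigl\|\nabla_{\bm{p}}\bm{q}^*(\bm{p}')\bigr\|_{\mathrm{op}}<\infty,
\]
which is finite because shrinking $\delta_2$ if necessary keeps the bordered Hessian uniformly invertible and hence the IFT derivative formula uniformly bounded over the compact ball. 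The main obstacle is the regularity check: guaranteeing a genuinely strict $C_{\bm{p}}>0$ in Lemma~\ref{lemma:assumption1_1}, which relies on the assumption $p_k\ge c$ on the support to rule out degenerate tangent directions. The secondary nuisance of zero coordinates of $\bm{p}$ is dispatched by the locally-constant-support observation above.
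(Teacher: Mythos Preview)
Your KKT/Implicit-Function-Theorem route is a reasonable alternative strategy, but the step where you invoke Lemma~\ref{lemma:assumption1_1} to obtain strict negative definiteness of the Hessian on the tangent space $T$ has the inequality backwards. Lemma~\ref{lemma:assumption1_1} asserts the \emph{upper} bound
\[
\omega_{\bm{p}}(\bm{q}^*(\bm{p})) - \omega_{\bm{p}}(\bm{q}) \leq C_{\bm{p}}\|\bm{q}^*(\bm{p}) - \bm{q}\|_2^2.
\]
Taylor-expanding along $\bm{v}\in T$ gives $\omega_{\bm{p}}(\bm{q}^*) - \omega_{\bm{p}}(\bm{q}^*+t\bm{v}) = -\tfrac{t^2}{2}\bm{v}^\top H\bm{v} + o(t^2)$, so the lemma yields $\bm{v}^\top H\bm{v} \geq -2C_{\bm{p}}\|\bm{v}\|_2^2$, not $\leq -2C_{\bm{p}}\|\bm{v}\|_2^2$ as you claim. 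Together with the second-order necessary condition $\bm{v}^\top H\bm{v}\leq 0$, this merely sandwiches $H$ between $-2C_{\bm{p}}I$ and $0$ on $T$; it does not preclude a degenerate direction with $\bm{v}^\top H\bm{v}=0$. Without strict negative definiteness on $T$, the bordered Jacobian need not be invertible and the IFT step is unjustified.

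The paper circumvents this obstacle by never appealing to nondegeneracy of the full KKT system. Instead it uses the explicit characterization of $\bm{q}^*$ from \citet{alsing2018optimal}, which writes $\bm{q}^*(\bm{p})$ as a smooth function of $\bm{p}$ and a single scalar $A=A(\bm{q}^*(\bm{p}))$ determined by an implicit equation $g(\bm{p},A)=0$; a direct calculation shows $\partial g/\partial A\neq 0$, so the one-dimensional IFT gives $A^*(\bm{p})\in C^2$, and $\bm{q}^*(\bm{p})=\bm{q}^\circ(A^*(\bm{p}),\bm{p})$ is $C^2$ by composition. Your approach can in principle be repaired by verifying strict negative definiteness of $\nabla^2_{\bm{q}}\omega_{\bm{p}}^2$ on $T$ directly for this specific utility (this is what Lemma~\ref{lemma:negativedefinite} does, but only for $\bm{p}$ near the uniform distribution); that, however, is a separate calculation and not a consequence of Lemma~\ref{lemma:assumption1_1}.
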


\begin{proof}
    First, we note that in \cite{alsing2018optimal}, it is shown that ${\bm {q}}^*({\bm {p}})$ can be expressed by the implicit equation
    \begin{equation}
    \bm{q}^*({\bm {p}})_k = {\sqrt{\frac{p_k\pi_k}{2A({\bm {q}}^*({\bm {p}}))-\frac{p_k}{\pi_k}}}}{\Bigg/}{\sum_{j=1}^K \sqrt{\frac{p_j\pi_j}{2A({\bm {q}}^*({\bm {p}}))-\frac{p_j}{\pi_j}}}}
    \label{eq:qstarimplicit}
    \end{equation}
    where $A:\mathcal{S}^K\to\mathbb{R}$ is given by
    \[
    A({\bm {q}}):=\sum_{k=1}^K \frac{q_k p_k}{\pi_k}\,.
    \]
    Thus, the functional form for the optimal proposal ${\bm {q}}^*({\bm {p}})$ is known explicitly except for the value of the scalar $A:=A({\bm {q}}^*({\bm {p}}))$, and given ${\bm {p}},\bm{\pi}$, it is uniquely determined by $A$. Rearranging, we can express $A$ with the implicit equation
    \begin{align*}
    A&=\sum_{k=1}^K\frac{q^*({\bm {p}})_kp_k}{\pi_k}= {\sum_{k=1}^K\frac{p_k}{\pi_k}\sqrt{\frac{p_k\pi_k}{2A-\frac{p_k}{\pi_k}}}}\Bigg/{\sum_{j=1}^K \sqrt{\frac{p_j\pi_j}{2A-\frac{p_j}{\pi_j}}}}\,,
    \end{align*}
    equivalently written as 
    \[
    g({\bm {p}},A):=\sum_{k=1}^K \left(A-\frac{p_k}{\pi_k}\right)\sqrt{\frac{p_k\pi_k}{2A-\frac{p_k}{\pi_k}}}=0\,.
    \]
    Note that $g\in\mathcal{C}^2(A)$, and observe that 
    \[
    \frac{\partial g}{\partial A}=\sum_{k=1}^K \left(\frac{A-\frac{p_k}{\pi_k}}{2A-\frac{p_k}{\pi_k}}\right)\sqrt{\frac{p_k\pi_k}{2A-\frac{p_k}{\pi_k}}}
    \]
    which is always positive. Thus, the implicit function theorem yields that for any fixed ${\bm {p}}\in\mathcal{S}^K$, there exists $\delta_1>0$ and a unique twice continuously differentiable function $A^*:\mathcal{S}^K\cap B_{\delta_1}({\bm {p}}^*)\to\mathbb{R}$ such that $A^*({\bm {p}})=A({\bm {q}}^*({\bm {p}}))$. 
    
    If we now denote ${\bm {q}}^{\circ}(A,{\bm {p}})$ to be a function defined by Equation \ref{eq:qstarimplicit} with $A$ replacing $A({\bm {q}}^*({\bm {p}}))$, we can note that ${\bm {q}}^{\circ}(A,{\bm {p}})$ is twice continuously differentiable in $A$ and $\bm{p}$, and that ${\bm {q}}^*({\bm {p}})={\bm {q}}^{\circ}(A^*({\bm {p}}), {\bm {p}})$. Therefore, the function ${\bm {q}}^*({\bm {p}})$ is twice continuously differentiable for ${\bm {p}}\in B_{\delta_1}({\bm {p}}^*)$. Thus, we have for any ${\bm {p}}'\in \mathcal{S}^K\cap B_{\delta_1}({\bm {p}})$,
    \[
    \|{\bm {q}}^*({\bm {p}})-{\bm {q}}^*({\bm {p}}')\|_2 \leq L_{\bm{p}} \|{\bm {p}}-{\bm {p}}'\|_2\,.
    \]
    The $\bm{p}$-dependent constant may be replaced with a universal constant by taking the supremum, once again as $p_k>c$.
\end{proof}

\begin{lemma}
    There exist constants $C, \delta_2, \delta_3$ such that for any ${\bm {p}}\in\mathcal{S}^K\cap B_{\delta_2}(\mathbf{u})$ and ${\bm {q}}\in\mathcal{S}^K\cap B_{\delta_2}({\bm {q}}^*({\bm {p}}))$, we have
    $\omega({\bm {q}}^*({\bm {p}}),{\bm {p}},\mathbf{u}) - \omega({\bm {q}},{\bm {p}},\mathbf{u}) \geq C \|{\bm {q}}^*({\bm {p}})-{\bm {q}}\|_2^2$.
    \label{lemma:negativedefinite}
\end{lemma}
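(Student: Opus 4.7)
The plan is to specialize the identity and Hessian calculations to the base point $\bm{p}=\mathbf{u}$, $\bm{q}=\bm{q}^*(\mathbf{u})$, establish strict concavity on the tangent space of the simplex, and then extend to a neighborhood by continuity. With the uniform prior $\bm\pi=\mathbf{u}$ and $\bm{p}=\mathbf{u}$, the utility collapses to
\[
\omega_{\mathbf{u}}(\bm{q}) \;=\; \frac{\sum_k q_k}{(1/K^2)\sum_k 1/q_k} \;=\; \frac{K^2}{\sum_k 1/q_k},
\]
since $\sum_k q_k=1$. By AM--HM, this is maximized at $\bm{q}=\mathbf{u}$, so $\bm{q}^*(\mathbf{u})=\mathbf{u}$. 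A direct computation at this point yields
\[
\nabla^2_{\bm{q}}\omega_{\mathbf{u}}(\mathbf{u}) \;=\; -2K\,I + 2\,\mathbf{1}\mathbf{1}^{\top},
\]
so that for any $\bm{v}$ in the tangent space $\mathcal{T}=\{\bm{v}\in\mathbb{R}^K:\mathbf{1}^{\top}\bm{v}=0\}$ of the simplex, $\bm{v}^{\top}\nabla^2_{\bm{q}}\omega_{\mathbf{u}}(\mathbf{u})\bm{v}=-2K\|\bm{v}\|_2^{2}$, i.e.\ the Hessian is uniformly negative definite on $\mathcal{T}$ with eigenvalue $-2K$.

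Next, I would extend by continuity. The map $(\bm{p},\bm{q})\mapsto \nabla^2_{\bm{q}}\omega_{\bm{p}}(\bm{q})$ is continuous on a neighborhood of $(\mathbf{u},\mathbf{u})$ (as $\omega_{\bm{p}}$ is $\mathcal{C}^2$ away from $q_k=0$), and Lemma \ref{lemma:assumption2_1} supplies continuity of $\bm{p}\mapsto \bm{q}^*(\bm{p})$ at $\bm{p}=\mathbf{u}$, so $\bm{q}^*(\bm{p})$ also stays near $\mathbf{u}$. Hence there exists $\delta_2>0$ such that whenever $\bm{p}\in B_{\delta_2}(\mathbf{u})\cap\mathcal{S}^K$ and $\bm{q}\in B_{\delta_2}(\bm{q}^*(\bm{p}))\cap\mathcal{S}^K$, one has $\bm{v}^{\top}\nabla^2_{\bm{q}}\omega_{\bm{p}}(\bm{q})\bm{v}\leq -K\|\bm{v}\|_2^{2}$ for every $\bm{v}\in\mathcal{T}$.

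Finally, I would Taylor-expand $\omega_{\bm{p}}$ about $\bm{q}^*(\bm{p})$. The first-order optimality condition on the simplex forces $\nabla_{\bm{q}}\omega_{\bm{p}}(\bm{q}^*(\bm{p}))\propto \mathbf{1}$, and since both $\bm{q}$ and $\bm{q}^*(\bm{p})$ sum to one, the displacement $\bm{q}-\bm{q}^*(\bm{p})$ lies in $\mathcal{T}$ and annihilates the gradient term. The integral-form Lagrange remainder then gives
\[
\omega_{\bm{p}}(\bm{q}^*(\bm{p})) - \omega_{\bm{p}}(\bm{q}) \;=\; -\tfrac{1}{2}(\bm{q}-\bm{q}^*(\bm{p}))^{\top}\nabla^2_{\bm{q}}\omega_{\bm{p}}(\tilde{\bm{q}})(\bm{q}-\bm{q}^*(\bm{p})) \;\geq\; \tfrac{K}{2}\|\bm{q}-\bm{q}^*(\bm{p})\|_2^{2},
\]
for some $\tilde{\bm{q}}$ on the segment between $\bm{q}^*(\bm{p})$ and $\bm{q}$, delivering $C=K/2$. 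The only delicate point, and the main obstacle, is ensuring that the restriction of $\nabla^2_{\bm{q}}\omega_{\bm{p}}(\bm{q})$ to $\mathcal{T}$ remains uniformly bounded away from zero across both $\bm{p}\in B_{\delta_2}(\mathbf{u})$ and $\bm{q}\in B_{\delta_2}(\bm{q}^*(\bm{p}))$ simultaneously; this is essentially a compactness argument on the unit sphere in $\mathcal{T}$, combined with continuity of the Hessian entries, but must be stated carefully so that a single $\delta_2$ works uniformly on the prescribed joint neighborhood.
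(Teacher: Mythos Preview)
Your proposal is correct and follows essentially the same strategy as the paper: compute the Hessian of the sampling-efficiency utility at the uniform point $\bm{p}=\bm{q}=\mathbf{u}$, verify that its restriction to the tangent space $\mathcal{T}=\{\mathbf{1}^\top\bm{v}=0\}$ is strictly negative definite (both you and the paper obtain $\bm{v}^\top(\cdot)\bm{v}=-2K\|\bm{v}\|_2^2$ on $\mathcal{T}$), and then extend to a neighborhood by continuity of the Hessian in $(\bm{p},\bm{q})$. Your write-up is in fact somewhat more explicit than the paper's---you justify why the first-order term vanishes via the KKT condition $\nabla\omega_{\bm{p}}(\bm{q}^*(\bm{p}))\propto\mathbf{1}$, invoke the integral remainder to obtain a concrete constant, and correctly flag the uniform-in-$(\bm p,\bm q)$ negative definiteness as the only nontrivial step; note that your displayed ambient Hessian $-2KI+2\mathbf{1}\mathbf{1}^\top$ differs from the paper's $4\mathbf{1}\mathbf{1}^\top-2KI$ only because you first imposed $\sum_k q_k=1$ in the numerator before differentiating, but the two agree on $\mathcal{T}$, which is all that matters.
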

\begin{proof}
    We will show that the function $g_{\bm {p}}({\bm {q}})$, defined by 
    \begin{align*}
    g_{\bm {p}}({\bm {q}}) &= \frac{\sum_{k=1}^K q_kp_k}{\sum_{k=1}^K{p_k/q_k}} \propto \omega_{\bm{q}}(\bm{p})
    \end{align*}
    has a negative definite Hessian for ${\bm {p}}$ sufficiently close to $(1/K,\ldots,1/K)'$. A short calculation yields that 
    \[
    \nabla^2_{\bm {q}} \, g_\mathbf{u}(\mathbf{u})=4\mathbf{1}_K\mathbf{1}_K^\top - 2K\,I_K
    \]
    and for any $\mathbf{z}\in\mathbb{R}^K\setminus \{\mathbf{0}\}$ such that $\sum_{k=1}^K z_k=0$, we have
    \begin{align*}
        \mathbf{z}^\top \left(\nabla_{\bm {q}}^2 g_\mathbf{u}(\mathbf{u})\right) \mathbf{z} &= 4\mathbf{z}^\top \mathbf{1}_K \mathbf{1}_K^\top \mathbf{z} - 2K\mathbf{z}^\top \mathbf{z} \\ 
        &= -2K\|\mathbf{z}\|_2^2 < 0\,.
    \end{align*}
    This implies that there exists a constant $C$ such that
    \[
    \omega_{\bm{u}}(\mathbf{u})-\omega_{\bm{u}}({\bm {q}}) \geq C \|\mathbf{u}-{\bm {q}}\|_2^2\,.
    \]
    To make this hold for ${\bm {p}}$ in a neighborhood around $\mathbf{u}$, we note that the function $g_{\bm {p}}({\bm {q}})$ is twice continuously differentiable with respect to ${\bm {p}}$ around $\mathbf{u}$, which implies that 
    \[
    \mathbf{z}^\top \left(\nabla^2_{\bm {q}} g_{\bm {p}}({\bm {q}}^*({\bm {p}}))\right)\mathbf{z} < 0
    \]
    for all ${\bm {p}}\in \mathcal{S}^K\cap B_{\delta_2}(\mathbf{u})$ for some $\delta_2>0$. Thus, there exists some $\delta_3$, such that for all ${\bm {p}}\in \mathcal{S}^K\cap B_{\delta_2}(\mathbf{u})$, for all ${\bm {q}} \in \mathcal{S}^K \cap B_{\delta_3}({\bm {q}}^*({\bm {p}}))$, we have
    \[
    \omega({\bm {q}}^*({\bm {p}}),{\bm {p}},\mathbf{u}) - \omega({\bm {q}},{\bm {p}},\mathbf{u}) \geq C \|{\bm {q}}^*({\bm {p}})-{\bm {q}}\|_2^2\,.
    \]
\end{proof}

\begin{lemma}
    The function ${\bm {q}}^*({\bm {p}})$ is locally invertible around $\mathbf{u}$, and thus the inverse is locally Lipschitz. 
    \label{lemma:inverselips}
\end{lemma}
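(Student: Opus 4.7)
The plan is to establish local invertibility via the inverse function theorem applied to $\bm{q}^*$, viewed as a map between tangent spaces of the simplex $\mathcal{S}^K$ at $\mathbf{u}=(1/K,\ldots,1/K)'$. The local Lipschitz property of the inverse will then follow immediately from the fact that $\bm{q}^*$ is $C^2$ (established already in Lemma \ref{lemma:assumption2_1}), since the inverse function theorem yields a $C^1$ inverse on a neighborhood.

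First, I would verify that $\bm{q}^*(\mathbf{u})=\mathbf{u}$. Under a uniform prior $\bm{\pi}=\mathbf{u}$ and $\bm{p}=\mathbf{u}$, the scalar $A$ defined implicitly by $g(\bm{p},A)=0$ evaluates to $A=1$, so the formula $q^*_k(\mathbf{u}) \propto \sqrt{p_k\pi_k/(2A-p_k/\pi_k)}$ yields $q^*_k(\mathbf{u})=1/K$. Then, using the smoothness established in Lemma \ref{lemma:assumption2_1}, the map $\bm{q}^*$ is $C^2$ in a neighborhood of $\mathbf{u}$, so its Jacobian $J:=D\bm{q}^*(\mathbf{u})$ is well defined.

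Next, I would exploit the symmetry of the problem. Since both the target-to-proposal relationship and the prior are permutation-invariant under any coordinate permutation $\sigma$ with permutation matrix $P_\sigma$, we have $\bm{q}^*(P_\sigma \bm{p})=P_\sigma \bm{q}^*(\bm{p})$. Differentiating at $\mathbf{u}$ (which is fixed by every $P_\sigma$) gives $P_\sigma J = J P_\sigma$ for all $\sigma$. A standard argument on commutants of the permutation representation forces
\begin{equation*}
J = a\,I_K + b\,\mathbf{1}_K \mathbf{1}_K^\top
\end{equation*}
for some scalars $a,b$. Restricted to the tangent space $T_{\mathbf{u}}\mathcal{S}^K=\{\mathbf{z}\in\mathbb{R}^K:\mathbf{1}_K^\top \mathbf{z}=0\}$, $J$ acts as multiplication by $a$. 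Hence local invertibility (on the simplex) reduces to proving $a\neq 0$.

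To pin down $a$, I would differentiate the implicit formula from Lemma \ref{lemma:assumption2_1}. Evaluating at $\bm{p}=\mathbf{u}$ where $f_k = 1/K$, $A=1$, and $2A-p_k/\pi_k=1$, a short chain-rule calculation gives $\partial f_k/\partial p_l\big|_{\mathbf{u}} = \delta_{kl} - K^{-1}\,\partial A/\partial p_l$, and applying the implicit function theorem to $g(\bm{p},A)=0$ yields an explicit value for $\partial A/\partial p_l$ at $\mathbf{u}$ (it equals $1$, by symmetry: $A(\bm{p})$ near $\mathbf{u}$ is stationary along $\sum p_k=1$). Plugging into $\partial q^*_k/\partial p_l = (\partial f_k/\partial p_l - q^*_k \sum_j \partial f_j/\partial p_l)/\sum_j f_j$ and simplifying, $J$ takes the form $\delta_{kl}-K^{-1}$ on the relevant coordinates, i.e. $a=1$ on $T_{\mathbf{u}}\mathcal{S}^K$, so $J$ is invertible there. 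The main obstacle is this explicit computation of $\partial A/\partial p_l$ through the implicit equation without sign or normalization errors; but symmetry makes it manageable, since the stationarity of $A$ on the simplex can be read off from the fact that $g(P_\sigma\bm{p},A)=g(\bm{p},A)$.

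Having $J$ nonsingular on the tangent space, the inverse function theorem (applied to $\bm{q}^*$ as a map between $(K-1)$-dimensional manifolds) guarantees an open neighborhood $U\ni\mathbf{u}$ in $\mathcal{S}^K$ on which $\bm{q}^*$ is a diffeomorphism onto $\bm{q}^*(U)$. Because the inverse $(\bm{q}^*)^{-1}$ is $C^1$ on $\bm{q}^*(U)$, it is Lipschitz on any compact subneighborhood of $\mathbf{u}$, completing the proof.
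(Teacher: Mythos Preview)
Your approach is correct and takes a genuinely different route from the paper's. The paper sets up an implicit system $g_i(\bm{p},\bm{q}^*)=(q_i^*)^2-\omega(\bm{q}^*)p_i\pi_i/(2A(\bm{q}^*)^2-(p_i/\pi_i)A(\bm{q}^*))=0$, computes both Jacobians $J_{\mathbf{g},\bm{q}^*}$ and $J_{\mathbf{g},\bm{p}}$ at $(\mathbf{u},\mathbf{u})$ explicitly (each of the form $\alpha I+\beta\mathbf{1}\mathbf{1}^\top$), verifies both are nonsingular in $\mathbb{R}^K$, and invokes the implicit and inverse function theorems. You instead work intrinsically on the simplex: you differentiate the explicit formula $q_k^*=f_k/\sum_j f_j$ and use permutation equivariance to force $D\bm{q}^*(\mathbf{u})=aI+b\mathbf{1}\mathbf{1}^\top$, so invertibility on $T_{\mathbf{u}}\mathcal{S}^K$ reduces to $a\neq 0$. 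This is cleaner about the simplex constraint and the symmetry reduction is elegant. One small remark: your claim $\partial A/\partial p_l=1$ happens to be correct, but your stated justification (``stationary along $\sum p_k=1$'') only shows this derivative is constant in $l$; fortunately that constant cancels when you pass to $\partial q_k^*/\partial p_l=\delta_{kl}-1/K$, so the final Jacobian is $I-K^{-1}\mathbf{1}\mathbf{1}^\top$ regardless, giving $a=1$. Your closing argument (the $C^1$ inverse is Lipschitz on any compact subneighborhood) is actually a bit more careful than the paper's, which loosely asserts the inverse inherits Lipschitz constant $1/L$ from $\bm{q}^*$.
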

\begin{proof}
    We consider the implicit function
    \[
    g_i({\bm {p}},{\bm {q}}^*):=(q^*_i)^2 - \frac{\omega({\bm {q}}^*)p_i\pi_i}{2A({\bm {q}}^*)^2 - \frac{p_i}{\pi_i}A({\bm {q}}^*)}
    \]
    observing that calculation yields
    \[
    J_{\mathbf{g},{\bm {q}}^*}(\mathbf{u}, \mathbf{u})=\frac{2}{K}I - \left(3+\frac{2}{K^2}\right)\mathbf{1}\mathbf{1}^\top,~{\rm and}~J_{\mathbf{g}, {\bm {p}}}(\mathbf{u}, \mathbf{u})=-2k\,I + \left(3-\frac{2}{K}\right)\mathbf{1}\mathbf{1}^\top.
    \]
     Note that both are invertible. Applying the implicit and inverse function theorems yields the existence of $\delta_4>0$ such that for any ${\bm {p}}\in \mathcal{S}^K\cap B_{\delta_4}(\mathbf{u})$, the (implicitly defined) function ${\bm {q}}^*({\bm {p}})$ is bijective. Furthermore, since we argued ${\bm {q}}^*(\cdot)$ is locally Lipschitz with constant $L$, then there exists $\delta_5$ such that for $\bm{p}\in B_{\delta_5}(\bm{u})$, the inverse function of $\bm{q}^*$ is Lipschitz with constant $1/L$.
\end{proof}

Lemma \ref{lemma:assumption1_1} and Lemma \ref{lemma:assumption2_1} assert that the first two parts of the assumption are satisfied. Combining Lemma \ref{lemma:inverselips} and Lemma \ref{lemma:negativedefinite} yields that the third part of Assumption \ref{ass:2} is satisfied.

\subsection{Proof of Theorem \ref{thm:map_rate}} \label{sec:MAP_proof}
%
\paragraph{Setting} Consider a problem instance defined by vector of ABC acceptance rates $\bm{\ell}$ and prior distribution $\bm{\pi}\in\mathcal{S}^K$. Thus, we have the ABC posterior defined via $p_k\propto \ell_k\pi_k$ for $k=1,\ldots,K$. Let $k^*=\argmax_k p_k$ and let $\hat{k}^{(T)}_\mathcal{A}$ be the estimate of the maximal $k$ after $T$ simulations using Algorithm $\mathcal{A}$. Denote Algorithm \ref{alg:map_abc} with the choice $b=1/2$ as $\mathcal{A}'$.

There exists an instance dependent constant $\Gamma_{\bm{\mu}}$ such that $P(\hat{k}^{(T)}_{\mathcal{A}'}\neq k^*)=\exp(-T(\Gamma_{\bm{\mu}} + o(1)))$, where $o(1)$ denotes a vanishing sequence as $T\to\infty$. 

\begin{proof}
The likelihood-free MAP estimation setting with a uniform prior on a finite parameter space can be characterized as an instance of the best-arm identification problem in multi-armed bandits. Each arm is a parameter in the parameter space, with reward distribution given by $\mathrm{Bernoulli}(\ell_k)$, where $\ell_k$ is the mean ABC acceptance rate when proposing parameters from parameter $k$. Since the ABC acceptance rate is proportional to the $\varepsilon$-likelihood (Lemma \ref{lem:crucial}), identifying the arm with maximal mean reward in the bandit setup is equivalent to identifying the parameter value with maximal $\varepsilon$-likelihood. 

Theorem 1 of \citet{russo2016simple} may then be applied to obtain the desired result.

\end{proof}


\section{Implementation Details}

\subsection{Empirical Regret Comparison}
\label{implementation:empregretcomp}

We consider a toy example on a discrete parameter space $\theta\in\{\theta_1,\dots,\theta_{100}\}'$, where  we treat each plausible  parameter value $\theta_k$ 
  as a bin $\Omega_k$. {The ABC likelihood is defined via $U\mid\theta_j\sim\mathcal{U}(j-1,j)$ and $X\mid U\sim \mathcal{N}(U, 5)$.} We consider a uniform prior on $\{\theta_1,\dots,\theta_{100}\}$. 
We apply standard rejection ABC as a baseline, proposing parameters from the prior. We sequentially update the proposal distribution using four schemes:  Algorithm \ref{alg:qe_abc} using each $\omega$ function in \eqref{eq:p_omega}, and Thompson Sampling (picking an arm with the largest sampled value from beta distributions). We report the acceptance rate, regret (using $\omega^3$) as well as the TV distance to the true posterior in Figure \ref{fig:regrets-finite}. Despite Thompson Sampling having the largest acceptance rate, it is unable to recreate the posterior as well as other approaches as the variance of the importance weights become large as it eventually starts to oversample the posterior mode. Rejection ABC has the lowest acceptance rate. Algorithm \ref{alg:qe_abc} represents a sweet spot between them. The three variants of Alg. \ref{alg:qe_abc} employ different choices of utility functions $\omega_{\bm p}$, that prioritize having a high acceptance rate at varying levels. Prioritizing a higher acceptance rate may increase performance in the shorter term, as it helps the algorithm to progress to a lower $\varepsilon$ level sooner, but may eventually result in importance weights with high variance as the nearly modal areas could be oversampled.
\subsection{Tree Regression (CART)}
\label{implementation:carthyperparameters}

To fit the trees from which the proposal distributions are made, we use a classification tree (in the style of CART) with a fixed maximum depth, number of leaves, and a minimum number of samples per leaf in order to prevent overfitting to the noisy data. For our experiments, we set the maximum number of leaves to be 1000 and the minimum number of samples per leaf to be 10.

\subsection{Details of Figure \ref{fig:trajectory-map}}\label{sec:alg3_MAP_details}

   We {apply} Algorithm \ref{alg:map_abc} with TS and Top-two TS (TTTS) on a toy example with a finite parameter space $\Theta=\{\theta_j\}_{j=1}^{50}$ equipped with a uniform prior.   In Figure \ref{fig:trajectory-map}, the upper left shows the $\varepsilon$-posterior values (normalized mean ABC acceptance rate for each parameter value). In this example, the difficulty arises from the fact that several other parameter values exhibit posterior mass nearly as high as that of the mode $\theta_{28}$. In the upper right plot {in Figure \ref{fig:trajectory-map}}, we estimate the posterior mode by the arm with maximal estimated posterior mass, and compare our TS procedures with rejection ABC (and maximization), and the upper confidence bound (UCB) algorithm \citep{auer2002finite} (treating the ABC acceptance as a binary reward), displaying the empirical probability of correctly identifying the mode over $10^4$ iterations. We observe that Algorithm \ref{alg:map_abc} can identify the true $\varepsilon$-posterior mode more frequently than other methods, as it obtains more information about the nearly-modal parameter values. The trajectory of the algorithm's choices in each round for one given sample path is displayed on the bottom {in Figure \ref{fig:trajectory-map}}. As the algorithm progresses, it proposes the parameter value with maximal $\varepsilon$-posterior density $(\theta_{28})$ with an increasing rate.

\subsection{Details of Figure \ref{fig:map-progression-dyadic-3split-20k-gauss2d}}\label{sec:MAP_fig_details}
In Figure \ref{fig:map-progression-dyadic-3split-20k-gauss2d}, we apply Algorithm \ref{alg:seqtreemap} using a dyadic partitioning on the bivariate Gaussian mixture toy example from \textbf{Example \ref{ex:2}}, with a total budget of $20000$ simulator evaluations. As we can see, the higher resolution areas in the partition concentrate near the posterior mode, and the algorithm can closely estimate the true posterior mode under a small budget. An analogous example where the partitioning is done via classification trees can be found in Figure \ref{fig:extra-cart} in Section \ref{sec:add_figures}, where we observe similar behavior. We compare performance on estimating the posterior mode in terms of squared error, taking the mean over 100 iterations. We compare against rejection ABC and SMC-ABC with kernel mode estimation, as well as the hierarchical online optimization algorithm (HOO) of \citet{bubeck2011x}. HOO is an algorithm designed for minimizing cumulative regret in bandit setting with a continuous arm space. The algorithm is a variant of the UCB algorithm that employs a dyadic partitioning using a binary tree on a measurable space. We set $\rho=\nu_1=1$, and use acceptance at a fixed $\varepsilon=6.0$ as a binary reward. We observe that MAP-Tree can estimate the posterior mode with more accuracy than its counterparts.} 

\subsection{Details of the MNIST Experiment in Section \ref{sec:experiemnt}}\label{sec:MNIST_details}

For training an autoencoder on MNIST, a common choice for $d_{\bm\theta}$ is between 5 to 10. For us, to obtain a reference distribution through the standard rejection ABC with a reasonable number of simulations $T=10^8$, we choose the smallest dimension, that is, we consider $d_{\bm\theta}=5$ for the posterior sampling.

We train an autoencoder by using the Wasserstein autoencoder \cite{tolstikhin2017wasserstein} with noise $\sigma_0 = 0.05$. Note that an autoencoder involves two neural networks, a decoder $G$ and an encoder $Q$. We can apply $Q$ to each MNIST digit image $X\in [0,1]^{28 \times 28}$ so that $Q(X)\in \Theta$. Often, this space $\Theta$ (or collectively $\{Q(X)|X\text{ is in the training set}\}$) is called a latent space in the generative model literature. When training the autoencoder, we also add a label (digit) information during the optimization process similar to \cite{makhzani2015adversarial} so that the latent space forms a cluster for each digit number in $\Theta$ as in Figure \ref{fig:gen-map-resultster-label} (a). 

For visualization, we train iGLoMAP's dimensional reduction mapper that reduces the 5-dimensional parameter vectors ($\theta\in \Theta$) to 2-dimensional vectors. The mapper is a standard feedforward deep neural network. To save the training time, we train the network on 8,000 points of $Q(X)$, and then we map our generated $\tilde{\bm\theta}$ and $Q(X)$ as well for all 60,000 training images $(X)$ of MNIST.

\section{Additional Experiment Results and Details}\label{sec:additional_exp}

In this section, we present additional results from our method on various other likelihood-free inference models. We include a comparison between optimizing various optimality criteria, including $\omega^1$ and $\omega^2$ as well as a third utility function $\omega^3$. We define
\begin{equation}\label{eq:omega3}
    \omega^3_{\bm{p},\bm{\ell}^\varepsilon}(\bm{q}) = \log\left(\sum_{k=1}^K q_k \ell^\varepsilon_k\right) - D_{\mathrm{KL}}(\bm{q}\|\bm{p}),
\end{equation}
which depends not only on the posterior $\bm{p}$, but on the acceptance rates $\ell_k^\varepsilon$ in each bin at tolerance $\varepsilon$. This criterion seeks to regularize maximizing the log acceptance rate of $\bm{q}$ by ensuring that $\bm{q}$ cannot stray too far from $\bm{p}$. It performs very well initially due to a large acceptance rate, but later may begin to oversample the nearly modal areas once a sufficiently low tolerance has been reached. When using $\omega^3$ as the utility criterion in Algorithm \ref{alg:seqtreeabc} and Algorithm \ref{alg:qe_abc}, we compute $\ell_k^\varepsilon$ by $\eta_k^{(t)}$, our current model for the acceptance rate from region $k$.

\subsection{Heston Stochastic Volatility Model Simulation}

In quantitative finance, modeling the dynamics of asset prices is fundamental tool for risk management, trading, and investment strategies. The Heston model is a mathematical framework used to describe the evolution of financial market prices, particularly focusing on the stochastic volatility of an asset \citep{heston1993closed}. Introduced by Steven Heston in 1993, it allows for more realistic modeling of the volatility surface in options pricing compared to models that assume constant volatility, like the Black-Scholes model \citep{black1973pricing}. Bayesian analysis has been considered for posterior inference in stochastic volatility models, but has been traditionally challenging \citep{shephard2020statistical, fruhwirth2003bayesian, martin2019auxiliary}. Typical approaches to Bayesian inference in the Heston model involve an exact approach using MCMC \citep{fruhwirth2003bayesian} and related particle filtering and sequential importance sampling methods \citep{hore2010bayesian}.
\begin{figure}
    \centering
  \includegraphics[width=0.65\textwidth]{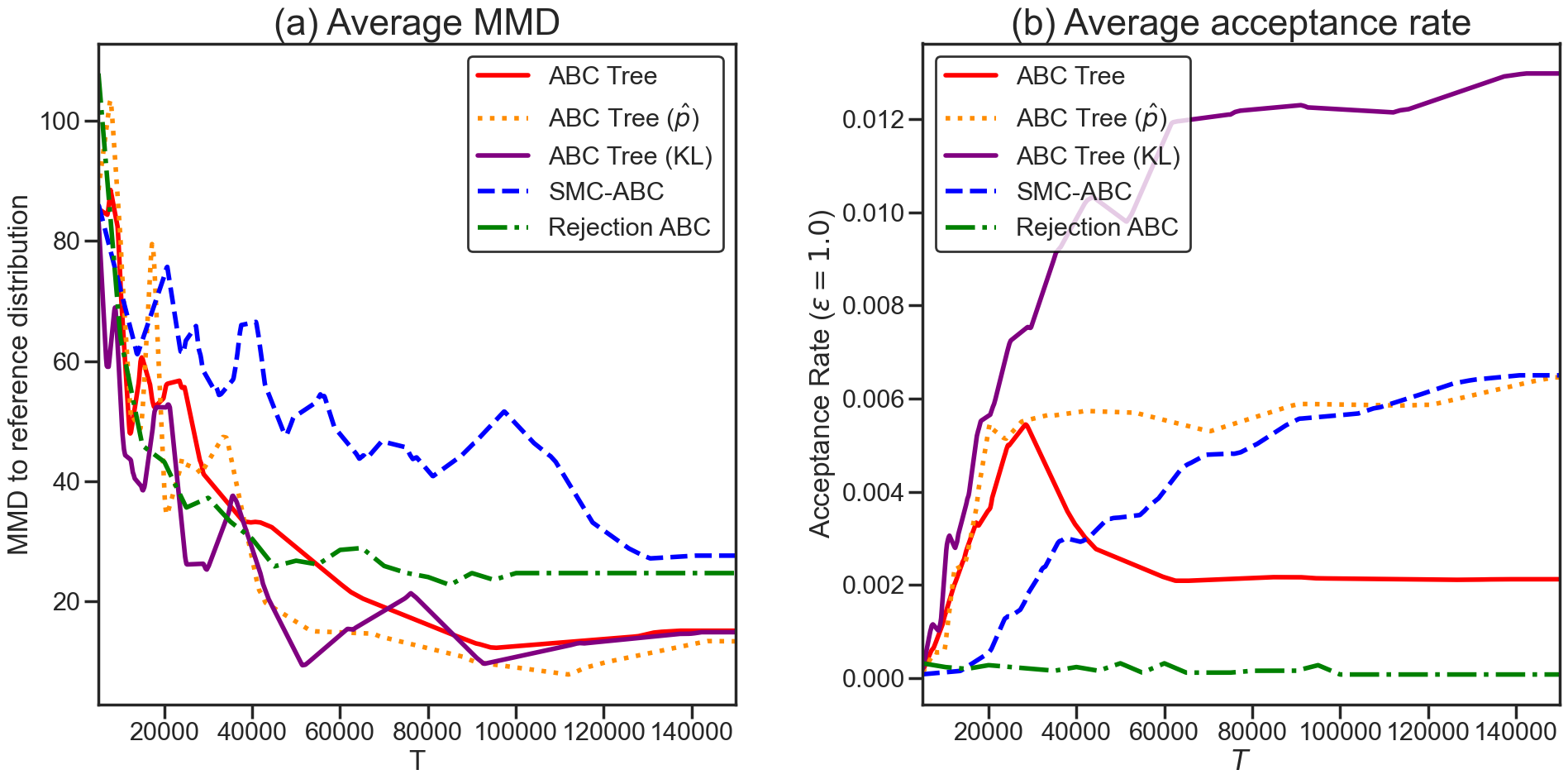}
  \includegraphics[width=0.32\linewidth]{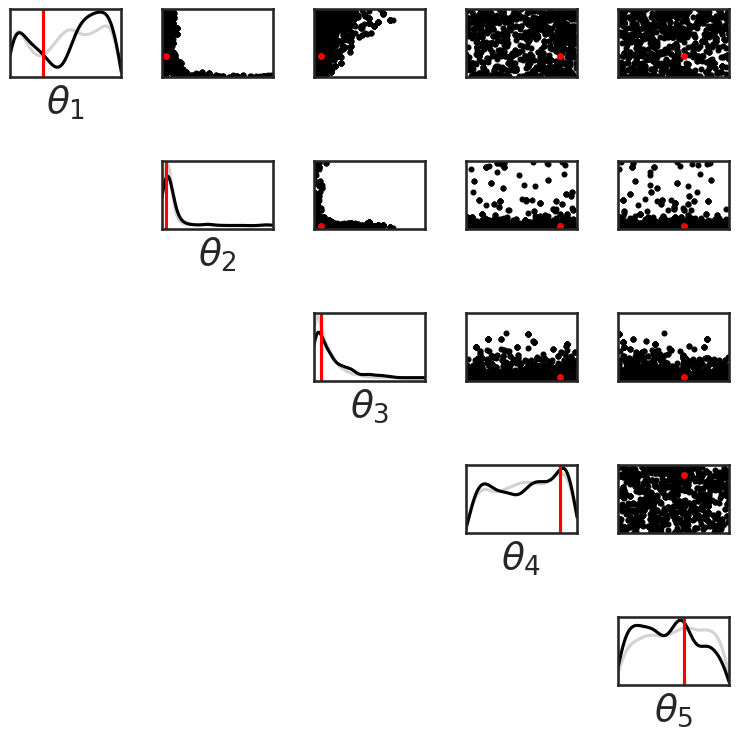}
    \caption{\textbf{Heston model results}. Left: Total variation distance and maximum mean discrepancies between posterior samples and a ``true'' rejection ABC posterior generated from $10^8$ simulations. Right: Posterior distribution obtained by Tree ABC on the Heston model after $10^5$ simulations}
    \label{fig:heston-results}
\end{figure}
The Heston model describes the dynamics of an asset's current price $S_t$ as
\begin{equation}
   \mathrm{d}S_t = \mu S_t\,\mathrm{d}t+\sqrt{\nu_t}S_t\,\mathrm{d}W^1_t ,
   \label{eq:price-dynamics}
\end{equation}
where $\mu$ denotes the risk-free interest rate, $\sqrt{\nu_t}$ denotes the volatility of the asset's price, and $W^1_t$ is a Wiener process governing stochastic fluctuations in price. The volatility $\sqrt{\nu_t}$ is assumed to follow an Ornstein-Uhlenbeck process $\mathrm{d}\sqrt{\nu_t}=-\xi\sqrt{\nu_t}\,\mathrm{d}t+\delta \,\mathrm{d}W^2_t$ where $W^2_t$ is another Wiener process. This may alternatively be written as the square-root process on the variance
\begin{equation}
   \mathrm{d}\nu_t = \kappa(\xi-\nu_t)\,\mathrm{d}t+\sigma\sqrt{\nu_t}\,\mathrm{d}W^2_t,
   \label{eq:vol-dynamics}
\end{equation}
where $W^1$ and $W^2$ have correlation $\rho$. 


We can simulate asset prices according to the Heston model by discretizing the dynamics \eqref{eq:price-dynamics} and \eqref{eq:vol-dynamics} in the above system of stochastic differential equations. The model is parameterized by $\bm{\theta}:=(\kappa,\xi,\nu_0,\rho,\sigma)\in\mathbb{R}^5$. We choose the following summary statistics of the simulated asset price time series: mean of log returns, mean of squared log returns, log standard deviation of log returns, log standard deviation of squared log returns, autocorrelation of squared log returns at lag 1, autocorrelation of squared log returns at lag 2, correlation of log returns and squared log returns. 
These summary statistics were shown to be effective for ABC in stochastic volatility models by \citet{martin2019auxiliary}.
We apply our ABC-Tree method to sample from the posterior of the Heston model. We consider a risk-free interest rate of $\mu=0.02$ and initial asset price $S_0=100$. As our observed data, draw a sample of 2000 discrete time steps each from a discretization of \eqref{eq:price-dynamics} and \eqref{eq:vol-dynamics}. The prior for $\bm{\theta}$ is given by $\pi(\bm\theta)=\mathcal{U}(0,10)\otimes\mathcal{U}(0,1)\otimes\mathcal{U}(0,1)\otimes\mathcal{U}(-1,1)\otimes\mathcal{U}(0,1).$  

The initial results are in Figure \ref{fig:heston-results}. We find that the sequential learning performed in ABC-Tree can allow for a more efficient proposal distribution to be learned faster. This leads to improved performance relative to typical ABC methods when under a budget of simulator evaluations. Our method trades off approximation accuracy of the posterior in exchange for the ability to sequentially learn from previous simulations. 
When the number of simulations becomes very large, any methods that progressively learn a proposal distribution would have also amassed enough data to have a good approximation of the posterior. In such cases, we posit that differences in performance relate more to the difference in approximation accuracy between the method for modeling the likelihood (or posterior), and in the optimality criterion by which the proposal distribution is calculated. For example, on smooth, unimodal densities, the KDE proposal of SMC-ABC may better approximate the optimal proposal, while the tree-based proposal used by ABC-Tree may better approximate the optimal proposal for problem instances with varying levels of local smoothness or posteriors resembling a step function. 
We also find empirically that Tree ABC appears to outperform SMC-ABC especially well when the parameter space is higher dimensional.

\subsection{Lotka-Volterra Model}

The Lotka-Volterra predator-prey model is commonly used in ecology to model the dynamics of the populations of two species interacting as predatory and prey. Letting $x$ denote the population of the prey and $y$ the population of predators, the growth rates of the two species are modeled by the ordinary differential equations $\frac{\mathrm{d}x}{\mathrm{d}t} = \theta_1 x - \theta_2 xy$ and $
\frac{\mathrm{d}y}{\mathrm{d}t} = \theta_3 xy - \theta_4 y$, which are parameterized by the four-dimensional vector $\boldsymbol\theta:=(\theta_1,\theta_2,\theta_3,\theta_4)^\top \in [0,1]^4$. Within the model, prey are assumed to reproduce at an exponential rate in absence of predators, and their rate of decrease is proportional to the product of their population with that of the predators. On the other hand, the population of predators  increases proportionally to this product, and decays exponentially. We simulate from the model as a Markov jump process, where the populations are recorded at discrete time steps with exponentially-distributed random wait times.

\begin{figure}[H]
    \centering
    \includegraphics[width=\linewidth]{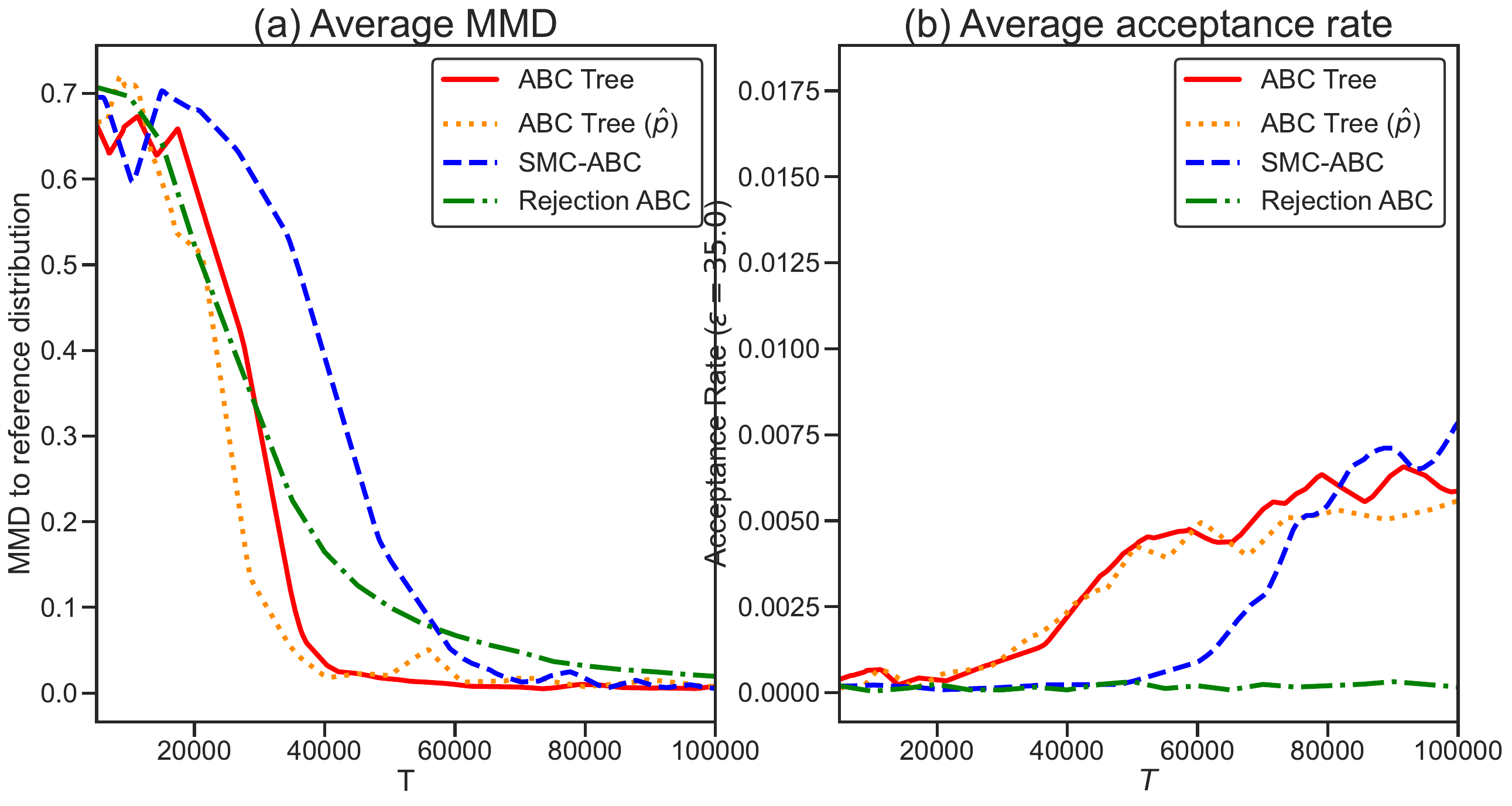}
    \caption{\textbf{ABC-Tree performance.} We evaluate maximum mean discrepancy to a `true' ABC posterior, obtained with a much larger budget. We also plot the negative log probability of true parameters. In all cases, higher is lower. For each method, we use a budget of $10^5$ samples, and multiply the tolerance by a factor of 0.9 after accepting 1000 parameter values. We perform 10 repetitions of each method. We plot performances of each run as partially transparent, and an interpolated and smoothed average in a bold opaque color.} 
    \label{fig:lotka-abc-results}
\end{figure}

\subsection{Additional MNIST Experiment (Image Unmasking)}\label{sec:mnist_additional}

\begin{figure}
    \centering
    \includegraphics[width=0.95\linewidth]{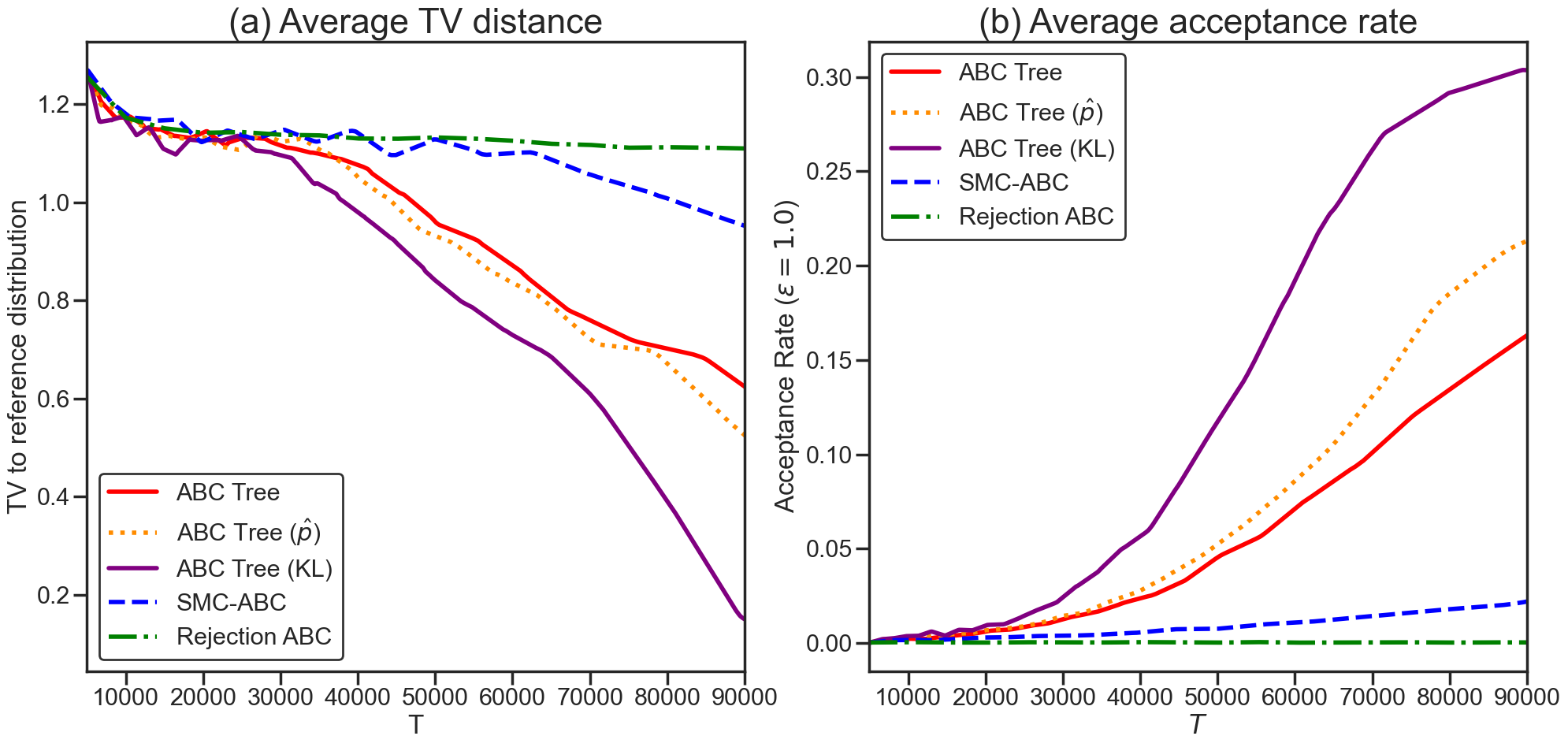}
    \caption{Comparison between the three $\omega_{\bm{p}}$ functions under the settings of Section \ref{sec:post_sampling}. Left: the total variation distance to the reference distribution. Right: the acceptance rate.}
    \label{fig:regul_important} 
\end{figure}

\begin{figure}
    \centering
    \includegraphics[width=0.9\linewidth]{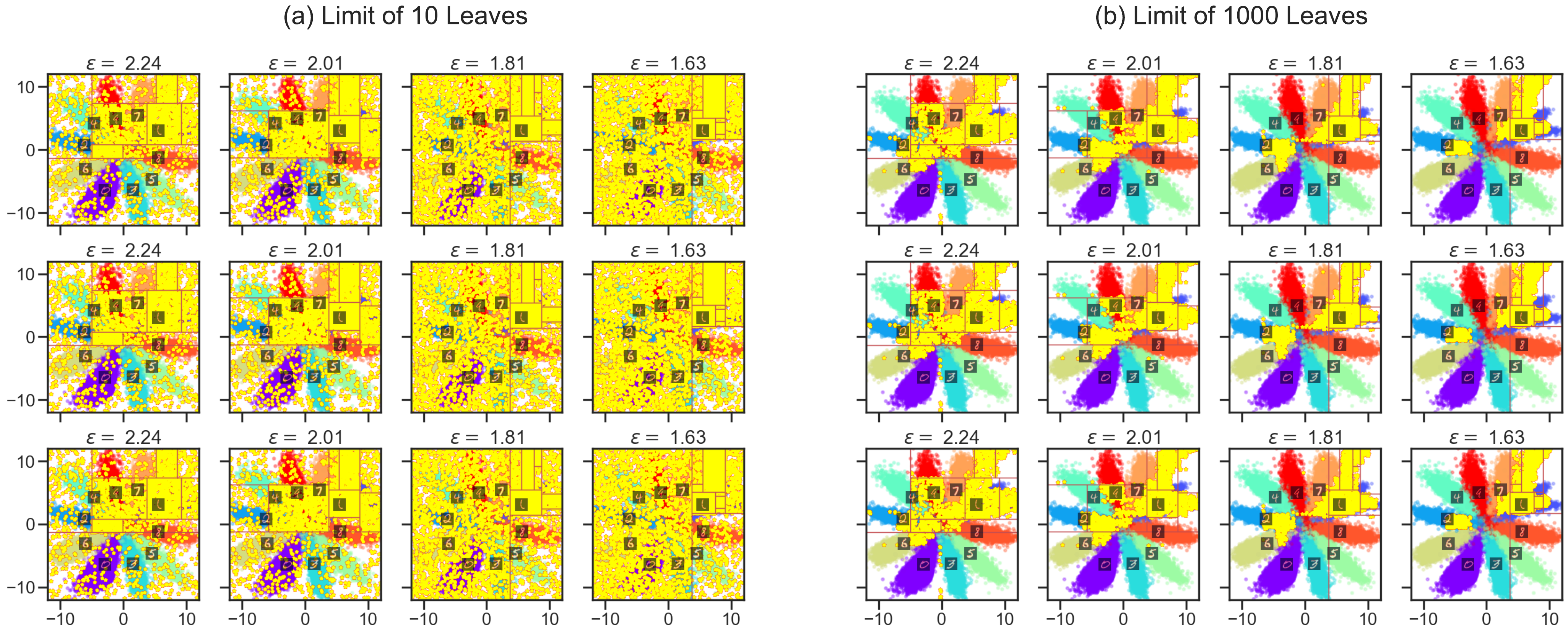}
    \caption{Comparison of ABC-Tree between when n\_leaves is 1000 and 10. (a) n\_leaves = 10. (b) n\_leaves = 1000. The digit is 1 as in Figure \ref{fig:masked} (b).}
    \label{fig:tree_size}
\end{figure}

We also include an additional demonstration for the MNIST image unmasking experiment, in which we include a comparison of various $\omega_{\bm{p}}$ functions. In Figure \ref{fig:regul_important}, which compares the convergence speed to the estimated true distribution across different optimality criteria. In this experiment, biasing the choice of criterion to favor a larger acceptance rate allows the algorithm to more quickly approximate the posterior at a small $\varepsilon$. In addition, to visualize the effect of different tree size limits, we consider $d_{\bm\theta}=2$ and impose a maximum of 10 and 1000 leaves, in Figure \ref{fig:tree_size}. We can see that the progression is similar, with the samples concentrated near the posterior modes for both cases. However, with a maximum of 10 leaves, the distinction between less important and moderately important areas is less clear, yielding some parameters spread across low posterior regions. On the contrary, the tree approximation with a maximum of $1000$ leaves is much sharper, with proposed parameters concentrated near multiple local modes. 

\subsection{Additional Table and Figures}\label{sec:add_figures}

Table \ref{tab:complexity} shows a comparison of the computational cost for various active learning ABC strategies, including SMC-ABC,
where $T$ is the total number of iterations across the batches, $d$ is the dimensionality of the parameter space, $K$ is the maximum number of hyper-rectangles in a partition, and $B$ is the number of acceptances before moving to the subsequent tolerance level. The cost when determining the partition depends on the method used, and the entry in the table reflects that of CART. ABC-Tree using either CART or dyadic partitioning has a lower cost than SMC-ABC  when $K<dB$.
\begin{table}[!h]
    \centering
    \begin{tabular}{|c|c|c|c|}
    \hline
         Strategy & Cost per tolerance level & Cost per simulation & Total cost\\
    \hline
         ABC-Tree (CART) & $O(Bd\log(B))$ & $O(K)$ & $O(Td\log(B)+TK)$ \\
    \hline
         ABC-Tree (Dyadic) & $O(K)$ & $O(K)$ & $O(TK)$ \\
    \hline
         SMC-ABC & $O(B^2)$ & $O(dB)$ & $O(TdB)$ \\
    \hline
         Rejection ABC & $0$ & $O(1)$ & $O(T)$ \\
    \hline
    \end{tabular}
    \caption{Comparison of computational cost of various sequential ABC strategies }
    \label{tab:complexity}
\end{table}

\begin{figure}[!h]
    \centering
    \includegraphics[width=\textwidth]{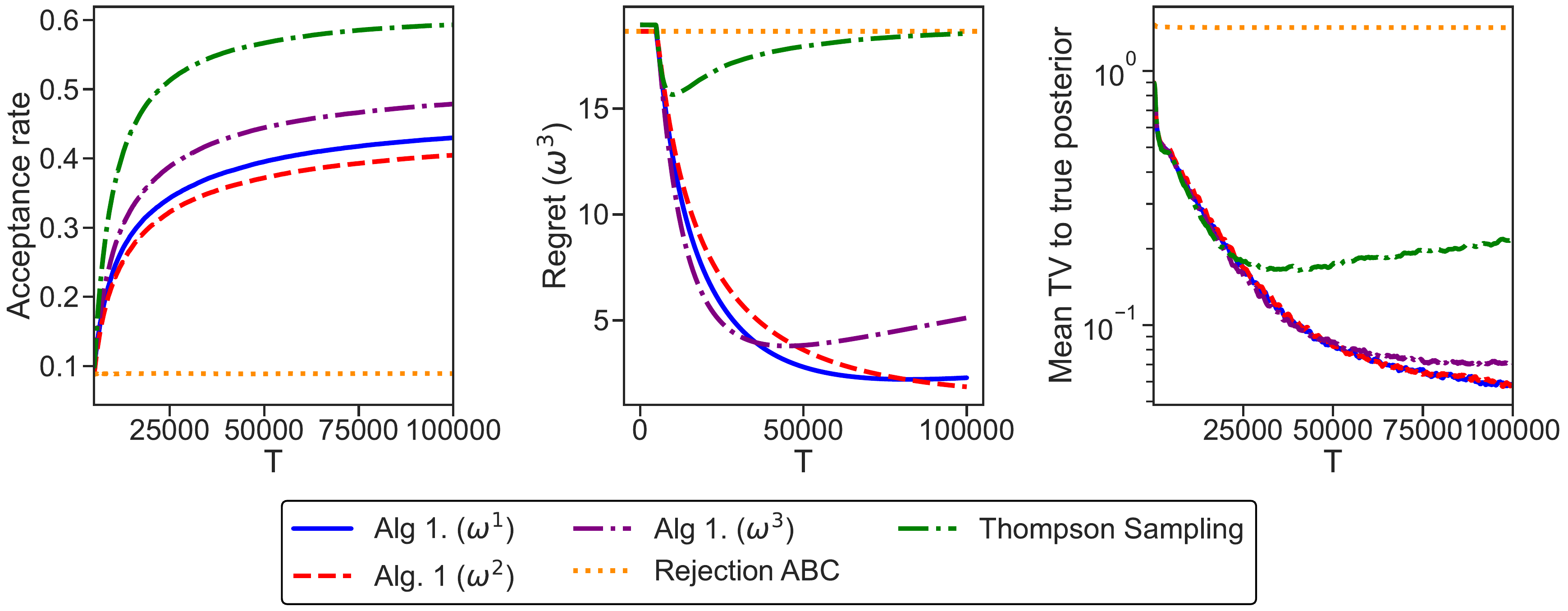}
    \caption{Comparing different adaptive proposal criteria for ABC in a finite parameter space, using the same setting as Example 1 (Figure \ref{fig:regrets-finite}), but with a third utility function $\omega^3$ defined in \eqref{eq:omega3}. ABC-Tree with $\omega^3$ performs superior initially due to a large acceptance rate, but later begins to oversample the nearly modal areas.}
    \label{fig:comp-with-omega3}
\end{figure}

\begin{figure}[!h]
    \centering
    \includegraphics[width=\textwidth]{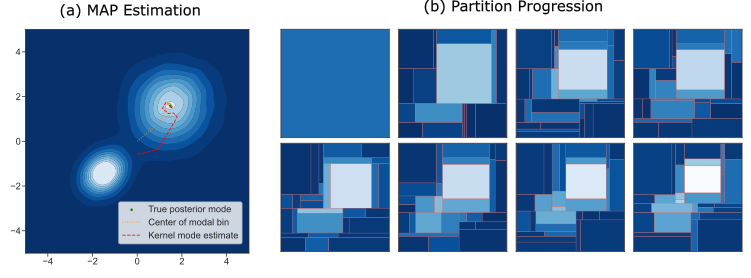}
    \caption{Another version of Figure \ref{fig:map-progression-dyadic-3split-20k-gauss2d} where CART is used for partitioning}
    \label{fig:extra-cart}
\end{figure}

\begin{figure}[!h]
    \centering
    \includegraphics[width=\linewidth]{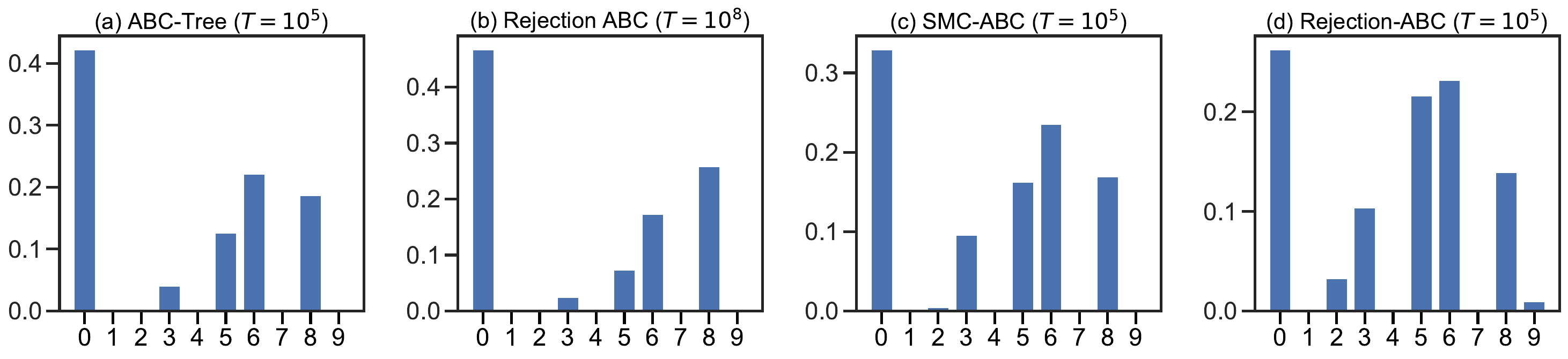}
    \caption{The estimated categorical distribution on digits. The estimation by Tree-ABC with $T=10^5$ highly resembles that of rejection ABC with $T=10^8$.}
    \label{fig:wanted}
\end{figure}

\begin{figure}[!h]
    \centering
    \includegraphics[width=\linewidth, height=8cm]{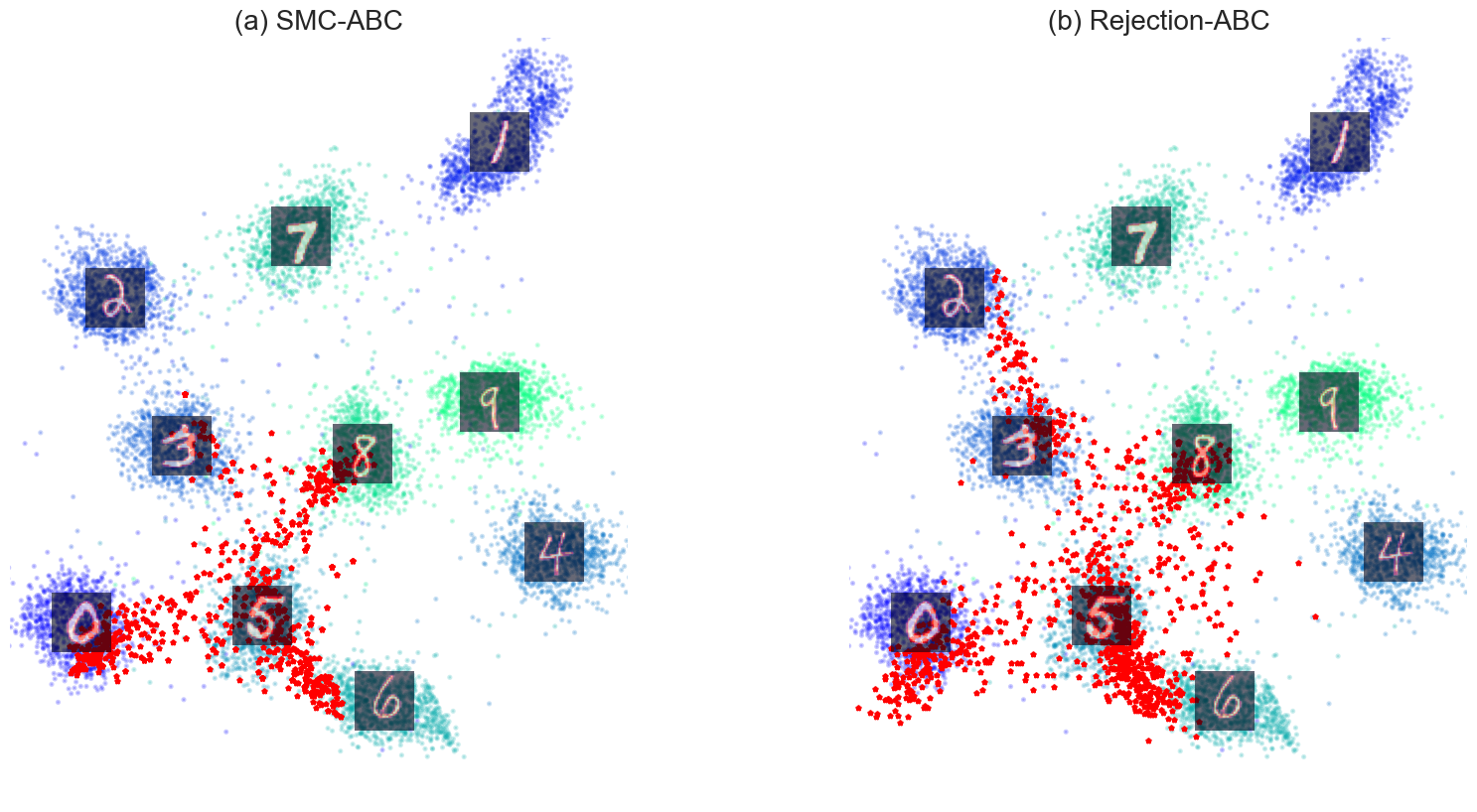}
    \caption{The iGLoMAP dimensional reduction of the baselines. (a) Rejection ABC. (b) SMC-ABC. SMC-ABC shows similar results.}
    \label{fig:others_based}
\end{figure}

\begin{figure}
    \centering
    \includegraphics[width=0.93\linewidth]{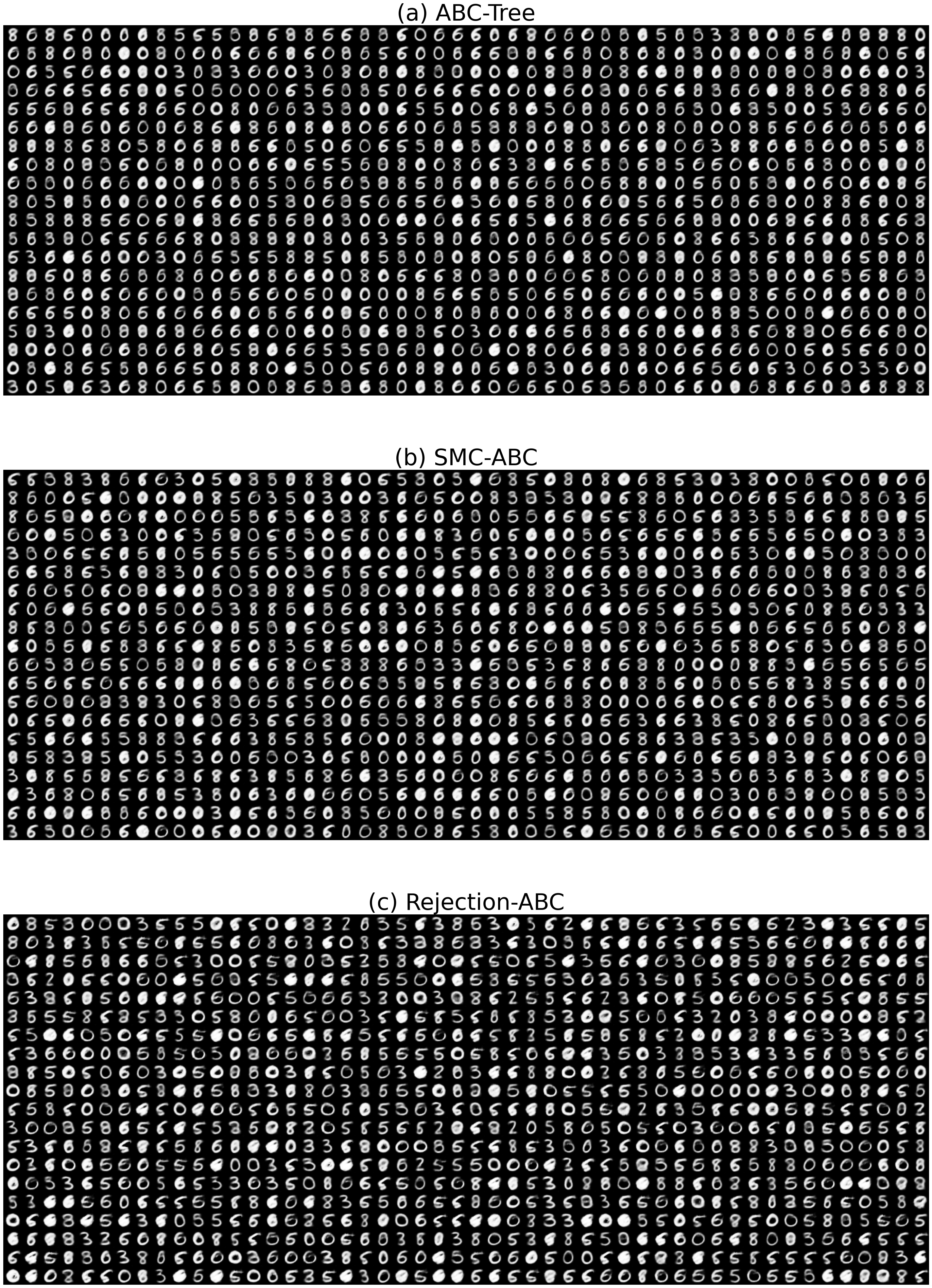}
    \caption{The deep generative model simulation. The 1000 samples from each method.}
    \label{fig:genmod1}
\end{figure}

\begin{figure}
\begin{center}
     \includegraphics[width=0.8\textwidth]{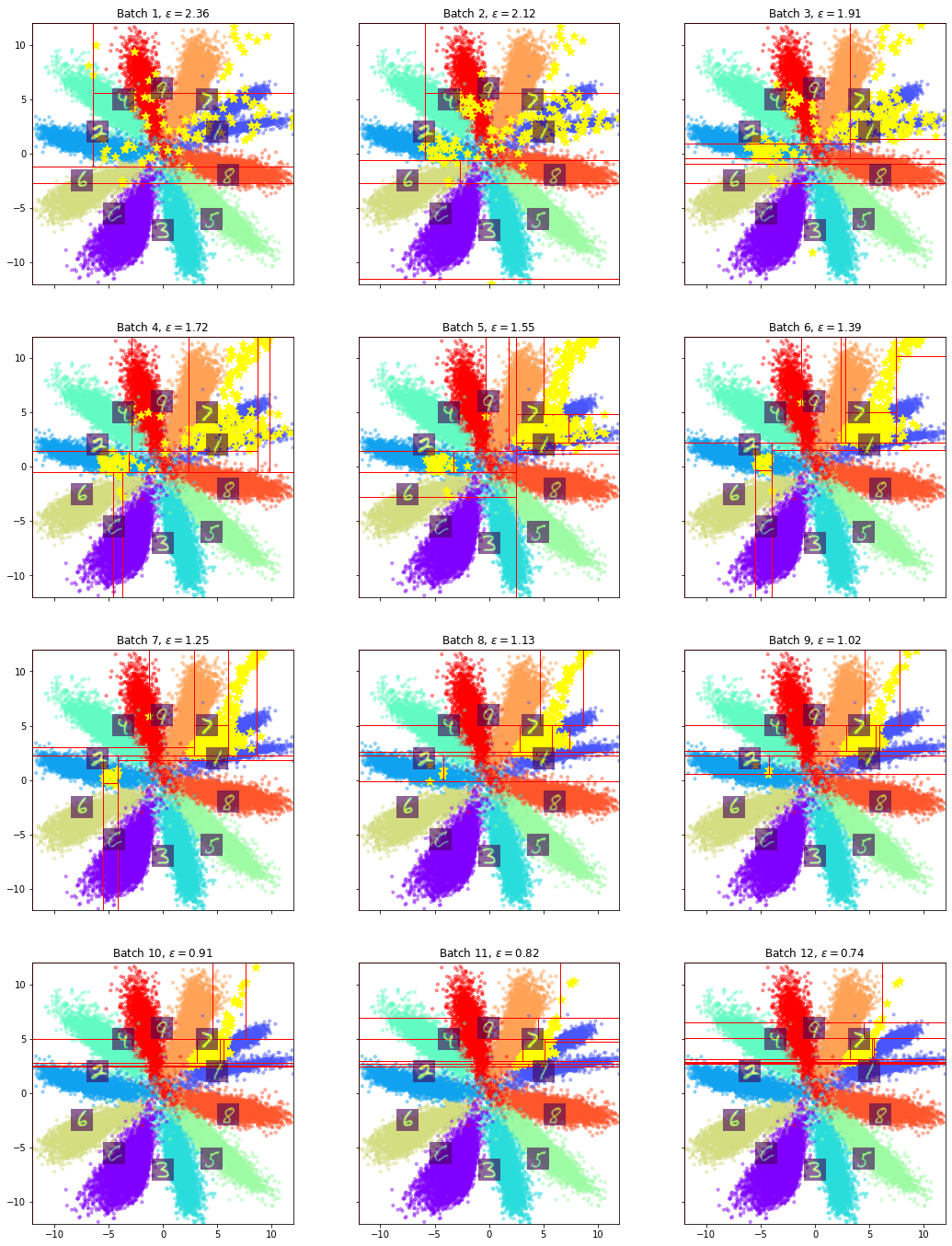}
\end{center}
    \caption{The progression for the MAP experiment on generative model corresponding to Figure \ref{fig:gen-map-resultster-label}.}
    \label{fig:mapgen1map}
\end{figure}


\clearpage

\section{Algorithms}\label{sec:ts}

\begin{algorithm}
\caption{Inner Loop: Thompson Sampling for MAP-Tree}\label{alg:map_abc}
\spacingset{1.2}
\begin{algorithmic}[1]
\Statex \textbf{Input:} Prior $\bm\pi(\cdot)$, observed data $\bm{X}$, tolerance $\varepsilon$, partition $\bm\Omega=\{\Omega_k\}_{k=1}^K$, budget $T_\mathrm{max}$,
\Statex {\color{white}\textbf{Input:}}   acceptance quota $B$ and    $\bm{\alpha}=(\alpha_1,\dots, \alpha_K)'$ and
$\bm{\beta}=(\beta_1,\dots,\beta_{K})'$
\State \textbf{Initialize} $t=0$ and $\alpha_k^{(0)}=\alpha_k$ and $\beta_k^{(0)}=\beta_k$
\While{$N \leq B$ and $t \leq T_\mathrm{max}$}
\State $t\gets t+1$
\State{$~\bm{\Pi}^{(t)}: = \bigotimes_{i=1}^K  \mathrm{Beta}\left(\alpha_i^{(t-1)}, \beta_i^{(t-1)}\right)$}
\State\label{tmp1}{(Binned Likelihood)} Sample $\bm\eta^{(t)} \sim \bm{\Pi}^{(t)}$
\State \label{ln:explo}(Binned Posterior) $\hat{{ {p}}}_k^{(t)}\propto \eta^{(t)}_k\pi(\Omega_k)/|\Omega_k)|$ for $k=1,\ldots,K$
\State\label{tmp2}{(Exploitation)} Choose arm ${ I}=I^{(t)}  \gets \argmax_i \hat{p}_i^{(t)}$
\State { \textbf{If} Top-Two:} 
\State { $~~~~~~~~$ Generate $Z\sim {Bernoulli}(b)$}
\State { $~~~~~~~~$ \textbf{If} $Z$ is not 1:}
\State \label{tmp3}$~~~~~~~~$ $~~~~~~~~$ Resample $\bm{\eta}\sim\bm{\Pi}^{(t)}$ and set $\hat{{ {p}}}_k\propto \eta_k\pi(\Omega_k)/|\pi(\Omega_k)$ until $\argmax_i\hat{p}_i \neq I$ 
\State $~~~~~~~~$ $~~~~~~~~$ Set $I=I^{(t)}\gets \argmax_i\hat{p}_i$
\State {(Simulate data)} $\wt {\bm X}_{\bm\theta^{(t)}}\sim L(\cdot |\bm\theta^{(t)})$ where $\bm\theta^{(t)}\sim \pi\left(\bm\theta|\Omega_{I^{(t)}}\right)$
\State {(ABC reward)} $Y^{(t)} \gets {\rm ABC}(\bm X,\wt{\bm X}_{\bm\theta^{(t)}},\varepsilon)$, where $d^{(t)} \gets d(s(\bm{X}),s(\wt{\bm{X}}_{\bm{\theta}^{(t)}}))$, 
\State \label{ln:info}{(Posterior update)} $(\alpha_{I }^{(t+1)}, \beta_{I }^{(t+1)})\gets (\alpha_{I }^{(t)}+Y^{(t)}, \beta_{I }^{(t)}+1-Y^{(t)})$
\State \textbf{if} {$Y^{(t)}=1$} \textbf{then} $N\gets N+1$ 
\EndWhile
\State $T\gets t$
\State \Return $\mathcal{D}(\varepsilon)=\{(\bm\theta^{(t)},\wt{\bm{X}}_{\bm{\theta}^{(t)}}, Y^{(t)}(\varepsilon)\}_{t=1}^T, T$ 
\end{algorithmic}
\end{algorithm}

\begin{algorithm}[!h]
\caption{MAP-Tree}\label{alg:seqtreemap}
\spacingset{1.2}
\begin{algorithmic}[1]
\Statex \textbf{Input:} Prior $\bm\pi(\cdot)$, observed data $\bm{X}$, initial tolerance $\varepsilon_1$,
\Statex {\color{white}\textbf{Input:}} Number of rounds $S_{max}$,  tolerance progression rule $g(\varepsilon)$ 
\State Collect $\Omega_{S_{max}},\mathcal{D}$ from Algorithm \ref{alg:seqtreeabc} with Algorithm \ref{alg:map_abc} as the inner loop
\State \textbf{MAP estimation}
\State $~~~~$ \textbf{If} KDE \textbf{then} $\hat{\bm\theta}^{\rm(MAP)}\gets$ $\argmax_{\bm\theta\in\Theta}{\rm KDE}(\{\bm{\theta}^{(t)}:Y^{(t)}(\varepsilon_{S_{max}})=1\}_{\bm{\theta}^{(t)\in\mathcal{D}}})$
\State $~~~~$ \textbf{Else if} Bin Center \textbf{then} set $\hat{\bm\theta}^{\rm(MAP)}$ to the center of the bin with largest estimated average posterior
\State \Return $\hat{\bm\theta}^{\rm(MAP)}$ 
\end{algorithmic}
\end{algorithm}

\begin{algorithm}[!h]
\caption{Thompson Sampling in multi-armed bandits with Bernoulli rewards}\label{alg:ts}
\spacingset{1.1}
\begin{algorithmic}[1]
\Require budget $T$, problem instance $\bm{\mu}$, $b\in(0,1)$ (if Top-Two)
\State Initialize $\alpha^{(0)}_{i}=\beta^{(0)}_i=1$ for $i=1,\ldots,K$.
\For{$t=1,\ldots,T$}
\State{$~\bm{\Pi}^{(t)}: = \bigotimes_{i=1}^K  \mathrm{Beta}\left(\alpha_i^{(t-1)}, \beta_i^{(t-1)}\right)$}
\State\label{tmp1}{ (exploration)} Sample $\bm\eta^{(t)} \sim \bm{\Pi}^{(t)}$
\State \label{tmp2}{ (exploitation)} Choose arm ${ I}=I^{(t)}  \gets \argmax_i \eta_i^{(t)}$
\State { \textbf{If} Top-Two:} 
\State { $~~~~~~~~$ Generate $Z\sim {Bernoulli}(b)$}
\State { $~~~~~~~~$ \textbf{If} $Z$ is not 1:}
\State \label{tmp3}$~~~~~~~~$ $~~~~~~~~$ Resample $\bm{\eta}\sim\bm{\Pi}^{(t)}$ until $\argmax_i\eta_i \neq I$ and set $I\gets \argmax_i\eta_i$
\State { (arm pulling)} { Observe reward $Y^{(t)}\sim \mathrm{Bernoulli}(\mu_{{ I} })$ }
\State { (information update)} Update $(\alpha_{{ I} }^{(t)}, \beta_{{ I} }^{(t)})\gets (\alpha_{{ I} }^{(t-1)}+Y^{(t)}, \beta_{{ I} }^{(t-1)}+1-Y^{(t)})$
\EndFor
\end{algorithmic}
\end{algorithm}

\end{document}